%% file: main.tex
\documentclass{article}
\usepackage{latexml}

\PassOptionsToPackage{numbers,sort&compress}{natbib}
\usepackage[final, main]{neurips_2026}

\usepackage[utf8]{inputenc}
\usepackage[T1]{fontenc}
\usepackage{hyperref}
\usepackage{url}
\usepackage{booktabs}
\usepackage{amsfonts}  
\usepackage{dsfont}
\usepackage{mathtools}
\usepackage{amsmath}
\usepackage{amsthm}
\usepackage{nicefrac}  
\usepackage{tabularx}
\usepackage{booktabs}
\usepackage{graphicx}
\usepackage{algorithm}
\usepackage{algpseudocode}
\usepackage{microtype}
\usepackage{xcolor}
\usepackage{array}
\usepackage{multicol}
\usepackage{multirow}
\usepackage{subcaption}
\usepackage{caption}
\usepackage{soul}
\usepackage{booktabs}
\usepackage{subcaption}
\usepackage{xltabular}
\usepackage{graphicx}
\usepackage{array}
\usepackage{ragged2e}
\newtheorem{theorem}{Theorem}
\newtheorem{lemma}{Lemma}
\newtheorem{corollary}{Corollary}

\newtheorem{assumption}{Assumption}

\newtheorem{definition}{Definition}

\usepackage{mdframed} 
\newcommand{\myNum}[1]{(\emph{#1})}
\newcommand{\smartparagraph}[1]{\vspace{1pt} \noindent {\bf #1}}

\iflatexml
\newcommand{\contribparagraph}[2]{\emph{(#1)} \textbf{#2}}
\else
\newcommand{\contribparagraph}[2]{%
  \vspace{1pt}\noindent \myNum{#1}\ {\bf #2}}
\fi
\newcommand{\inLineComment}[1]{}

\usepackage[most]{tcolorbox}

\usepackage[colorinlistoftodos,bordercolor=orange,backgroundcolor=orange!20,linecolor=orange,textsize=scriptsize]{todonotes}

\definecolor{grey}{gray}{0.5}

\definecolor{codeblue}{rgb}{0.21,0.49,0.74}
\definecolor{codegreen}{rgb}{0,0.6,0}
\newcommand{\greenup}{\textcolor{codegreen}{$\uparrow$}}

\newif\ifshowzfoedits
\showzfoeditstrue 

\expandafter\def\expandafter\normalsize\expandafter{%
    \normalsize%
    \setlength\abovedisplayskip{0pt}%
    \setlength\belowdisplayskip{4pt}%
    \setlength\abovedisplayshortskip{-1.2pt}%
    \setlength\belowdisplayshortskip{1pt}%
}
\usepackage{comment}

\title{

Trust the Direction, Search the Step: Zero-and-First-Order Methods for LLM Fine-Tuning
}

\author{%
  Cristian McGee \\
  University of Central Florida\\
  Orlando, FL, USA
  \And
  El Houcine Bergou \\
  Mohammed VI Polytechnic University\\
  Ben Guerir, Morocco
  \And
  Aritra Dutta \\
  University of Central Florida\\
  Orlando, FL, USA
}

\begin{document}


\maketitle

\begin{abstract}
Step-size selection remains a central challenge in large-scale neural network optimization; conservative steps slow convergence, while aggressive steps can destabilize it. We combine \textbf{Z}ero-and-\textbf{F}irst-\textbf{O}rder optimization~(ZFO) and propose a framework that decouples direction selection from step-size. ZFO uses a trusted first-order optimizer to determine the direction and performs zeroth-order evaluations only along this one-dimensional subspace to choose how far to move. Using the current {gradient information} and two additional objective function evaluations, ZFO instances construct a local model of the objective function along the proposed direction and select a curvature-aware step within a bounded search interval. This yields an adaptive step-selection mechanism that costs less than a full line search.  We provide theoretical guarantees to show that shared-sample evaluations produce reliable finite-difference curvature estimates, that the induced local model selects a near-optimal step along the search interval, and that ZFO converges to a neighborhood of a stationary point. Across the evaluated settings, language models and datasets, ZFO frequently improves optimization and final performance relative to fixed-step first-order baselines, with the magnitude and preferred local model depending on the objective. Our code is publicly available at: \url{https://github.com/nizswan/Zeroth-First-Order-Framework}.
\end{abstract}

\input{Sections/01-introduction}

\input{Sections/03-method}

\input{Sections/04-theory-new}

\input{Sections/05-experiments}

\input{Sections/06-conclusion}

\smartparagraph{Acknowledgment.} During this work, Aritra Dutta was partially supported by the Florida Department of Health Grant AWD00007072 and the National Science Foundation Grant 2321986. 

\bibliographystyle{plainnat}
\bibliography{references}
 \clearpage

\input{appendix}

\end{document}

%% file: Sections/01-introduction.tex
\section{Introduction}\label{sec:introduction}

Gradient-based first-order (FO) optimization methods are the main workhorse of modern deep neural network training \cite{dutta2023demystifying, adamw, muon_og}. During training, gradient information provides the \emph{direction to move along}, which, coupled with a step size (or learning rate), updates the model parameters. 
While the gradient direction is generally informative, the step size used along that direction is typically chosen through schedules, heuristics, or hyperparameter tuning~\cite{sgdr, goyal, smith2017cyclical, smith2018disciplined}. 
For larger networks, whose local geometry can change rapidly during adaptation, a poorly chosen step size may lead to unstable training~\cite{gradstab}. Despite these step-size instabilities, FO methods remain the standard choice for neural network adaptation because the FO gradient provides strong directional guidance. 

In contrast, zeroth-order (ZO) methods~\cite{mezo, cars, nesterov2017random, stochasticfzo, STP, IS-STP, SMTP, MiSTP} do not compute gradients. They construct approximate gradient updates using only objective function evaluations, often via finite differences along random directions. Although ZO methods reduce computational overhead, gradient approximations are often noisy and high-variance in large-dimensional settings~\cite{balasubramanian, query}. However, when FO provides the update direction, the high-dimensional problem becomes a univariate step-selection problem; see Figure~\ref{fig:zfo}. Consequently, computationally inexpensive ZO evaluations become substantially informative for modeling local function behavior. This raises a natural question: 

\iflatexml
\begin{quote}
\emph{If FO methods are strong at choosing an update direction, can ZO evaluations be used to decide how far to move along that direction?}
\end{quote}
\else
\begin{tcolorbox}[
    colback=grey!5,
    colframe=grey!75!black,
    boxrule=0.8pt,
    arc=2mm,
    left=2pt,
    right=2pt,
    top=1pt,
    bottom=1pt
]
\emph{If FO methods are strong at choosing an update direction, can ZO evaluations be used to decide how far to move along that direction?}
\end{tcolorbox}
\fi

\begin{figure}
    \centering
    \scalebox{0.98}{
    \includegraphics[width=0.99\linewidth]{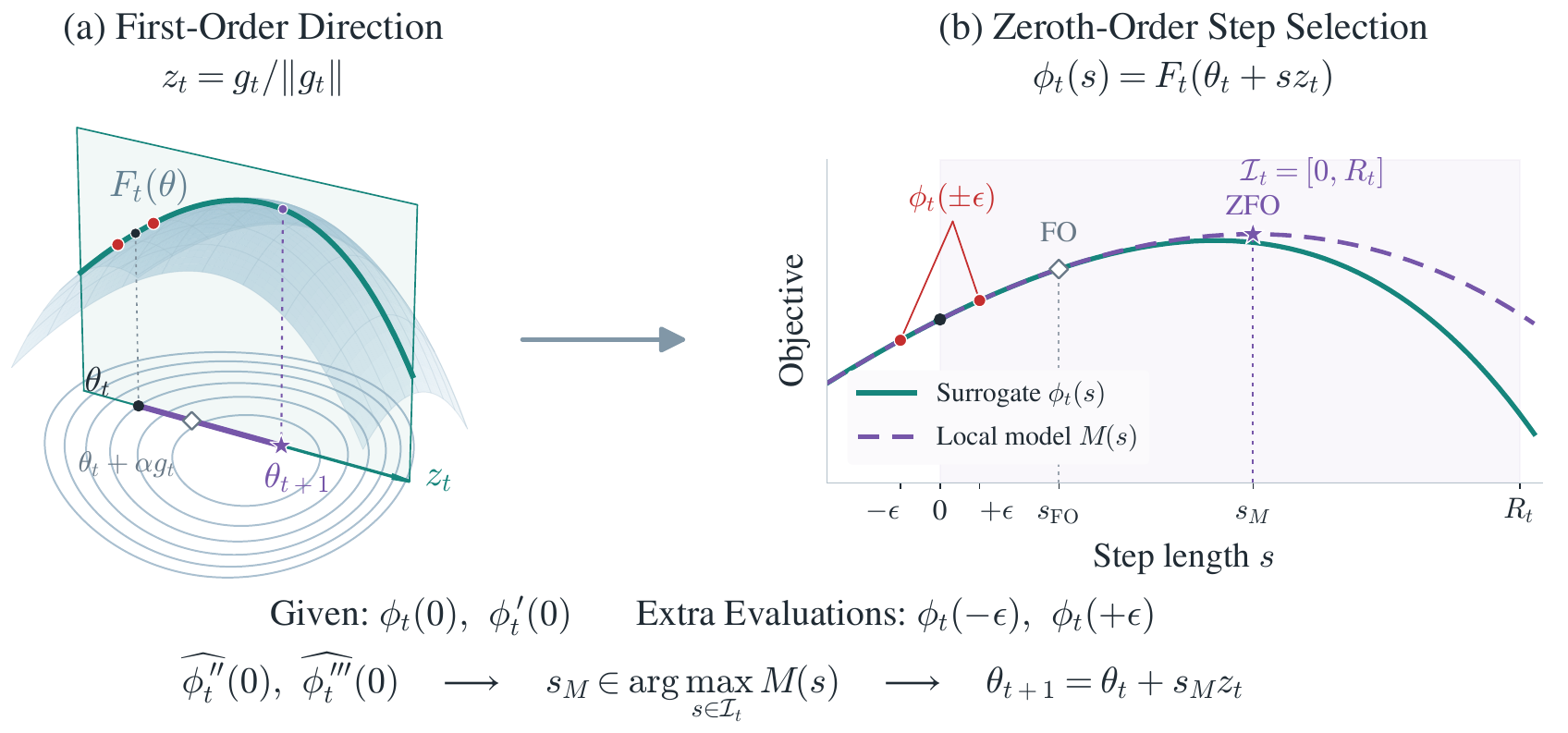}}
    \vspace{-2mm}
    \caption{\small{\textbf{The \colorbox{red!60}{Zeroth} \colorbox{codegreen!60}{First}-Order (\textcolor{red}{Z}\textcolor{codegreen}{F}O) Framework.} \emph{Left:} A first-order optimizer proposes the direction $z_t$ at $\theta_t$, which restricts the objective to the line $\theta_t + s z_t$. \emph{Right:} Along this line, $\theta_t + s z_t$, the forward pass gives $\phi_t(0)$, the backward pass gives $\phi_t'(0)$, and two symmetric probes at $\pm\epsilon$ supply curvature estimates. ZFO fits a local model $M(s)$ and takes the optimized step $s_M$ over the interval $[0, R_t]$; see Algorithm \ref{alg:zfo}.}}\label{fig:zfo}
    \vspace{-4mm}
\end{figure}

We investigate this question by using ZO evaluations not to replace gradients, but to select reliable step sizes. 
This idea is related to recent work on generalized Newton step selection GeN~\cite{bu2025gradient}, which uses function evaluations along a first-order direction to construct a curvature-informed step. In contrast, ZFO takes a broader view: rather than committing to a particular step construction, we formulate fixed-budget directional step selection as a local modeling problem. 
This perspective is not restricted to a specific optimization setup; in principle, the framework applies whenever an approximate gradient direction is available, but its appropriate magnitude is uncertain, a pattern that recurs in existing step-size rules~\cite{mahsereci2017probabilistic}. This uncertainty becomes especially relevant in noisy or stochastic optimization settings.

Large language model (LLM) fine-tuning is a natural setting for this hybridization. Modern fine-tuning pipelines construct minibatches of prompt-response examples and optimize a surrogate objective on these fixed examples. Once a minibatch is formed, nearby parameter evaluations can reuse the same data and require only additional forward evaluations of the perturbed model. This makes ZO probes especially useful---not as a replacement for gradients, but as a cheap mechanism for modeling objective variation along a trusted FO direction. In contrast, fully ZO LLM fine-tuning methods use forward evaluations to estimate high-dimensional gradients~\cite{mezo, cars}. ZFO exploits this structure by assigning direction selection to a base FO optimizer, such as AdamW~\cite{adamw} or Muon~\cite{muon_og}, while using a small number of ZO function evaluations to build a univariate local model of the objective function and select the step size along that direction. Taken together, the ZFO framework acts as a lightweight step-selection wrapper around a standard FO optimizer. This is particularly valuable in LLM fine-tuning, where local geometry can shift rapidly and poorly tuned learning rates can cause unstable or slow training~\cite{lee2026learning, springer2025overtrained}.

\smartparagraph{Problem Setup.}
Let fine-tuning an LLM be represented by a conditional distribution $\pi_\theta(y \mid x)$, where $x$ denotes an input prompt, $y$ denotes a model response, and $\theta \in \mathbb{R}^d$ are the model parameters. We initialize the model from a pretrained reference model and further optimize it using task-specific feedback. We summarize the main notation used throughout the paper in \S\ref {app:notation}; see Table~\ref{tab:notation}.

At a given iteration $t$, a FO fine-tuning algorithm constructs a minibatch of information, denoted by $S_t$. This minibatch contains all quantities needed to define the current training objective. Depending on the method, $S_t$ may include prompts, sampled responses, supervised labels, etc. Conditioned on $S_t$, the fine-tuning method defines a sampled surrogate objective optimized during the current update
$
F_t(\theta)
=
F(\theta; S_t).
$
Different LLM fine-tuning algorithms instantiate this surrogate differently. Our framework does not require a particular choice of $F_t$, and only assumes that the surrogate can be evaluated on the fixed minibatch $S_t$.

This fixed-batch viewpoint is standard in LLM fine-tuning \cite{mezo, schulman2017proximal, cars, lora}. Once $S_t$ is constructed, it is held fixed within the update. Evaluating $F_t$ at a nearby parameter value therefore does not require constructing a new batch or collecting new feedback; it only requires re-scoring the same prompt-response pairs under the perturbed model. 

A standard fine-tuning step computes a stochastic gradient of the sampled surrogate at the current parameters, and passes it through an optimizer such as AdamW. If we denote the resulting update direction by $g_t$, the usual first-order gradient ascent update is:
$
\theta_{t+1}
=
\theta_t
+
\alpha g_t,
$
where $\alpha > 0$ is the learning rate. 
This update contains two decisions: the direction $g_t$ and the step length controlled by $\alpha$. The chosen fine-tuning surrogate and the optimizer determine the direction. The step length, however, is usually specified by a learning-rate schedule or hyperparameter tuning~\cite{maclaurin2015gradient}. In LLM fine-tuning, this scalar choice can be sensitive; overly tiny steps may not entirely use the information in the current batch, while aggressive steps can destabilize training~\cite{suk2025optimization}. 

We improve the step-length decision while leaving the optimizer direction unchanged and formulate all objectives as maximization problems. Let $g_t \neq 0$, and define the normalized direction
\begin{equation}\label{eq:gradient_direction}
z_t = \frac{g_t}{\|g_t\|}.
\end{equation}
We restrict the current surrogate to the one-dimensional path along this direction:
$$
\phi_t(s)
=
F_t(\theta_t + s z_t),
\quad {\rm for\;}s \in \mathbb{R}.
$$
The standard optimizer step corresponds to
$
s_{\mathrm{FO}}
=
\alpha \|g_t\|.
$
Thus, choosing the learning rate is equivalent to choosing a scalar displacement along the fixed-batch improvement of $\phi_t$.

Rather than relying only on a global learning-rate schedule, we ask whether the current surrogate $F_t$ supports a better step along the direction already selected by the optimizer. We search over a bounded interval
$
\mathcal{I}_t
=
[0, R_t],
\quad
R_t
=
\beta \alpha \|g_t\|,
$
where $\beta > 0$ controls the search radius relative to the baseline optimizer step. When $\beta = 1$, the interval ends at the usual first-order update; larger values allow the method to consider longer steps while remaining constrained to the same direction:
\begin{equation}
\label{sstar}
s_t^\star
\approx
\arg\max_{s \in \mathcal{I}_t}
\phi_t(s).
\end{equation}
A full line search would evaluate $\phi_t$ at many candidate values of $s$, which is costly for LLMs. Instead, we use a fixed-budget step-selection procedure: the current value $\phi_t(0)$ is obtained from the usual forward evaluation of the surrogate on $S_t$, and the backward pass gives the directional derivative
$
\phi_t'(0)
=
\left\langle
\nabla F_t(\theta_t),
z_t
\right\rangle.
$
We then evaluate the same fixed-batch surrogate at two symmetric perturbations along the update direction:
$$
\phi_t(\epsilon)
=
F_t(\theta_t + \epsilon z_t),
\quad
\phi_t(-\epsilon)
=
F_t(\theta_t - \epsilon z_t), \text{ for a small }\epsilon > 0.
$$
Together, these quantities
$\left\{\phi_t(0),\phi_t'(0), \phi_t(\epsilon), \phi_t(-\epsilon) \right\},
$
define a local model of the surrogate improvement along the optimizer direction. The selected step $s_t$ is obtained by maximizing this model over $\mathcal{I}_t$, and the parameters are updated as
$
\theta_{t+1}
=
\theta_t
+
s_t z_t.
$
In this formulation, the base LLM fine-tuning algorithm determines the surrogate $F_t$ and the update direction $g_t$, while our method determines how far to move along that direction; see Figure \ref{fig:zfo}. The procedure is therefore compatible with a broad class of LLM fine-tuning objectives. Due to space limitations, we defer the related work to \S\ref{sec:related_work}.



{
\smartparagraph{Our contributions.}
We develop ZFO as a general framework that separates first-order direction selection from low-query zeroth-order step selection. This formulation allows different local step-selection models to be developed and analyzed within the same optimization framework. Below, we summarize our main contributions:
}

\contribparagraph{i}{A ZFO optimization framework.}
    We propose the general ZFO framework in which a standard first-order optimizer proposes an update direction, and a small number of zeroth-order evaluations are used to choose an improved step size along this direction. Within this framework, we develop four practical ZFO instances based on Taylor and Padé local models. While existing methods such as GeN is a one step-selection rule, our ZFO is a fixed-budget directional modeling framework. GeN can be interpreted within the ZFO  framework as a particular fixed-budget directional step-selection. 

\contribparagraph{ii}{Theoretical justification of adaptive step selection.}
   Our theoretical analysis of the one-dimensional objective follows from restricting the training objective to the first-order search direction. We show that the finite-difference curvature estimates with common random numbers (CRNs) concentrate around their population counterparts with high probability. We further prove that maximizing the resulting local model yields a step whose objective value is near-optimal within a bounded search region. Finally, we establish a global stationarity guarantee showing that ZFO converges to a stationary neighborhood whose size is controlled by the accumulated step-selection error.

\contribparagraph{iii}{Empirical validation across scales.}
    We evaluate ZFO on both controlled low-dimensional problems and LLM fine-tuning tasks. The experiments show that ZFO is robust to the choice of initial learning rate and can match or outperform standard first-order and zeroth-order baselines on most benchmarks. These results demonstrate that lightweight zeroth-order step adaptation can improve the stability and reliability of first-order fine-tuning without incurring the cost of full line search or extensive hyperparameter tuning.

%% file: Sections/03-method.tex
\vspace{-1mm}
\section{Methodology}\label{sec:methodology}
\vspace{-1mm}
We fix an iteration $t$ and suppress the subscript $t$ for readability. We use this convention throughout the section.

\smartparagraph{Local derivative estimates.} We recall that the forward-backward pass provides $\phi(0)$ and the directional derivative,
$
\phi'(0)
=
\langle \nabla F(\theta), z\rangle.
$
ZFO adds two symmetric fixed-batch evaluations,
$$
\phi(\epsilon)
=
F(\theta+\epsilon z),\;{\rm and\;}
\phi(-\epsilon)
=
F(\theta-\epsilon z).
$$
We estimate the second directional derivative by the central finite difference
\[
\widehat{\phi''}(0)
=
\frac{
\phi(\epsilon)-2\phi(0)+\phi(-\epsilon)
}{
\epsilon^2
}.
\]
Using the available first-order derivative, the same two probes also yield
\[
\widehat{\phi'''}(0)
=
\frac{3}{\epsilon^3}
\left(
\phi(\epsilon)-\phi(-\epsilon)-2\epsilon \phi'(0)
\right).
\]
For compactness, define: 
$
a = \phi'(0),b = \widehat{\phi''}(0),$ and $c = \widehat{\phi'''}(0).
$
ZFO uses these quantities to construct a local approximation of $\phi$ (see Figure \ref{fig:zfo}), considering both polynomial models derived from Taylor expansions and rational models derived from Padé approximations.

\contribparagraph{a}{Taylor models.} The second-order Taylor model is
$
T_2(s)
=
\phi(0)
+
a s
+
\frac{1}{2}b s^2.
$
Since $T_2$ is quadratic, the maximizer is obtained by checking the endpoints of $I$ and, when feasible, the stationary point
$
s
=
-{a}/{b}.
$
Similarly, the third-order Taylor model is
$
T_3(s)
=
\phi(0)
+
a s
+
\frac{1}{2}b s^2
+
\frac{1}{6}c s^3. 
$
The candidate maximizers are the endpoints of $I$ and the real roots inside $I$ of
$
a + b s + \frac{1}{2}c s^2 = 0.
$


\contribparagraph{b}{Padé models.}
The Padé variants approximate $\phi$ using rational functions. The second-order Padé model is
\[
P_2(s)
=
\phi(0)
+
\frac{a s}{1+q_2 s},
\quad
q_2
=
-\frac{b}{2a},
\]
Similarly, the third-order Padé model is
\[
P_3(s)
=
\phi(0)
+
\frac{
a s + \rho s^2
}{
1+q_3 s
},
\text{ where }
q_3
=
-\frac{c}{3b},
\quad
\rho
=
\frac{b}{2} + a q_3.
\]
If a Padé denominator has a pole inside $I$, or if its coefficients are ill-defined, ZFO rejects the Padé step and falls back to the quadratic Taylor step. This \emph{safeguard} avoids unstable rational extrapolation near singularities.

\smartparagraph{Model selection and update.}~Each ZFO variant corresponds to one model choice from
$
\{T_2, T_3, P_2, P_3\}.
$
For a chosen model $M$ in this set, ZFO computes the step by solving
\vspace{-1mm}
\iflatexml
\begin{equation}\label{zfostep}
s_M
\in
\arg\max_{s \in I}
M(s).
\end{equation}
\else
\begin{tcolorbox}[
    colback=grey!5,
    colframe=grey!75!black,
    boxrule=0.8pt,
    arc=2mm,
    left=1pt,
    right=1pt,
    top=0.8pt,
    bottom=0.8pt
]
\begin{equation}\label{zfostep}
s_M
\in
\arg\max_{s \in I}
M(s).
\end{equation}
\end{tcolorbox}
\fi


\begin{algorithm}[t]
\small
\vspace{-1mm}
\caption{\colorbox{red!60}{Zeroth} \colorbox{codegreen!60}{First}-Order (\textcolor{red}{Z}\textcolor{codegreen}{F}O) Framework}
\label{alg:zfo}
\begin{algorithmic}[1]
\Require  $\theta_0$, base optimizer, learning rate $\alpha$, $\epsilon>0$, $\beta>0$, model $M \in \{T_2,T_3,P_2,P_3\}$
\For{$t=0,1,2,\ldots$}
    \State \colorbox{codegreen!60}{Compute the direction $g_t$ using the base optimizer} \Comment{\texttt{{First-order direction}}}
    \State Set $z_t = g_t/\|g_t\|$ and $I_t = [0,\beta \alpha \|g_t\|]$
    \State Compute: $a_t=\phi_t'(0),
    b_t =
    \frac{\phi_t(\epsilon)-2\phi_t(0)+\phi_t(-\epsilon)}{\epsilon^2},
    c_t =
    \frac{3}{\epsilon^3}
    \left(
    \phi_t(\epsilon)-\phi_t(-\epsilon)-2\epsilon a_t
    \right)$
    \State Construct the chosen local model $M$ using $a_t,b_t,c_t$ 
    \State Select \colorbox{red!60}{$\displaystyle s_t=s_M \in \arg\max M(s)$, s.t.  $s\in I_t$}\Comment{\texttt{{Zeroth-order step selection}}} 
    \If  {$M$ is a Pad\'e model and has a pole in $I_t$}
    \State Replace $M$ by $T_2$, recompute $s_M$, and set $s_t=s_M$
    \EndIf
    \State Update $\theta_{t+1}=\theta_t+s_t z_t$ \Comment{\texttt{Parameter update}} 
\EndFor
\end{algorithmic}
\end{algorithm}
\vspace{-1mm}

All ZFO variants use the same direction, surrogate, and feasible interval. They differ only in how they convert two additional evaluations into a scalar step length. {GeN~\cite{bu2025gradient} also fits within this abstraction as a specific quadratic ZFO instance. GeN uses two function evaluations along the first-order direction to construct a generalized Newton step. Our instances reuse the directional derivative already available from the backward pass so that the same two probes recover both second- and third-order directional information and support Taylor and Padé models. GeN uses curvature-based rejection as its safeguard, while the instances studied here optimize over a bounded interval and fall back when a Padé model is ill-defined.} 
ZFO's novelty is the fixed-budget local-model abstraction, its family of Taylor/Padé instantiations, and the associated unified analysis and empirical validation. Algorithm \ref{alg:zfo} summarizes our framework.

%% file: Sections/04-theory-new.tex
\vspace{-1mm}
\section{Theoretical Analysis}\label{sec:theory}
\vspace{-1mm}
This section provides a theoretical justification for the ZFO step-selection framework. The analysis has three components. First, we show that shared-batch evaluations yield reliable finite-difference estimates of directional curvature; see \S\ref{sec:loacl_convergence} \myNum{i}. Second, conditional on the sampled examples, we show that maximizing the local ZFO model returns a step whose objective value is close to the best value attainable over the search interval; see \S\ref{sec:loacl_convergence} \myNum{ii}. Third, we connect this local step-selection property to a global stationarity guarantee for the ZFO update sequence; see \S\ref{sec:global_convergence}. We omit the iteration index $t$ when no ambiguity arises.

\vspace{0mm}
\subsection{Local Convergence Analysis}\label{sec:loacl_convergence}
\vspace{0mm}

\smartparagraph{Shared-batch common-random-numbers (CRN).} 
ZFO evaluates all local objective values on the same minibatch. At a fixed iteration, the examples are sampled once and then reused across the evaluations
$\left\{\phi(-\epsilon),
\phi(0),
\phi(\epsilon)\right\}.
$
After conditioning on these shared examples, $\phi$ becomes a deterministic function of the scalar step length, $s$. This conditional determinism is the basis of the local approximation analysis. 
Let
$
\tau
=
\{\tau_1,\ldots,\tau_n\}
$
denote the shared sampled examples used under CRN. For each example $\tau_i$, define the per-sample directional objective 
$
\phi_i(s)
=
F(\theta+s z;\tau_i).
$
The empirical shared-batch CRN objective is 
$
\phi(s)
=
\frac{1}{n}
\sum_{i=1}^{n}
\phi_i(s).
$
\begin{definition}[Population directional objective]
The population counterpart of the empirical CRN objective is
$
\psi(s)
=
\mathbb E_{\tau}
[
\phi_i(s)
],
$
where the expectation is taken with respect to the distribution used to generate the empirical objective.
\end{definition}
\contribparagraph{i}{Local smoothness assumptions.}
The first group of assumptions ensures that the directional objective admits Taylor expansions with controlled remainders.

\begin{assumption}[Pathwise smoothness]
\label{ass:pathwise-smoothness}
For each example $\tau_i$, the per-sample directional objective $\phi_i$ is five times continuously differentiable on the interval $[-\epsilon,R]$:
$
\phi_i
\in
C^5([-\epsilon,R]).
$
\end{assumption}

\begin{assumption}[Higher-order derivative bounds]
\label{ass:derivative-bounds}
There exist constants $W,M,B,L>0$ such that, for every $i$ and every $s\in[-\epsilon, R]$,
$|\phi_i''(s)| \leq W,$
$
|\phi_i'''(s)|
\leq
M,
$
$
|\phi_i^{(4)}(s)|
\leq
B,
$
and
$
|\phi_i^{(5)}(s)|
\leq
L.
$
\end{assumption}


{
\contribparagraph{ii}{Sampling assumptions.}
Population concentration is analyzed at a fixed iterate and direction under the independent-example assumptions below. Algorithm~\ref{alg:zfo} reuses its batch across probes, but also computes the direction from that batch; CRN alone does not establish the independence needed by this concentration argument. Thus Theorem~\ref{thm:crn-curvature-concentration} is a fixed-direction result, whereas the approximation guarantee in Theorem~\ref{thm:second-order-local-bound} is pathwise once the batch and direction are fixed. Bounded directional derivatives imply the sub-Gaussian condition by Hoeffding's lemma; see~\eqref{eq:hoeffding_subgaussian}.
}


\begin{assumption}[CRN]
\label{ass:crn} For each index $i$, the same example $\tau_i$ is used to evaluate $
\{\phi_i(-\epsilon), \phi_i(0), \phi_i(\epsilon)\}.
$
Moreover, for $i\neq j$, the examples $\tau_i$ and $\tau_j$ are sampled independently.
\end{assumption}

\begin{assumption}
\label{ass:subgaussian}
    (Sub-Gaussian Deviation) For each $\phi_{i},$ the directional deviations at $s=0$ are sub-gaussian up to third order, i.e., there exists constant $\sigma$ such that, for every $k\in\{0,1,2,3\}$ and all $\lambda\in\mathbb{R},$ 
    $\mathbb{E}[\exp(\lambda[\phi_{i}^{(k)}(0)-\psi^{(k)}(0)])] \le \exp(\frac{\lambda^2\sigma^2}{2}).$
\end{assumption}


\begin{theorem}[CRN curvature concentration]
\label{thm:crn-curvature-concentration}
Suppose Assumptions~\ref{ass:pathwise-smoothness}--\ref{ass:subgaussian} hold. Then, with probability at least $1-\delta$,
$
\left|
\widehat{\phi''}(0)-\psi''(0)
\right|
\leq
\frac{B\epsilon^2}{12}
+
\sigma
\sqrt{
\frac{2\log(4/\delta)}{n}
},
$
and
$
\left|
\widehat{\phi'''}(0)-\psi'''(0)
\right|
\leq
\frac{L\epsilon^2}{20}
+
\sigma
\sqrt{
\frac{2\log(4/\delta)}{n}
}.
$
\end{theorem}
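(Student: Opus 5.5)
The plan is to split each estimation error into a deterministic finite-difference bias and a stochastic sampling error, bound each separately, and combine them with a union bound. Concretely, for $k\in\{2,3\}$ we write
\[
\widehat{\phi^{(k)}}(0)-\psi^{(k)}(0)
=
\bigl(\widehat{\phi^{(k)}}(0)-\phi^{(k)}(0)\bigr)
+
\bigl(\phi^{(k)}(0)-\psi^{(k)}(0)\bigr),
\]
where $\phi^{(k)}(0)=\frac1n\sum_{i}\phi_i^{(k)}(0)$ is the exact derivative of the empirical CRN objective. The first term is handled pathwise using Assumptions~\ref{ass:pathwise-smoothness}--\ref{ass:derivative-bounds}. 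The second term is handled by concentration using Assumptions~\ref{ass:crn}--\ref{ass:subgaussian}.

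\emph{Bias step.} By Assumption~\ref{ass:pathwise-smoothness}, each $\phi_i$ is $C^5$ on $[-\epsilon,R]$, so $\phi$ is as well. Its derivatives are averages of the $\phi_i^{(k)}$, and therefore obey the same bounds $B$ and $L$ from Assumption~\ref{ass:derivative-bounds}.

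For the second derivative, we expand $\phi(\pm\epsilon)$ to third order with a Lagrange remainder of fourth order. Adding the two expansions cancels the odd terms and gives
\[
\widehat{\phi''}(0)-\phi''(0)=\tfrac{\epsilon^2}{24}\bigl(\phi^{(4)}(\xi_+)+\phi^{(4)}(\xi_-)\bigr),
\]
whose absolute value is at most $B\epsilon^2/12$.

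For the third derivative, we expand to fourth order with a fifth-order remainder. Subtracting the two expansions cancels the even terms, including the fourth-order one, so
\[
\phi(\epsilon)-\phi(-\epsilon)-2\epsilon\phi'(0)=\tfrac{\epsilon^3}{3}\phi'''(0)+\tfrac{\epsilon^5}{120}\bigl(\phi^{(5)}(\eta_+)+\phi^{(5)}(\eta_-)\bigr).
\]
Multiplying by $3/\epsilon^3$ gives a bias of $\tfrac{\epsilon^2}{40}(\cdots)$, whose absolute value is at most $L\epsilon^2/20$. This matches the constants in the statement exactly.

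\emph{Concentration step.} We first identify $\psi^{(k)}(0)=\mathbb E[\phi_i^{(k)}(0)]$. This requires differentiating under the expectation, which we justify by dominated convergence: the uniform derivative bounds of Assumption~\ref{ass:derivative-bounds} dominate the difference quotients, since by the mean value theorem each quotient is bounded by the next derivative's bound. With this identification, $\phi^{(k)}(0)-\psi^{(k)}(0)$ is an average of $n$ independent, mean-zero terms by Assumption~\ref{ass:crn}. Each term is $\sigma$-sub-Gaussian by Assumption~\ref{ass:subgaussian}.

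Multiplying the moment generating functions shows the average is $(\sigma^2/n)$-sub-Gaussian. The Chernoff bound then yields
\[
\Pr\bigl(|\phi^{(k)}(0)-\psi^{(k)}(0)|>t\bigr)\le 2e^{-nt^2/(2\sigma^2)}.
\]
Setting the right-hand side equal to $\delta/2$ gives $t=\sigma\sqrt{2\log(4/\delta)/n}$. A union bound over $k=2,3$ makes both deviation bounds hold simultaneously with probability at least $1-\delta$, and the triangle inequality combined with the bias step finishes the proof.

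The main obstacle I anticipate is not the algebra but keeping the two error sources cleanly separated. The bias bound must be applied to the empirical $\phi$ rather than to $\psi$, so that no expectation of a remainder term at a random point $\xi$ is needed. The interchange of differentiation and expectation that identifies $\psi^{(k)}(0)$ with $\mathbb E[\phi_i^{(k)}(0)]$ must also be verified; I would state it explicitly as a short lemma justified by the bounded higher derivatives. Finally, the result is for a fixed direction $z$ independent of the batch, consistent with the caveat preceding the theorem.
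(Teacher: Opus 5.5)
Your proposal is correct and follows essentially the same route as the paper: a pathwise Taylor-remainder bound that gives the biases $B\epsilon^2/12$ and $L\epsilon^2/20$, sub-Gaussian concentration of the centered derivative averages at level $\delta/2$ each, and a union bound. The differences are cosmetic: you bound the bias on the averaged $\phi$ rather than per example, which is equivalent. Your planned interchange-of-differentiation lemma can also be dropped, since the moment-generating-function bound in Assumption~\ref{ass:subgaussian}, holding for all $\lambda\in\mathbb{R}$, already forces $\mathbb{E}[\phi_i^{(k)}(0)]=\psi^{(k)}(0)$ by Jensen's inequality (a dominated-convergence argument would anyway need an integrable envelope for $\phi_i'$, which Assumption~\ref{ass:derivative-bounds} does not supply).
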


Theorem~\ref{thm:crn-curvature-concentration} shows that the finite-difference curvature estimates used by ZFO concentrate around their population counterparts. Each bound contains two terms. The first term is a deterministic finite-difference bias controlled by the perturbation scale $\epsilon$ and the relevant higher-order derivative bound. The second term is a stochastic sampling term that decreases at the classic rate $1/\sqrt{n}$. Thus, under CRN, the two additional objective evaluations provide statistically meaningful curvature information.

After conditioning on the shared examples, the empirical objective $\phi$ is deterministic. 
Let
$
s_M
\in
\arg\max_{s\in[0, R]}
T_2(s)
$
denote the model-selected step by $T_2$, and let
$
s_{\phi}
\in
\arg\max_{s\in[0,R]}
\phi(s)
$
denote the best step for the empirical CRN objective over the same interval.

\begin{theorem}[{Second-order Taylor approximation bound}]
\label{thm:second-order-local-bound}
Suppose Assumptions~\ref{ass:pathwise-smoothness} and~\ref{ass:derivative-bounds} hold. Then the step selected by maximizing the second-order Taylor model satisfies
$
\phi(s_M)
\geq
\phi(s_{\phi})
-
\frac{M R^3}{3}
-
\frac{B\epsilon^2 R^2}{12}.
$
\end{theorem}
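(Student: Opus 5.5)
The proof compares the empirical CRN objective $\phi$ to the quadratic model $T_2$ uniformly on $[0,R]$, and then uses the standard argmax-transfer argument. Everything is deterministic once the batch is fixed. Since $\phi=\frac1n\sum_i\phi_i$, it inherits from Assumptions~\ref{ass:pathwise-smoothness} and~\ref{ass:derivative-bounds} the properties $\phi\in C^5([-\epsilon,R])$, $|\phi'''|\le M$ and $|\phi^{(4)}|\le B$ on $[-\epsilon,R]$. I will implicitly use $\epsilon\le R$, so that the probe $+\epsilon$ lies in the domain where these bounds hold.

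\textbf{Step 1 (finite-difference bias).} Expand $\phi(\pm\epsilon)$ to fourth order about $0$ with Lagrange remainders:
\[
\phi(\pm\epsilon)=\phi(0)\pm\epsilon a+\tfrac{\epsilon^2}{2}\phi''(0)\pm\tfrac{\epsilon^3}{6}\phi'''(0)+\tfrac{\epsilon^4}{24}\phi^{(4)}(\xi_\pm).
\]
Adding the two expansions cancels the odd terms. This gives
\[
b=\widehat{\phi''}(0)=\phi''(0)+\tfrac{\epsilon^2}{24}\bigl(\phi^{(4)}(\xi_+)+\phi^{(4)}(\xi_-)\bigr),
\qquad\text{hence}\qquad |b-\phi''(0)|\le \tfrac{B\epsilon^2}{12}.
\]

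\textbf{Step 2 (uniform model error).} For $s\in[0,R]$, Taylor's theorem gives
\[
\phi(s)=\phi(0)+as+\tfrac12\phi''(0)s^2+\tfrac16\phi'''(\eta)s^3 .
\]
Subtracting $T_2(s)$ yields
\[
|\phi(s)-T_2(s)|\le \tfrac{M s^3}{6}+\tfrac{s^2}{2}|b-\phi''(0)|\le \tfrac{MR^3}{6}+\tfrac{B\epsilon^2R^2}{24}=:E .
\]

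\textbf{Step 3 (argmax transfer).} Since $s_M$ maximizes $T_2$ and $s_\phi\in[0,R]$ is feasible,
\[
\phi(s_M)\ge T_2(s_M)-E\ge T_2(s_\phi)-E\ge \phi(s_\phi)-2E .
\]
Finally, $2E=\frac{MR^3}{3}+\frac{B\epsilon^2R^2}{12}$, which is exactly the claimed bound.

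The only delicate point is the constant in Step 1. It is needed to match the $\frac{B\epsilon^2 R^2}{12}$ term, so the remainder must be carried to fourth order with two separate intermediate points $\xi_\pm$, rather than using a cruder third-order bound.
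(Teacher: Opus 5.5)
Your proof is correct and follows essentially the same route as the paper: a third-order Lagrange expansion of $\phi$ combined with the $\frac{\epsilon^2}{12}\phi^{(4)}$ central-difference bias gives the uniform bound $\frac{MR^3}{6}+\frac{B\epsilon^2R^2}{24}$, and the same chaining through $T_2(s_M)\ge T_2(s_\phi)$ doubles it. Your explicit caveat $\epsilon\le R$, needed so that $\phi(\epsilon)$ lies where the derivative bounds hold, is also required by the paper's argument, which leaves it implicit.
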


Theorem~\ref{thm:second-order-local-bound} shows that, once the examples are fixed, ZFO selects a step that is close to the best step available in the search interval. The first error term,
$
\frac{M R^3}{3},
$
is the deterministic Taylor approximation error over the interval. The second error term,
$
\frac{B\epsilon^2 R^2}{12},
$
is the error caused by estimating curvature through finite differences. The guarantee is strongest when the search radius $ R$ is moderate, the perturbation scale $\epsilon$ is small, and the directional objective has limited higher-order variation. The same proof strategy extends to the third-order Taylor model. For the Padé variants, the local model is rational rather than polynomial. These models can better capture nonlinear curvature changes over a finite interval. Their approximation guarantees follow from the same maximizer-comparison argument, replacing the Taylor remainder by a rational residual term. We defer the analysis of these variants to \S\ref{subsec:zfo_analysis}. 

\vspace{-1mm}
\subsection{Global convergence Analysis}\label{sec:global_convergence}
\vspace{0mm}

We now connect the local analysis to the true optimization objective $J$. Recall that ZFO optimizes on the surrogate $F_t,$ whose expectation under the current sampling distribution is $\bar F_t(\theta + sz_t)=\psi(s)$. For instance, standard policy optimization~\cite{REINFORCE} and its modern extensions~\cite{schulman2015trust, schulman2017proximal} construct surrogates such that $\nabla J(\theta_t)=\nabla \bar F(\theta_t)$ at the current-iterate. Define the expected directional alignment quantity
$
q_t = \mathbb{E}[\langle \nabla J(\theta_t), z_t \rangle].
$ We start by stating two main classical assumptions that we use in the sequel. 


\begin{assumption}[Smooth and bounded objective]
\label{ass:smooth-bounded}
The true objective $J:\mathbb R^d\to\mathbb R$ is $\mathcal L_J$-smooth and bounded above by $J^\star$. That is, for all $x,y$,
$
J(y)
\geq
J(x)
+
\langle
\nabla J(x),
y-x
\rangle
-
\frac{\mathcal L_J}{2}
\|y-x\|^2,
$
and
$
J(\theta_t)
\leq
J^\star
$
for all iterations $t$. Additionally, the expected surrogate $\bar F$ is also $\mathcal L_{\bar F}$-smooth.
\end{assumption}

\begin{assumption}[Aligned update direction]
\label{ass:alignment}
There exists $\mu>0$ such that, for all $t$,
$
q_t=\mathbb{E}_t[\langle
\nabla J(\theta_t),
z_t
\rangle]
\geq
\mu
\|\nabla J(\theta_t)\|.
$
\end{assumption}

Assumption~\ref{ass:alignment} is natural since gradient methods estimate an ascent direction for the  objective~\cite{schulman2017proximal}. Although optimizers such as AdamW rescale this direction, the resulting update remains positively oriented with $\nabla J(\theta_t)$ in expectation~\cite{pmlr-v80-balles18a}, supporting the assumption of $\mu > 0$. 

We start with a Lemma that bounds the step-selection error at iteration $t$ of the objective:
$
\max_{s\in I_t}
J(\theta_t+s z_t)
-
J(\theta_t+s_t z_t).
$ This quantity measures the loss in objective value by using the ZFO-selected step $s_t$ instead of the best step available in the interval $I_t$ for the true objective $J$. 
\begin{lemma}[Surrogate vs. Empirical Difference]\label{lemma:surrogate-vs-empiric}
    Let Assumptions~\ref{ass:pathwise-smoothness}-\ref{ass:smooth-bounded} hold, then 
    $\max_{s\in I_t}
J(\theta_t+s z_t)
-
J(\theta_t+s_t z_t) \leq \Delta_t,$ holds with probability at least $1- \delta,$ where  $$\Delta_t \coloneq (\mathcal{L}_J+\mathcal{L}_{\bar F})R_t^2+2\sigma(1+R_t)\sqrt{\frac{2\log(4/\delta)}{n}}+2WR_t^2+\phi(s_{\phi}) - \phi(s_M).$$ 
\end{lemma}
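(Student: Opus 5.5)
Let $s^J\in\arg\max_{s\in I_t}J(\theta_t+sz_t)$. The plan is to pass from $J$ to the expected surrogate $\psi(s)=\bar F_t(\theta_t+sz_t)$, and from $\psi$ to the empirical CRN objective $\phi$. We then compare $\phi(s^J)$ with $\phi(s_M)$ through $\phi(s_\phi)$. Concretely, we split
\begin{align*}
J(\theta_t+s^Jz_t)-J(\theta_t+s_Mz_t)
&=\big[J(\theta_t+s^Jz_t)-\psi(s^J)\big]+\big[\psi(s^J)-\phi(s^J)\big]+\big[\phi(s^J)-\phi(s_\phi)\big]\\
&\quad+\big[\phi(s_\phi)-\phi(s_M)\big]+\big[\phi(s_M)-\psi(s_M)\big]+\big[\psi(s_M)-J(\theta_t+s_Mz_t)\big].
\end{align*}
The third bracket is nonpositive because $s_\phi$ maximizes $\phi$ on $I_t$. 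The fourth bracket is kept verbatim; it is the last term of $\Delta_t$.

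\emph{True versus expected surrogate.} We use the current-iterate identity $\nabla J(\theta_t)=\nabla\bar F_t(\theta_t)$, together with $J(\theta_t)=\bar F_t(\theta_t)$ after a harmless additive normalization of the surrogate. By $\mathcal L_J$- and $\mathcal L_{\bar F}$-smoothness (Assumption~\ref{ass:smooth-bounded}), both functions agree with the same first-order expansion up to quadratic error. Hence for every $s\in I_t$,
\[
|J(\theta_t+sz_t)-\psi(s)|\le \tfrac12(\mathcal L_J+\mathcal L_{\bar F})s^2\le\tfrac12(\mathcal L_J+\mathcal L_{\bar F})R_t^2 .
\]
Applying this at $s^J$ and at $s_M$ gives the term $(\mathcal L_J+\mathcal L_{\bar F})R_t^2$.

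\emph{Empirical versus expected surrogate.} For $s\in[0,R_t]$ we expand both $\phi$ and $\psi$ to second order around $0$ with Lagrange remainder:
\[
\phi(s)-\psi(s)=[\phi(0)-\psi(0)]+s[\phi'(0)-\psi'(0)]+\tfrac{s^2}{2}[\phi''(\xi)-\psi''(\xi')].
\]
By Assumption~\ref{ass:derivative-bounds}, $|\phi''|\le W$; since $\psi$ is an expectation of the $\phi_i$, also $|\psi''|\le W$. So the remainder is at most $WR_t^2$ in absolute value. Applied at the two points $s^J$ and $s_M$, this produces $2WR_t^2$.

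For the zeroth- and first-order deviations, the CRN independence (Assumption~\ref{ass:crn}) and the sub-Gaussian condition (Assumption~\ref{ass:subgaussian}) with $k=0,1$ make $\phi(0)-\psi(0)$ and $\phi'(0)-\psi'(0)$ averages of $n$ independent $\sigma$-sub-Gaussian variables. Hoeffding's inequality with a union bound over the two events and their signs, the same $4/\delta$ allocation as in Theorem~\ref{thm:crn-curvature-concentration}, gives each deviation at most $\sigma\sqrt{2\log(4/\delta)/n}$ with probability $\ge1-\delta$. On this event, $|\phi(s)-\psi(s)|\le(1+R_t)\sigma\sqrt{2\log(4/\delta)/n}+WR_t^2$ uniformly on $I_t$. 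Using it at both points yields the factor $2\sigma(1+R_t)$.

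\emph{Main obstacle.} The delicate step is the probabilistic part. As the paper itself notes, $z_t$ and $s_M$ are computed from the same batch, so the sub-Gaussian concentration must be invoked in the fixed-direction sense of Theorem~\ref{thm:crn-curvature-concentration}. The argument avoids any uniform-over-$s$ concentration by concentrating only the two scalars at $s=0$ and controlling the rest deterministically with $W$. That is what keeps the bound valid simultaneously at the data-dependent points $s^J$ and $s_M$. A second point to make precise is the value-matching $J(\theta_t)=\bar F_t(\theta_t)$, which the gradient identity alone does not give; I would state it as the normalization convention, since only differences of $J$ enter the lemma. Summing the six brackets then gives exactly $\Delta_t$.
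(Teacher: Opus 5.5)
Your proposal is correct and follows essentially the same route as the paper. You pass from $J$ to $\psi$ using the shared gradient at $\theta_t$ and smoothness, then from $\psi$ to $\phi$ by a second-order expansion at $s=0$: you concentrate only the zeroth- and first-order deviations and use $W$ for the remainder, compare through $s_\phi$, and keep $\phi(s_\phi)-\phi(s_M)$ verbatim. The one cosmetic difference is that the paper never needs your normalization $J(\theta_t)=\bar F_t(\theta_t)$: it compares the increments $J(\theta_t+sz_t)-J(\theta_t)$ and $\psi(s)-\psi(0)$, so the offsets cancel when the bounds at $s^J$ and $s_M$ are combined, which is also why your normalization is harmless.
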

This lemma quantifies the loss in objective value when the ZFO step $s_t$, selected through the local model, is used instead of the best step in the interval $I_t$. The error term $\Delta_t$ combines smoothness mismatch, finite-sample concentration error, approximation error, and the model-selection gap $\phi(s_{\phi})-\phi(s_M)$. E.g., when the local model is the second-order Taylor model $T_2$, using Theorem~\ref{thm:second-order-local-bound} we get 
$$\Delta_t \leq (\mathcal{L}_J+\mathcal{L}_{\bar F})R_t^2+2\sigma(1+R_t)\sqrt{\frac{2\log(4/\delta)}{n}}+2WR_t^2+\frac{M R^3}{3}
+ \frac{B\epsilon^2 R^2}{12}.$$

Finally, Theorem~\ref{thm:zfo-stationarity} converts the local quality of the ZFO step-selection rule into a global stationarity guarantee.
\begin{theorem}[Stationarity of the ZFO update]
\label{thm:zfo-stationarity}
Let Assumptions~\ref{ass:smooth-bounded}--\ref{ass:alignment} hold and $q_t/\mathcal L_J \in I_t.$ Then, for every $T> 0$, with probability at least $1- \delta,$ we have
$$
\min_{0\leq t\leq T}
\mathbb{E}\|\nabla J(\theta_t)\|^2
\leq
\frac{
2\mathcal{L}_J(J^\star-J(\theta_0))
}
{\mu^2T}+
\frac{2\mathcal{L}_J \tilde \Delta_T}{\mu^2}, \text{ where } \tilde \Delta_T=\frac{1}{T}\sum_{t=0}^{T-1} \mathbb{E}[\Delta_t].
$$
\end{theorem}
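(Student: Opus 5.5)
The plan is a standard descent-lemma argument for an ascent method: compare the ZFO step to a good reference step inside $I_t$, and absorb the difference into the step-selection error $\Delta_t$ from Lemma~\ref{lemma:surrogate-vs-empiric}.

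\textbf{Step 1: per-step gain of the best step in $I_t$.} Fix $t$ and let $\bar s_t = q_t/\mathcal L_J$; by hypothesis $\bar s_t\in I_t$. Assumption~\ref{ass:smooth-bounded} with $y=\theta_t+\bar s_t z_t$ and $\|z_t\|=1$ gives
\begin{equation*}
J(\theta_t+\bar s_t z_t)\ \ge\ J(\theta_t)+\bar s_t\langle\nabla J(\theta_t),z_t\rangle-\frac{\mathcal L_J}{2}\bar s_t^2 .
\end{equation*}
Since $\bar s_t\in I_t$, the left side is at most $\max_{s\in I_t}J(\theta_t+sz_t)$.

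\textbf{Step 2: pass to the ZFO step.} Lemma~\ref{lemma:surrogate-vs-empiric} gives $J(\theta_{t+1})=J(\theta_t+s_tz_t)\ge \max_{s\in I_t}J(\theta_t+sz_t)-\Delta_t$ with probability at least $1-\delta$. Combining with Step 1,
\begin{equation*}
J(\theta_{t+1})\ \ge\ J(\theta_t)+\bar s_t\langle\nabla J(\theta_t),z_t\rangle-\frac{\mathcal L_J}{2}\bar s_t^2-\Delta_t .
\end{equation*}

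\textbf{Step 3: take conditional expectation.} The step $\bar s_t$ is defined from the expectation $q_t$, so it is $\theta_t$-measurable; I will read $q_t$ as the conditional alignment $\mathbb E_t$, as in Assumption~\ref{ass:alignment}. Taking $\mathbb E_t$ turns the linear term into $\bar s_t q_t$, and the bound becomes
\begin{equation*}
\mathbb E_t J(\theta_{t+1})\ \ge\ J(\theta_t)+\frac{q_t^2}{2\mathcal L_J}-\mathbb E_t\Delta_t .
\end{equation*}
Assumption~\ref{ass:alignment} then gives $q_t^2\ge \mu^2\|\nabla J(\theta_t)\|^2$, because $q_t\ge\mu\|\nabla J(\theta_t)\|\ge 0$.

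\textbf{Step 4: telescope.} Take full expectations and sum over $t=0,\dots,T-1$. Use $J(\theta_T)\le J^\star$ on the left. This yields
\begin{equation*}
\frac{\mu^2}{2\mathcal L_J}\sum_{t=0}^{T-1}\mathbb E\|\nabla J(\theta_t)\|^2\ \le\ J^\star-J(\theta_0)+\sum_{t=0}^{T-1}\mathbb E\Delta_t .
\end{equation*}
Divide by $T$, bound the minimum by the average, and multiply by $2\mathcal L_J/\mu^2$. This gives exactly the stated bound with $\tilde\Delta_T$, the average of $\mathbb E\Delta_t$.

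\textbf{Main obstacle: the probability qualifier.} Lemma~\ref{lemma:surrogate-vs-empiric} holds with probability $1-\delta$ at each single $t$, but the telescoping argument uses all $T$ iterations. I see two options.
\begin{itemize}
\item[(a)] Apply a union bound and replace $\delta$ by $\delta/T$ inside each $\Delta_t$. This changes the logarithm to $\log(4T/\delta)$.
\item[(b)] Interpret $\Delta_t$ as a random variable that upper-bounds the step-selection gap on the good event, and treat expectations on that event.
\end{itemize}
I would follow the statement as written, invoke the lemma on the joint event, and add a remark that a union bound over $t$ is implicit. A second, smaller point also needs care: the lemma is applied conditionally on $\theta_t$, so $\Delta_t$ must be measurable in a way that lets its expectation be taken.
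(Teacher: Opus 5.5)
Your proposal is correct and matches the paper's proof step for step: the same reference step $q_t/\mathcal L_J\in I_t$, the smoothness lower bound, conditional expectation to obtain $q_t^2/(2\mathcal L_J)$, the alignment assumption, and telescoping with $J\le J^\star$. Two of your choices are more careful than the paper's. Summing over $t=0,\dots,T-1$ matches the statement's definition of $\tilde\Delta_T$, whereas the paper sums to $T$; and the probability-qualifier issue you flag (a per-iteration high-probability bound on $\Delta_t$ being fed into expectations) is a real one that the paper's proof leaves unaddressed.
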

The upper bound in Theorem \ref{thm:zfo-stationarity} has two terms. The first term,
$
\frac{2\mathcal L_J(J^\star-J(\theta_0))}{\mu^2T},
$
is the usual optimization term and decays at rate $1/T$. The second term,
$
\frac{2\mathcal{L}_J \tilde \Delta_T}{\mu^2},
$
is the price paid for imperfect step selection. Thus, ZFO does not necessarily converge exactly to a stationary point unless the average step-selection error vanishes. Instead, it converges to a neighborhood of stationarity whose size is controlled by the error
$
\bar \Delta_T.
$

\newcommand{\numcell}[1]{{\fontsize{6.6}{7.2}\selectfont #1}}
\newcommand{\modelname}[1]{{\fontsize{6.8}{7.4}\selectfont \textbf{#1}}}
\newcommand{\datasetname}[1]{{\fontsize{6.8}{7.4}\selectfont #1}}
\begin{table*}
\centering
\small
\setlength{\tabcolsep}{4pt}
\renewcommand{\arraystretch}{0.9}
\caption{\small{\textbf{Evaluation performance for baseline} first-order (FO) fine-tuning (using AdamW) and zeroth-order (ZO) fine-tuning (using MeZO) against ZFO methods across models and reasoning benchmarks. All runs report the final evaluation and mean results over 3 seeds. \colorbox{green!25}{Green} is the best and \colorbox{red!25}{red} is the second best result.}}
\vspace{-1mm}
\scalebox{0.95}{\begin{tabular}{@{}llcc|cccc@{}}
\toprule
\multirow{2}{*}{\textbf{Model}} &
\multirow{2}{*}{\textbf{Dataset}} &
\multicolumn{2}{c}{\textbf{Baseline}} &
\multicolumn{4}{c}{\textbf{ZFO Methods (This Paper)}} \\

\cmidrule(lr){3-4}
\cmidrule(lr){5-8}
 &  & AdamW (FO) & MeZO (ZO) & Taylor$_2$ & Taylor$_3$ & Padé$_2$ & Padé$_3$ \\
\midrule
\multirow{5}{*}{\modelname{Qwen-2.5-Math-1.5B}}
 & \datasetname{{GSM8K}}      & \numcell{78.24 $\pm$ 0.59} & \numcell{0.00} & \numcell{\colorbox{green!25}{{81.00} $\pm$ {0.11}}} & \numcell{75.56 $\pm$ 0.24} & \numcell{80.95 $\pm$ 0.49} & \numcell{\colorbox{red!25}{80.97 $\pm$ 0.73}} \\
 & \datasetname{MATH}       &  {\numcell{31.77 $\pm$ 1.63}} &  \numcell{22.01 $\pm$ 3.93} & \numcell{\colorbox{green!25}{40.10 $\pm$ 1.19}}   & \numcell{20.18 $\pm$ 3.03} & \numcell{37.50 $\pm$ 1.79} & \numcell{\colorbox{red!25}{39.58 $\pm$ 0.45}}\\
 & \datasetname{SVAMP}      & \numcell{88.11 $\pm$ 0.69} &  \numcell{68.67 $\pm$ 10.33}& \numcell{\colorbox{red!25}{90.78 $\pm$ 0.96}} & \numcell{89.11 $\pm$ 0.51} & \numcell{\colorbox{green!25}{90.89 $\pm$ 1.02}} & \numcell{90.33 $\pm$ 2.19} \\
 & \datasetname{AsDiv}      & \numcell{91.45 $\pm$ 0.36} & \numcell{64.72 $\pm$ 3.51} & \numcell{\colorbox{green!25}{94.09 $\pm$ 0.29}} & \numcell{88.27 $\pm$ 0.35} & \numcell{93.04 $\pm$ 0.22} & \numcell{\colorbox{red!25}{93.46 $\pm$ 0.18}} \\
 & \datasetname{OpenBookQA} & \numcell{26.07 $\pm$ 2.53} & \numcell{8.00 $\pm$ 10.05} & \numcell{{37.80 $\pm$ 17.85}} & \numcell{{39.73 $\pm$ 20.99}} & \numcell{\colorbox{red!25}{60.67 $\pm$ 0.95}} &  \numcell{\colorbox{green!25}{64.07 $\pm$ 1.22}} \\

\midrule

\multirow{3}{*}{\modelname{Phi-2}}
 & \datasetname{SVAMP}      & \numcell{55.44 $\pm$ 4.91} & \numcell{22.44 $\pm$ 3.89} & {\numcell{55.67 $\pm$ 7.86}} & {\numcell{39.33 $\pm$ 0.88}} & {\numcell{\colorbox{red!25}{64.22 $\pm$ 0.19}}} & {\numcell{\colorbox{green!25}{65.11 $\pm$ 5.39}}} \\
 & \datasetname{AsDiv}      & \numcell{56.51 $\pm$ 18.76} & \numcell{10.15 $\pm$ 4.88} & {\numcell{65.42 $\pm$ 2.59}} & \numcell{55.85 $\pm$ 2.55} & {\numcell{\colorbox{red!25}{65.58 $\pm$ 3.74}}} & \numcell{\colorbox{green!25}{69.27 $\pm$ 1.92}} \\
 & \datasetname{OpenBookQA} & \numcell{51.20 $\pm$ 1.11} & \numcell{20.33 $\pm$ 10.90}   & \numcell{\colorbox{green!25}{58.27 $\pm$ 13.62}} & {\numcell{51.27 $\pm$ 0.95}} & {\numcell{49.40 $\pm$ 0.20}} & {\numcell{\colorbox{red!25}{52.07 $\pm$ 0.23}}}\\

\midrule

\multirow{1}{*}{\modelname{Gemma-2-2B}}
 & \datasetname{SVAMP}      & \numcell{52.00 $\pm$ 1.76} & \numcell{14.44 $\pm$ 5.93} & \numcell{\colorbox{green!25}{59.56 $\pm$ 0.51}} & \numcell{47.22 $\pm$ 2.01} &\numcell{\colorbox{red!25}{59.00 $\pm$ 1.76}}  & \numcell{58.22 $\pm$ 3.69} \\

\midrule

\multirow{3}{*}{\modelname{Llama-3.2-1B}}
 & \datasetname{GSM8K}      & \numcell{4.85 $\pm$ 5.47} & \numcell{1.30 $\pm$ 0.98} & \numcell{4.02 $\pm$ 3.98} & \numcell{\colorbox{green!25}{9.00 $\pm$ 6.68}} & \numcell{\colorbox{red!25}{5.05 $\pm$ 6.94}} & \numcell{1.72 $\pm$ 0.88} \\
 & \datasetname{AsDiv}      &  \numcell{16.60 $\pm$ 28.12} & \numcell{3.33 $\pm$ 0.48} & {\numcell{13.72 $\pm$ 23.17}} & \numcell{\colorbox{red!25}{44.77 $\pm$ 1.42}} & {\numcell{{25.23 $\pm$ 21.97}}} & \numcell{\colorbox{green!25}{52.16 $\pm$ 4.75}} \\
 & \datasetname{OpenBookQA} &  {\numcell{25.53 $\pm$ 0.31}} & \numcell{0.00} & \numcell{\colorbox{red!25}{34.40 $\pm$ 1.56}} & \numcell{31.53 $\pm$ 0.81} & \numcell{\colorbox{green!25}{34.60 $\pm$ 1.04}} & \numcell{{25.64 $\pm$ 0.23}} \\
\bottomrule
\label{tab:empirics}
\end{tabular}}
\vspace{-8mm}
\end{table*}



The three results establish the theoretical foundation of ZFO. The CRN concentration theorem establishes the statistical reliability of the finite-difference curvature estimates; the deterministic approximation theorem shows that maximizing the ZFO model yields a near-optimal local step; and the global stationarity theorem shows that controlled step-selection error preserves convergence to stationarity. Together, these results justify the central mechanism of ZFO: a small number of fixed-batch objective evaluations can recover useful local curvature information and improve step selection over a fixed first-order update while retaining a principled convergence interpretation.

%% file: Sections/05-experiments.tex
\vspace{-1mm}
\section{Experiments}\label{sec:experiments}
\vspace{-1mm}

{
\smartparagraph{Organization of the Results.} We organize our experimental results as follows: In the \emph{main paper}, \myNum{i} we compare ZFO against first-order fine-tuning with AdamW and zeroth-order fine-tuning with MeZO across several LLMs and datasets in Table~\ref{tab:empirics}; \myNum{ii} we conduct a focused Qwen-2.5-Math-1.5B sub-study over additional reasoning benchmarks in Table~\ref{tab:qwen_substudy}; \myNum{iii} we evaluate ZFO on classical optimization problems to assess its benefits beyond LLM fine-tuning in Figure~\ref{fig:classic_opt}; \myNum{iv} we summarize the ablation studies in the appendix; and \myNum{v} we report wall-clock and memory overheads in Figures~\ref{fig:time_graph} and~\ref{fig:peak_memory} to quantify the computational cost of ZFO. We defer the remaining experiments and expanded diagnostics to the \emph{Appendix}~\ref{sec:add_insights}, where \myNum{vi} we provide the complete six-problem classical optimization study in Table~\ref{tab:optimization_beyond_llms} and Figure~\ref{fig:optimization_beyond_llms}, \myNum{vii} we validate the local ZFO approximants on analytical functions in Figure~\ref{fig:local_functions} and the step-selection mechanism on controlled toy RLVR environments in Figure~\ref{fig:toy_collection}, \myNum{viii} we provide learning-rate sensitivity curves in Figure~\ref{fig:empirical_curves} and study the interaction between $\alpha$ and $\beta$ in Figure~\ref{fig:ablation}, including the complementary $\beta=1$ control in Table~\ref{tab:beta1_llm_ablation}, \myNum{ix} we analyze the frequency of the steps selected by each ZFO method in Table~\ref{tab:step_selection_frequencies}, \myNum{x} we summarize cross-setting improvements in Table~\ref{tab:zfo_win_ratios} and compare against additional \emph{first-order baselines} (Prodigy~\cite{prodigy} and AdamW~\cite{adamw}) in Table~\ref{tab:additional_fo_zfo_extremes}, and \myNum{xi} we evaluate the sequential \texttt{ZFO-Seq} against line-search baselines (Armijo~\cite{nocedal2006numerical}, BB~\cite{barzi}, Polyak~\cite{polyak}, and PLS~\cite{mahsereci2017probabilistic}) in Table~\ref{tab:line_search_selected_results}. Unless otherwise stated, all reported results are averaged over three seeds. The LLM experiments use the implementation described in \S \ref{subsec:objective_formulation}. Additional experimental configuration details are provided in \S\ref{subsec:emp_config}. 
}

\smartparagraph{RLVR, LLMs, and Datasets used for Fine-tuning.} We report the performance of various high-dimensional models configured for an RLVR setup using the GRPO policy gradient~\cite{grpo} and the AdamW optimizer~\cite{adamw}, including Qwen-2.5-MATH-1.5B~\cite{qwen}, Phi-2~\cite{phi2}, Gemma-2-2B~\cite{gemma}, and Llama-3.2-1B~\cite{llama3herd} on the datasets GSM8K~\cite{gsm8k}, SVAMP~\cite{svamp}, AsDiv~\cite{asdiv}, OpenBookQA~\cite{openbookqa}, and a (512/256 train/evaluation) subset of MATH~\cite{math_dataset}; Table~\ref{tab:model-dataset-summary} summarizes the model and dataset details. 

\smartparagraph{Learning rate scheduler.} For a fair comparison, we fix the learning rate $\alpha$ to the value that performs the best for the FO baseline, chosen from the set $\{10^{-6}, 5\times 10^{-6}, 10^{-5}\}$, and consequently select the best bound $\beta$ from $\{3, 5, 10 \}$; the remainder of the configuration details are in \S\ref{subsec:emp_config}.


\vspace{-1mm}
\subsection{ZFO for LLM fine-tuning and classic optimization}
\vspace{0mm}



\contribparagraph{i}{Does directional ZO step selection improve FO optimization?}
We show the performance of the ZFO framework on LLM fine-tuning in Table~\ref{tab:empirics}, where at least one ZFO variant improves on the first-order fine-tuning (FO FT) baseline for every model-dataset pair, indicating that ZFO's benefits are not confined to a single model family or benchmark.~We observe the largest gains where the FO baseline is relatively weak or highly variable, including SVAMP for Phi-2 and Gemma-2-2B, as well as AsDiv for Llama and MATH and OpenBookQA for Qwen. When the FO baseline is already strong, such as Qwen on SVAMP, ZFO yields smaller but still positive improvements in mean performance. Although the best-performing ZFO variant changes across tasks and models, Pad\'e$_3$ provides a strong representative instance of the framework, improving over FO in 14 of the 16 LLM settings summarized in Table~\ref{tab:zfo_win_ratios}. Pad\'e$_3$ is a strong representative default in our experiments, although the preferred local model remains task-dependent.


\contribparagraph{ii}{Qwen Sub-Study.} Table~\ref{tab:qwen_substudy} extends the main results by isolating Qwen-2.5-Math-1.5B on four reasoning benchmarks: ARC-Challenge~\cite{arc}, StrategyQA~\cite{strategyqa}, FOLIO~\cite{folio}, and CODAH~\cite{codah}. At least one ZFO method outperforms the FT baseline on every dataset, with the largest improvements on ARC-Challenge (\greenup $8.81$ over FO FT) and CODAH (\greenup $4.92$ over FO FT). Even under the best learning rate configurations, the main results and the Qwen sub-study indicate that FO FT exhibits seed variance under our settings. 


\begin{table*}[t]
\centering
\small
\setlength{\tabcolsep}{4pt}
\renewcommand{\arraystretch}{0.9}
\caption{\small{\textbf{Qwen-2.5-Math-1.5B study} on ZFO methods in our settings via additional reasoning benchmarks. Results report evaluation performance averaged over 3 seeds for FO and ZFO methods.}}
\vspace{-4mm}
\begin{tabular}{@{}lccccc@{}}
\multicolumn{5}{c}{} \\
\toprule
\textbf{Method} 
& \textbf{ARC-Challenge} 
& \textbf{StrategyQA} 
& \textbf{FOLIO} 
& \textbf{CODAH} 
\\

\midrule

{AdamW (FO)}      
& {49.29 $\pm$ 16.08} 
& {68.56 $\pm$ 0.91} 
& {34.48 $\pm$ 1.13} 
& \colorbox{red!25}{57.91 $\pm$ 3.24} 
\\
\midrule
{Taylor$_2$} 
& {\colorbox{green!25}{58.10 $\pm$ 1.33}} 
& \colorbox{red!25}{69.63 $\pm$ 1.82} 
& {\colorbox{green!25}{36.6 $\pm$ 3.26}} 
& {\colorbox{green!25}{62.83 $\pm$ 3.14}} 
\\

{Taylor$_3$} 
& {54.24 $\pm$ 0.94} 
& {61.67 $\pm$ 1.46} 
& {34.48 $\pm$ 1.13} 
& {55.28 $\pm$ 1.66} 
 \\

{Padé$_2$}  
& {53.58 $\pm$ 2.77} 
& {65.55 $\pm$ 4.43} 
& {36.27 $\pm$ 2.73} 
& {53.96 $\pm$ 17.72} 
\\

{Padé$_3$}  
& \colorbox{red!25}{57.05 $\pm$ 1.24} 
& {\colorbox{green!25}{70.06 $\pm$ 0.61}} 
& \colorbox{red!25}{36.44 $\pm$ 4.53} 
& {41.85 $\pm$ 20.58} 
\\

\bottomrule
\label{tab:qwen_substudy}
\end{tabular}
\vspace{-6mm}
\end{table*}

{
\contribparagraph{iii}{Classic Optimization.} To show that ZFO is not intrinsically tied to RLVR and to examine ZFO's performance beyond LLM fine-tuning, Figure~\ref{fig:classic_opt} presents three classic optimization problems; see Table~\ref{tab:optimization_beyond_llms} for the comprehensive summary of the six classic optimization problems. At least one ZFO variant achieves a substantially lower median objective than its corresponding FO baseline on every displayed problem after 100 updates. Taylor$_3$--AdamW yields the strongest results on nonlinear least squares and Beale, while Pad\'e$_3$--AdamW performs best on Rosenbrock. Consistent with our LLM experiments, the best approximant varies across problems, indicating that the gains from step selection depend on the local objective.
}

\begin{figure}
    \centering
    \includegraphics[width=0.99\linewidth]{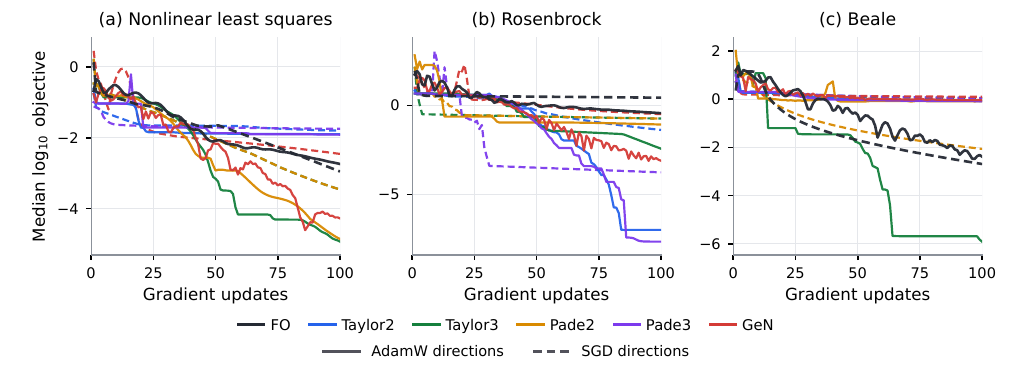}
    \vspace{-2mm}
    \caption{\small{\textbf{Results on classic optimization problems.} We plot the median $\log_{10}$ objective over three perturbed initializations. GeN has a similar computational budget as ZFO, as both methods perform two additional function evaluations per iteration. We study the extended classic optimization problems in Table~\ref{tab:optimization_beyond_llms} and Figure~\ref{fig:optimization_beyond_llms}.}}\label{fig:classic_opt}
    \vspace{-4mm}
\end{figure}

{
\contribparagraph{iv}{Ablation Studies.}
We provide additional ablation studies in Appendix~\ref{sec:add_insights} to better understand when and how ZFO improves step selection. Figure~\ref{fig:empirical_curves} compares ZFO against AdamW across fixed learning rates, showing that gains in Table~\ref{tab:empirics} cannot generally be recovered by simply changing the FO learning rate. Figure~\ref{fig:ablation} further studies the interaction between the learning rate $\alpha$ and search bound $\beta$, where increasing $\beta$ is generally beneficial at smaller learning rates but can become unstable once $\alpha$ is large, showing that the two parameters are complementary but not interchangeable. Figure~\ref{fig:toy_collection} provides a clear $\beta=1$ control. Table~\ref{tab:step_selection_frequencies} shows that the four ZFO instances use the search interval differently, ranging from primarily interior steps for Taylor$_3$ to mostly endpoint decisions for Padé$_2$.
}

\vspace{-1mm}
\subsection{Computational Cost}\label{subsec:performance}
\vspace{-1mm}

The relative computational overhead of ZFO is small. A standard iteration involves 4 key components: a rollout call for the trajectory $\tau_t,$ a reward evaluation $R(\tau_t)$, a forward pass, and a backward pass. In contrast, our ZFO methods leave all 4 operations untouched and only call two additional forward-pass probes. Since these probes are only forward passes, the overhead is small and comparable to the forward-only cost structure of ZO methods such as MeZO~\cite{mezo}. From this perspective, we present both wall-clock time costs in Figure~\ref{fig:time_graph} and memory overhead costs in Figure~\ref{fig:peak_memory}. We train the models on an NVIDIA H100 80GB GPU.

\begin{figure}[t]
    \centering
    \begin{minipage}[t]{0.58\linewidth}
        \centering
        \includegraphics[width=\linewidth]{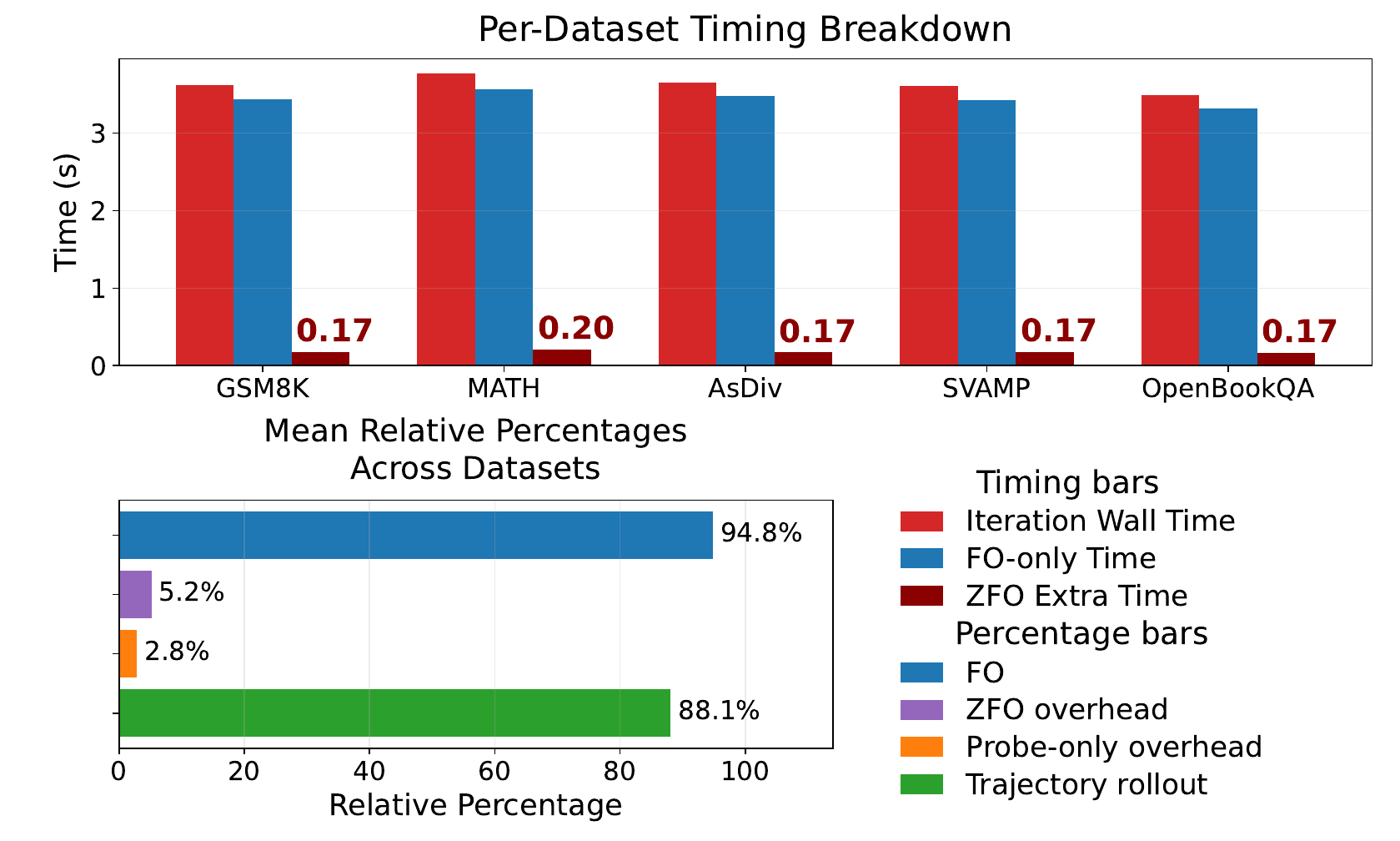}
        \captionof{figure}{\small{\textbf{Wall-clock time comparison.} ZFO step selection adds little wall-clock overhead. Across five benchmarks, trajectory rollout dominates iteration time, while the additional ZFO computation contributes only a small fraction of the time that FO-only updates cost.}}
        \label{fig:time_graph}
    \end{minipage}\hfill
    \begin{minipage}[t]{0.4\linewidth}
        \centering
        \includegraphics[width=\linewidth]{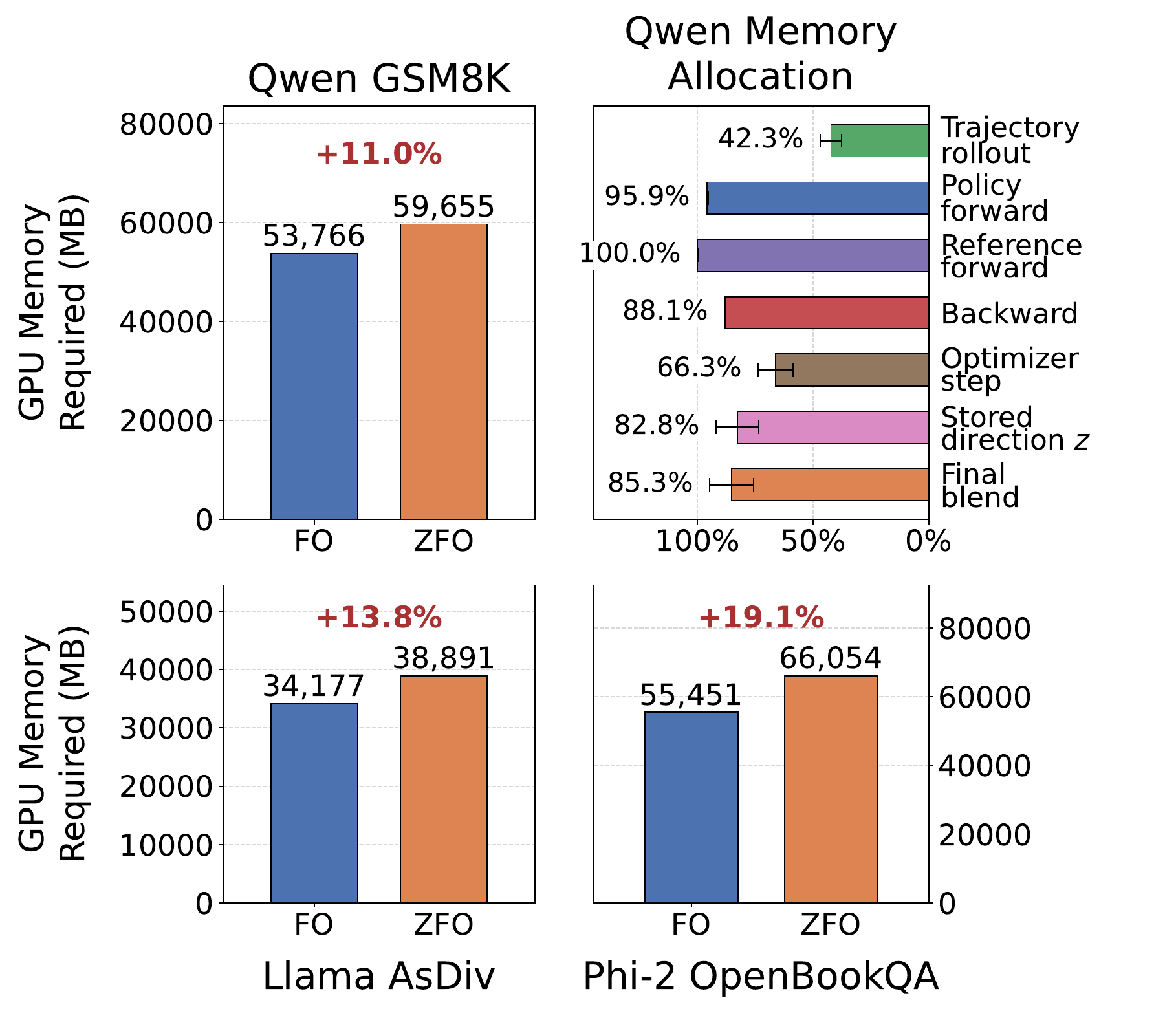}
        \captionof{figure}{\small{\textbf{GPU memory requirement comparisons.} by FO~FT vs ZFO~FT across three configurations. The top right panel depicts the relative memory allocation for FO and ZFO FT.}}
        \label{fig:peak_memory}
    \end{minipage}
    \vspace{-4mm}
\end{figure}

Figure~\ref{fig:time_graph} decomposes per-iteration runtime for Qwen-2.5-MATH-1.5B on the benchmarks reported in Table~\ref{tab:empirics}. ZFO adds $\sim 5.2\%$ relative wall-clock overhead in this configuration; ZFO incurs real memory overhead.



Figure~\ref{fig:peak_memory} reports peak GPU memory across three model–dataset pairs, alongside a per-phase breakdown for Qwen-2.5-MATH-1.5B. The overhead ranges from 11.0\%
 (Qwen/GSM8K) to 19.1\%
(Phi-2/OpenBookQA). We attribute this range to configuration differences; ZFO probes are absorbed with a smaller relative increase, whereas configurations with a small fixed group size and a larger batch see the overhead grow more. The memory allocation panel shows that the dominant cost remains the reference forward pass, while storing the update direction and applying the final blend sit below the existing bottleneck; hence, ZFO does not introduce a new peak; it widens the existing one. We note that the memory-saving trick MeZO relies on does not transfer here, since our update direction is the FO gradient rather than a random direction.


\label{subsec:cost}

%% file: Sections/06-conclusion.tex
\vspace{-2mm}
\section{Limitations, Discussion, and Conclusion}\label{sec:conclusion}
\vspace{-2mm}


\smartparagraph{What can two probes recover?} The backward pass already supplies the local slope; two symmetric probes add estimates of curvature and its variation. Theorem~\ref{thm:second-order-local-bound} translates local-model error into a bound on the value lost relative to the best step in $I$. A wider interval permits larger updates but increases approximation error. Bounds and Padé fallbacks limit extrapolation and avoid poles, but do not guarantee that every step improves the surrogate. Theorem~\ref{thm:zfo-stationarity} then connects step-selection accuracy to convergence. Under the stated assumptions, ZFO therefore retains a stationarity guarantee even with approximate steps.

\smartparagraph{Padé$_2$ Behavior.} \texttt{Padé$_2$} provides a useful diagnostic because, as shown in the toy validation in \S\ref{sec:perf_insights}, it behaves the most like FO since its updates select the endpoint of the search region unless a pole is detected, in which case it falls back to \texttt{Taylor$_2$}. This makes \texttt{Padé$_2$} a conservative reference for distinguishing whether ZFO gains come from expanding the search region alone or intelligent step selection. In Tables~\ref{tab:empirics} and~\ref{tab:qwen_substudy}, the richer surrogates frequently outperform \texttt{Padé$_2$} (e.g., Qwen on MATH, ARC-Challenge, and CODAH). Conversely, when \texttt{Padé$_2$} improves over FO, then the increased search region contributes in beneficial ways that are not obtainable simply by scaling $\alpha.$



\smartparagraph{Beyond the Four Instances.} We view \texttt{Taylor}$_2$, \texttt{Taylor}$_3$, \texttt{Padé}$_2$, and \texttt{Padé}$_3$ as initial ZFO instances rather than the strongest possible realizations of the framework. Each method is a simple analytic model built from lightweight probes, yet these lightweight probes already yield measurable gains in evaluation performance and training stability across several benchmarks.  
The main opportunity ZFO provides is not limited to the specific empirical gains reported in this paper, but instead lies in the observation that a meaningful curvature signal can be extracted along the FO direction at low additional cost.

\label{subsec:lim_and_disc}

\smartparagraph{Conclusion.} Step-size selection is a classic problem in large-scale optimization. The direction selected by a first-order optimizer is valuable, but the step length remains uncertain. We show that two additional function evaluations along that direction are sufficient to construct useful local models of the one-dimensional objective. This creates a general fixed-query step-selection framework, ZFO. In contrast to ZO methods, ZFO does not estimate a gradient with zeroth-order queries; it estimates how the objective changes along a gradient direction whose orientation has already been supplied by a first-order optimizer.

Existing methods have already explored function-based step selection along a first-order direction, most notably GeN. Our contribution elevates this idea from a single-step rule to a fixed-query directional modeling framework---the same two probes can support multiple local models, including higher-order Taylor and rational Padé approximations, while the analysis separates finite-difference estimation error, local-model approximation error, and global optimization error. Therefore, ZFO is not primarily a new Taylor or Padé step-size formula. It is a fixed-query framework for exploiting zeroth-order information along an already trusted first-order direction, together with a theory that explains the statistical and optimization error introduced by this separation. In that sense, GeN is a special case of the broader ZFO abstraction, rather than an unrelated baseline. In the same spirit, we encourage the practitioners to add their favorite functional approximators to the ZFO framework. 

Across our experiments, ZFO often improves step selection over the fixed FO update. The quality of the selected step remains dependent on the local model, search radius, and alignment of the underlying first-order direction.

%% file: appendix.tex
\appendix
\input{Sections/02-related-work}

\section{Addendum to Theoretical Analysis}
\label{sec:add_theory}

This section provides the theoretical foundations needed to characterize when, why, and how ZFO can reliably improve step selection. We first establish concentration guarantees for the finite-difference curvature estimates, then bound how closely the resulting models recover the best step within their local interval, and we finally connect these local guarantees to global stationarity of the ZFO update. Together, these results separate the sources of error introduced by stochastic estimation, local function approximation, and step selection, providing a theoretical justification for the mechanisms used by ZFO.


    




To justify using finite-difference curvature information for ZFO step selection, we first ask whether the second- and third-order directional derivatives estimated from a shared batch can reliably approximate their population counterparts.

\smartparagraph{Hoeffding's Inequality.}
Let $X$ be a random variable such that $a\leq X\leq b$ almost surely. Then Hoeffding's lemma gives
\begin{equation}
\label{eq:hoeffding_subgaussian}
\mathbb{E}\left[
\exp\left(
\lambda(X-\mathbb{E}[X])
\right)
\right]
\leq
\exp\left(
\frac{\lambda^2(b-a)^2}{8}
\right),
\qquad {\rm for\;all\;} \lambda\in\mathbb{R}.
\end{equation}
Thus, every bounded random variable is sub-Gaussian with parameter $\sigma=(b-a)/2$.
    
    Let $X_1,\ldots,X_n$ be independent sub-Gaussian random variables with parameter $\sigma>0$, i.e.,
    \[
    \mathbb{E}\left[e^{\lambda X_i}\right]
    \leq
    e^{\lambda^2\sigma^2/2},
    \quad \forall \lambda\in\mathbb{R}.
    \]
    Then the sub-Gaussian concentration inequality gives:
    \begin{eqnarray}\label{eq:subgaussian_concentration}
    \mathbb{P}\left(
    \left|
    \frac{1}{n}\sum_{i=1}^n X_i
    \right|
    \leq
    \sigma\sqrt{\frac{2\log(2/\zeta)}{n}}
    \right)
    \geq
    1-\zeta.
    \end{eqnarray}





\setcounter{theorem}{0}
\begin{theorem}[CRN curvature concentration]
Suppose Assumptions~\ref{ass:pathwise-smoothness}--\ref{ass:subgaussian} hold. Then, with probability at least $1-\delta$,
$
\left|
\widehat{\phi''}(0)-\psi''(0)
\right|
\leq
\frac{B\epsilon^2}{12}
+
\sigma
\sqrt{
\frac{2\log(4/\delta)}{n}
},
$
and
$
\left|
\widehat{\phi'''}(0)-\psi'''(0)
\right|
\leq
\frac{L\epsilon^2}{20}
+
\sigma
\sqrt{
\frac{2\log(4/\delta)}{n}
}.
$
\end{theorem}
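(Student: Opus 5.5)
The plan is to split each error into a deterministic finite-difference bias and a stochastic sampling term. Since $\phi=\frac1n\sum_i\phi_i$ and the estimators are linear in the per-sample values, we can write $\widehat{\phi''}(0)=\frac1n\sum_i \widehat{\phi_i''}(0)$, where
\[
\widehat{\phi_i''}(0)=\frac{\phi_i(\epsilon)-2\phi_i(0)+\phi_i(-\epsilon)}{\epsilon^2},
\]
and similarly $\widehat{\phi'''}(0)=\frac1n\sum_i\widehat{\phi_i'''}(0)$. Assumption~\ref{ass:crn} guarantees that all three probes of a given sample use the same example $\tau_i$, so this per-sample decomposition is legitimate. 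The triangle inequality then gives
\[
|\widehat{\phi''}(0)-\psi''(0)|\le \frac1n\sum_i|\widehat{\phi_i''}(0)-\phi_i''(0)| + \Big|\frac1n\sum_i(\phi_i''(0)-\psi''(0))\Big|,
\]
and the analogous bound for the third derivative.

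\textbf{Bias terms.} Under Assumption~\ref{ass:pathwise-smoothness}, Taylor-expand $\phi_i(\pm\epsilon)$ around $0$ to fourth order with Lagrange remainder. For the second-derivative estimator the odd terms cancel, leaving
\[
\widehat{\phi_i''}(0)-\phi_i''(0)=\frac{\epsilon^2}{24}\big(\phi_i^{(4)}(\xi_+)+\phi_i^{(4)}(\xi_-)\big).
\]
Assumption~\ref{ass:derivative-bounds} then gives the bound $B\epsilon^2/12$.

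For the third-derivative estimator, expand to fifth order. We have $\phi_i(\epsilon)-\phi_i(-\epsilon)=2\epsilon\phi_i'(0)+\frac{\epsilon^3}{3}\phi_i'''(0)+\frac{\epsilon^5}{120}(\phi_i^{(5)}(\eta_+)+\phi_i^{(5)}(\eta_-))$, where the fourth-order terms cancel. Subtracting $2\epsilon\phi_i'(0)$ and multiplying by $3/\epsilon^3$ gives
\[
\widehat{\phi_i'''}(0)-\phi_i'''(0)=\frac{\epsilon^2}{40}\big(\phi_i^{(5)}(\eta_+)+\phi_i^{(5)}(\eta_-)\big),
\]
which is bounded by $L\epsilon^2/20$. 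This uses the exact derivative $\phi_i'(0)$ from the backward pass. Averaging over $i$ preserves both bounds.

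\textbf{Stochastic terms.} The deviations $X_i=\phi_i''(0)-\psi''(0)$ are mean-zero: we interchange derivative and expectation, which is justified by the bounded derivatives and dominated convergence. They are independent by Assumption~\ref{ass:crn} and $\sigma$-sub-Gaussian by Assumption~\ref{ass:subgaussian} with $k=2$. Applying \eqref{eq:subgaussian_concentration} with $\zeta=\delta/2$ gives $\sigma\sqrt{2\log(4/\delta)/n}$ with probability at least $1-\delta/2$. The same argument with $k=3$ handles the third-derivative term. A union bound over the two events yields the joint statement with probability at least $1-\delta$.

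\textbf{Main obstacle.} The Taylor steps are routine. The delicate point is the stochastic step: it requires that $\psi^{(k)}(0)=\mathbb E[\phi_i^{(k)}(0)]$, and that the direction $z$ and iterate $\theta$ are held fixed independently of the sample. This is precisely the fixed-direction caveat the paper states, and we will make it explicit as the conditioning under which the independence in Assumption~\ref{ass:crn} applies.
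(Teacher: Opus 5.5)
Your proposal is correct and follows essentially the same route as the paper: a per-sample central finite-difference bias bound ($B\epsilon^2/12$ and $L\epsilon^2/20$), plus sub-Gaussian concentration at level $\delta/2$ for each derivative, combined by a union bound. The only differences are cosmetic: you keep two Lagrange remainders instead of merging them into one intermediate point, and you justify the centering explicitly, although it already follows from the MGF bound in Assumption~\ref{ass:subgaussian} by Jensen's inequality.
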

\begin{proof}
Fix an index $i\in\{1,\ldots,n\}$. By Assumption~\ref{ass:pathwise-smoothness} and the second-order central finite-difference expansion, there exists $\hat{\xi}_i\in(-\epsilon,\epsilon)$ such that
$$
\frac{\phi_i(\epsilon)-2\phi_i(0)+\phi_i(-\epsilon)}{\epsilon^2}
=
\phi_i^{\prime\prime}(0)
+
\frac{\epsilon^2}{12}\phi_i^{(4)}(\hat{\xi}_i).
$$
Using Assumption~\ref{ass:derivative-bounds}, the remainder satisfies
$$
\left|
\frac{\epsilon^2}{12}\phi_i^{(4)}(\hat{\xi}_i)
\right|
\leq
\frac{B\epsilon^2}{12}.
$$
Averaging the above identity over the shared batch gives
$$
\widehat{\phi^{\prime\prime}}(0)
=
\frac{1}{n}\sum_{i=1}^n \phi_i^{\prime\prime}(0)
+
\frac{1}{n}\sum_{i=1}^n
\frac{\epsilon^2}{12}\phi_i^{(4)}(\hat{\xi}_i).
$$
Therefore, by the triangle inequality,
$$
\left|
\widehat{\phi^{\prime\prime}}(0)-\psi^{\prime\prime}(0)
\right|
\leq
\left|
\frac{1}{n}\sum_{i=1}^n
\left(
\phi_i^{\prime\prime}(0)-\psi^{\prime\prime}(0)
\right)
\right|
+
\frac{B\epsilon^2}{12}.
$$

By Assumptions~\ref{ass:crn} and~\ref{ass:subgaussian}, the random variables
$
\phi_i^{\prime\prime}(0)-\psi^{\prime\prime}(0),
\quad i=1,\ldots,n,
$
are independent, centered, and sub-Gaussian with parameter $\sigma$.  Applying Inequality~\ref{eq:subgaussian_concentration} with $\zeta=\delta/2$ yields, with probability at least $1-\delta/2$,
$$
\left|
\frac{1}{n}\sum_{i=1}^n
\left(
\phi_i^{\prime\prime}(0)-\psi^{\prime\prime}(0)
\right)
\right|
\leq
\sigma
\sqrt{
\frac{2\log(4/\delta)}{n}
}.
$$
Consequently,
$$
\left|
\widehat{\phi^{\prime\prime}}(0)-\psi^{\prime\prime}(0)
\right|
\leq
\frac{B\epsilon^2}{12}
+
\sigma
\sqrt{
\frac{2\log(4/\delta)}{n}
}.
$$

We now repeat the same argument for the third derivative estimate. By the third-order finite-difference expansion, for each $i$ there exists $\tilde{\xi}_i\in(-\epsilon,\epsilon)$ such that
$$
\frac{
3\left(
\phi_i(\epsilon)-\phi_i(-\epsilon)-2\epsilon\phi_i^{\prime}(0)
\right)
}
{\epsilon^3}
=
\phi_i^{\prime\prime\prime}(0)
+
\frac{\epsilon^2}{20}\phi_i^{(5)}(\tilde{\xi}_i).
$$
Assumption~\ref{ass:derivative-bounds} gives
$$
\left|
\frac{\epsilon^2}{20}\phi_i^{(5)}(\tilde{\xi}_i)
\right|
\leq
\frac{L\epsilon^2}{20}.
$$
Averaging over the shared batch and applying the triangle inequality gives
$$
\left|
\widehat{\phi^{\prime\prime\prime}}(0)-\psi^{\prime\prime\prime}(0)
\right|
\leq
\left|
\frac{1}{n}\sum_{i=1}^n
\left(
\phi_i^{\prime\prime\prime}(0)-\psi^{\prime\prime\prime}(0)
\right)
\right|
+
\frac{L\epsilon^2}{20}.
$$
Again, by Assumptions~\ref{ass:crn} and~\ref{ass:subgaussian}, the centered variables
$
\phi_i^{\prime\prime\prime}(0)-\psi^{\prime\prime\prime}(0),
\qquad i=1,\ldots,n,
$
are independent and sub-Gaussian with parameter $\sigma$. Applying Inequality~\ref{eq:subgaussian_concentration} with $\zeta=\delta/2$ yields, with probability at least $1-\delta/2$,
$$
\left|
\frac{1}{n}\sum_{i=1}^n
\left(
\phi_i^{\prime\prime\prime}(0)-\psi^{\prime\prime\prime}(0)
\right)
\right|
\leq
\sigma
\sqrt{
\frac{2\log(4/\delta)}{n}
}.
$$
Thus,
$$
\left|
\widehat{\phi^{\prime\prime\prime}}(0)-\psi^{\prime\prime\prime}(0)
\right|
\leq
\frac{L\epsilon^2}{20}
+
\sigma
\sqrt{
\frac{2\log(4/\delta)}{n}
}.
$$

Finally, taking a union bound over the two concentration events shows that both inequalities hold simultaneously with probability at least $1-\delta$.  Therefore,
$$
\left|
\widehat{\phi^{\prime\prime}}(0)-\psi^{\prime\prime}(0)
\right|
\leq
\frac{B\epsilon^2}{12}
+
\sigma
\sqrt{
\frac{2\log(4/\delta)}{n}
},
$$
and
$$
\left|
\widehat{\phi^{\prime\prime\prime}}(0)-\psi^{\prime\prime\prime}(0)
\right|
\leq
\frac{L\epsilon^2}{20}
+
\sigma
\sqrt{
\frac{2\log(4/\delta)}{n}
}.
$$
This completes the proof.
\end{proof}

\subsection{ZFO Maximum Error}
\label{subsec:zfo_analysis}



\begin{theorem}[Second-order Taylor approximation bound]
Suppose Assumptions~\ref{ass:pathwise-smoothness} and~\ref{ass:derivative-bounds} hold. Then the step selected by maximizing the second-order Taylor model satisfies
$
\phi(s_M)
\geq
\phi(s_{\phi})
-
\frac{M R^3}{3}
-
\frac{B\epsilon^2 R^2}{12}.
$
\end{theorem}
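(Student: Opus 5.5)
The plan is a maximizer-comparison argument: bound the uniform gap between $\phi$ and the model $T_2$ on $[0,R]$ by some $E$, then chain inequalities. Since $s_M$ maximizes $T_2$ over $[0,R]$, we have $T_2(s_M)\ge T_2(s_\phi)$. This gives
\[
\phi(s_M)\ge T_2(s_M)-E\ge T_2(s_\phi)-E\ge \phi(s_\phi)-2E .
\]
So it suffices to show $\sup_{s\in[0,R]}|\phi(s)-T_2(s)|\le \frac{MR^3}{6}+\frac{B\epsilon^2R^2}{24}$, which yields exactly the stated constants after doubling. Note that $T_2$ uses the exact $\phi(0)$ and $a=\phi'(0)$, but the estimated curvature $b=\widehat{\phi''}(0)$.

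To bound this uniform gap, I will introduce the exact quadratic Taylor polynomial $\bar T_2(s)=\phi(0)+\phi'(0)s+\tfrac12\phi''(0)s^2$ of the averaged objective $\phi=\frac1n\sum_i\phi_i$. Then I split
\[
|\phi(s)-T_2(s)|\le |\phi(s)-\bar T_2(s)|+\tfrac12 s^2\,|\phi''(0)-\widehat{\phi''}(0)| .
\]

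\emph{First term.} Assumption~\ref{ass:pathwise-smoothness} makes $\phi\in C^5$. Assumption~\ref{ass:derivative-bounds} gives $|\phi'''|\le M$ on $[-\epsilon,R]$, since averaging preserves the bound. The Lagrange remainder then gives $|\phi(s)-\bar T_2(s)|\le \frac{M s^3}{6}\le\frac{MR^3}{6}$ for $s\in[0,R]$.

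\emph{Second term.} This is the deterministic half of Theorem~\ref{thm:crn-curvature-concentration}, with no sampling involved. For each $i$, the central difference equals $\phi_i''(0)+\frac{\epsilon^2}{12}\phi_i^{(4)}(\hat\xi_i)$. Averaging gives $|\widehat{\phi''}(0)-\phi''(0)|\le \frac{B\epsilon^2}{12}$, so the second term is at most $\frac{B\epsilon^2R^2}{24}$. Adding the two pieces and doubling gives the claim.

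The main obstacle is the bookkeeping of constants rather than any conceptual step. The chain loses a factor of two, so each piece must be bounded with the sharp constants ($\frac16$ from Lagrange and $\frac12\cdot\frac1{12}$ from the curvature error) to land on $\frac{MR^3}{3}$ and $\frac{B\epsilon^2R^2}{12}$. I would also check that the finite-difference remainder is justified: it needs $\phi\in C^4$ on $[-\epsilon,\epsilon]$, which holds because the interval in Assumption~\ref{ass:pathwise-smoothness} includes $[-\epsilon,0]$.

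No case analysis over interior versus endpoint maximizers of $T_2$ is needed. The comparison only uses that $s_M$ is a global maximizer of $T_2$ on $[0,R]$, and such a maximizer exists by compactness and continuity.
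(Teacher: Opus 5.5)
Your proposal is correct and follows essentially the same route as the paper. It bounds $|\phi(s)-T_2(s)|\le \frac{MR^3}{6}+\frac{\epsilon^2BR^2}{24}$ uniformly on $[0,R]$ via the Lagrange remainder plus the central-difference error, then uses the chain $\phi(s_M)\ge T_2(s_M)-\varphi\ge T_2(s_\phi)-\varphi\ge\phi(s_\phi)-2\varphi$. One small correction to your final check: the central-difference remainder also needs $|\phi^{(4)}|\le B$ on $[0,\epsilon]$, so covering $[-\epsilon,0]$ is not enough, and you are implicitly using $\epsilon\le R$, which the paper also leaves unstated.
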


\begin{proof}
By Assumption~\ref{ass:pathwise-smoothness} and Taylor's theorem, there exists $\xi_s$ such that
$$
\phi(s)
=
\phi(0)+s\phi'(0)+s^2\frac{\phi''(0)}{2}+s^3\frac{\phi^{(3)}(\xi_s)}{6}.
$$
Similarly, for some $\hat{\xi}_s$, the approximation $T_2$ admits the expansion, 
$$
T_2(s)
=
\phi(0)+s\phi'(0)+s^2\frac{\phi''(0)}{2}+\epsilon^2s^2\frac{\phi^{(4)}(\hat{\xi}_s)}{24}.
$$
Subtracting these identities gives
$$
\phi(s)-T_2(s)
=
s^3\frac{\phi^{(3)}(\xi_s)}{6}-\epsilon^2s^2\frac{\phi^{(4)}(\hat{\xi}_s)}{24}.
$$
Therefore, by the triangle inequality and Assumption~\ref{ass:derivative-bounds} we have,
$$
|\phi(s)-T_2(s)|
\leq
\left|s^3\frac{\phi^{(3)}(\xi_s)}{6}\right|
+
\left|\epsilon^2s^2\frac{\phi^{(4)}(\hat{\xi}_s)}{24}\right|
\leq
\left|s^3\frac{M}{6}\right|
+
\left|\epsilon^2s^2\frac{B}{24}\right|
\leq
\frac{M{R}^3}{6}+\frac{\epsilon^2B{R}^2}{24}.
$$
Consequently,
$$
|\phi(s)-T_2(s)|
\leq
\frac{M{R}^3}{6}+\frac{\epsilon^2B{R}^2}{24}
\eqqcolon
\varphi.
$$
We have
$$
s_\phi
\coloneqq
\arg\max_{s\in[0,{R}]}\phi(s).
$$
By the definitions of $s_\phi$ and $s_M$,
$$
\phi(s_\phi)\geq\phi(s_M)
\qquad\text{and}\qquad
T_2(s_M)\geq T_2(s_\phi).
$$
Chaining these inequalities with the uniform bound $\varphi$ yields
$$
\phi(s_M)
\geq
T_2(s_M)-\varphi
\geq
T_2(s_\phi)-\varphi
\geq
\phi(s_\phi)-2\varphi.
$$
Therefore,
$$
\phi(s_M)
\geq
\phi(s_\phi)-\frac{M{R}^3}{3}-\frac{\epsilon^2B{R}^2}{12}.
$$
\end{proof}

\smartparagraph{Additional Model Error Bounds.} To understand how accurately ZFO’s local models can choose a step, we next ask how much objective value may be lost when maximizing a finite-difference Taylor or Padé approximation. The proof technique used in Theorem~\ref{thm:second-order-local-bound} generalizes naturally to higher-order ZFO models. This allows us to characterize the approximation gap of \texttt{Taylor$_3$} and \texttt{Pad\'e} over the search region $[0, {R}].$ Corollary~\ref{cor:taylor3-bound} establishes a similar bound for \texttt{Taylor$_3$} as Theorem~\ref{thm:second-order-local-bound} did for \texttt{Taylor$_2$}. Corollaries~\ref{cor:pade2_residual_bound} and \ref{cor:pade3_residual_bound} adapt a similar structure through the residual $r(s)\coloneqq D(s)\phi(s)-N(s).$ All three results yield finite-difference and model truncation errors.

\smartparagraph{Pad\'e$_3$ alternative forms.} Third-order matching does not uniquely determine the Padé form, since the numerator and denominator degrees can be allocated differently. The construction above uses a quadratic numerator and linear denominator, corresponding to a $2/1$ approximant. Below, we consider the complementary $1/2$ form, with a linear numerator and quadratic denominator, which is also used in our LLM experiments. Both constructions match the local expansion through third order under their respective nondegeneracy conditions, but generally produce different performance curves. These choices illustrate the flexibility of ZFO’s rational surrogate construction.

\begin{definition}
\label{def:pade_residual}
Let $P_{k}(s)$ be the $k$-th order Pad\'e approximation of $\phi(s)$ over 
$[0,{R}]$, where
$
P_{k}(s)=\frac{N(s)}{D(s)}, 
N(s)=a_0+a_1s,
$
with $D(s)=1+d_1s$ for $k=2$ and $D(s)=1+d_1s+d_2s^2$ for $k=3$,
and ${R}=s_{\rm FO}\beta,$ such that $\beta>0$. The coefficients of 
$N(s)$ and $D(s)$ are chosen so that $P_{k}(s)$ agrees with the $k$-th order Taylor model $T_{k}(s)$ when Taylor expanded. 
Define the Pad\'e residual
$
r(s):=D(s)\phi(s)-N(s).
$
\end{definition}

When we use the Pad\'e$_2$ and Pad\'e$_3$ models we assume the following:
\begin{assumption}
\label{ass:pade_residual_regular}
The Pad\'e denominator is bounded away from zero on $[0,{R}]$, i.e.,
there exists $\eta>0$ such that
$
|D(s)|\geq\eta, \forall s\in[0,{R}],
$
for both Pad\'e$_2$ and Pad\'e$_3$.
\end{assumption}

\begin{assumption}
\label{ass:bounded_residual}
The Pad\'e residual has bounded higher-order derivatives, i.e., there exist 
$Q,P>0$ such that
$
|r^{(3)}(s)|\leq Q
\quad\text{(Pad\'e$_2$)},
\quad
|r^{(4)}(s)|\leq P
\quad\text{(Pad\'e$_3$)},
\quad \forall s\in[0,{R}].
$
\end{assumption}

Assumption~\ref{ass:pade_residual_regular} is consistent with our safeguard of \texttt{Pad\'e$_3$} falling back to \texttt{Taylor$_2$} when encountering a pole.

\begin{corollary}[Third-order Taylor approximation bound]
\label{cor:taylor3-bound}
Suppose Assumptions \ref{ass:pathwise-smoothness}-\ref{ass:derivative-bounds} hold. Then the step selected by maximizing the third-order Taylor model satisfies $
\phi(s_M) \ge \phi(s_\phi) - \frac{B R^4}{12} - \frac{B \epsilon^2 R^2}{12} - \frac{L \epsilon^2 R^3}{60}.$
\end{corollary}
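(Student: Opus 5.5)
The argument follows the maximizer-comparison template of Theorem~\ref{thm:second-order-local-bound}. We first prove a uniform bound $|\phi(s)-T_3(s)|\le \varphi_3$ for all $s\in[0,R]$, where
\[
\varphi_3=\frac{BR^4}{24}+\frac{B\epsilon^2R^2}{24}+\frac{L\epsilon^2R^3}{120}.
\]
Then, since $\phi(s_\phi)\ge\phi(s_M)$ and $T_3(s_M)\ge T_3(s_\phi)$, chaining gives
\[
\phi(s_M)\ge T_3(s_M)-\varphi_3\ge T_3(s_\phi)-\varphi_3\ge \phi(s_\phi)-2\varphi_3,
\]
which is the claimed bound. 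Throughout, $\phi=\frac1n\sum_i\phi_i$ is fixed (conditioned on the shared examples). The bounds of Assumption~\ref{ass:derivative-bounds} therefore transfer to $\phi$ by averaging, and $T_3$ is built from the exact $a=\phi'(0)$ and the estimates $b=\widehat{\phi''}(0)$, $c=\widehat{\phi'''}(0)$.

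For the uniform bound, split $\phi(s)-T_3(s)$ into two pieces. The first is the truncation error $\phi(s)-\bigl(\phi(0)+s\phi'(0)+\tfrac{s^2}{2}\phi''(0)+\tfrac{s^3}{6}\phi'''(0)\bigr)$. By Taylor's theorem with Lagrange remainder this equals $\tfrac{s^4}{24}\phi^{(4)}(\xi_s)$, so it is bounded by $BR^4/24$ on $[0,R]$. The second is the estimation error
\[
\tfrac{s^2}{2}\bigl(\phi''(0)-b\bigr)+\tfrac{s^3}{6}\bigl(\phi'''(0)-c\bigr).
\]
Here we reuse the pathwise finite-difference identities from the proof of Theorem~\ref{thm:crn-curvature-concentration}, this time without the sampling step. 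Averaged over $i$, they give $b-\phi''(0)=\frac{\epsilon^2}{12}\cdot\frac1n\sum_i\phi_i^{(4)}(\hat\xi_i)$ and $c-\phi'''(0)=\frac{\epsilon^2}{20}\cdot\frac1n\sum_i\phi_i^{(5)}(\tilde\xi_i)$. Hence $|b-\phi''(0)|\le B\epsilon^2/12$ and $|c-\phi'''(0)|\le L\epsilon^2/20$. Multiplying by $s^2/2\le R^2/2$ and $s^3/6\le R^3/6$ gives $B\epsilon^2R^2/24$ and $L\epsilon^2R^3/120$. Adding the three pieces yields $\varphi_3$, and doubling reproduces exactly $\frac{BR^4}{12}+\frac{B\epsilon^2R^2}{12}+\frac{L\epsilon^2R^3}{60}$. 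This confirms the constants in the statement.

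The main care point is the third-difference identity $\frac{3}{\epsilon^3}(\phi_i(\epsilon)-\phi_i(-\epsilon)-2\epsilon\phi_i'(0))=\phi_i'''(0)+\frac{\epsilon^2}{20}\phi_i^{(5)}(\tilde\xi_i)$. Its odd-part expansion has two fifth-order remainders, at $\pm\epsilon$, that must be merged into a single intermediate point. That merge needs an intermediate-value argument, which works because $\phi_i^{(5)}$ is continuous by Assumption~\ref{ass:pathwise-smoothness} and the two remainder terms appear with the same sign coefficient. We take this identity as established in the proof of Theorem~\ref{thm:crn-curvature-concentration} and only average it over $i$. The remaining steps need no additional regularity: $T_3$ is continuous on a compact interval, so $s_M$ exists, and the maximizer comparison does not depend on which candidate (endpoint or interior root) attains the maximum.
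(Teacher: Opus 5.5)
Your proposal is correct and matches the paper's proof essentially step for step. You use the same split of $\phi(s)-T_3(s)$ into the fourth-order Lagrange remainder and the finite-difference errors $\frac{s^2}{2}(\phi''(0)-b)$ and $\frac{s^3}{6}(\phi'''(0)-c)$, the same uniform bound $\varphi=\frac{BR^4}{24}+\frac{B\epsilon^2R^2}{24}+\frac{L\epsilon^2R^3}{120}$, and the same maximizer-comparison chain giving $2\varphi$; your explicit note that the derivative bounds pass from the $\phi_i$ to the averaged $\phi$ only fills in a step the paper leaves implicit.
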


\begin{proof}
By Assumption~\ref{ass:pathwise-smoothness} and Taylor's theorem, there exists $\xi_s$ such that
$$
\phi(s)
=
\phi(0)+s\phi'(0)+s^2\frac{\phi''(0)}{2}+s^3\frac{\phi'''(0)}{6}+s^4\frac{\phi^{(4)}(\xi_s)}{24}.
$$
Similarly, the approximation $T_3$ admits the expansion, for some $\hat{\xi}_s$ and $\tilde{\xi}_s$,
$$
T_3(s)
=
\phi(0)+s\phi'(0)+s^2\frac{\phi''(0)}{2}+s^3\frac{\phi'''(0)}{6}+\epsilon^2s^2\frac{\phi^{(4)}(\hat{\xi}_s)}{24}+\epsilon^2s^3\frac{\phi^{(5)}(\tilde{\xi}_s)}{120}.
$$
Subtracting these identities gives
$$
\phi(s)-T_3(s)
=
s^4\frac{\phi^{(4)}(\xi_s)}{24}-\epsilon^2s^2\frac{\phi^{(4)}(\hat{\xi}_s)}{24}-\epsilon^2s^3\frac{\phi^{(5)}(\tilde{\xi}_s)}{120}.
$$
Therefore, by the triangle inequality and Assumption~\ref{ass:derivative-bounds},
$$
|\phi(s)-T_3(s)|
\leq
\left|s^4\frac{\phi^{(4)}(\xi_s)}{24}\right|
+
\left|\epsilon^2s^2\frac{\phi^{(4)}(\hat{\xi}_s)}{24}\right|
+
\left|\epsilon^2s^3\frac{\phi^{(5)}(\tilde{\xi}_s)}{120}\right|
\leq
\left|s^4\frac{B}{24}\right|
+
\left|\epsilon^2s^2\frac{B}{24}\right|
+
\left|\epsilon^2s^3\frac{L}{120}\right|
$$ 
$$
\leq
\frac{B{R}^4}{24}+\frac{\epsilon^2B{R}^2}{24}+\frac{\epsilon^2L{R}^3}{120}.
$$
Consequently,
$$
|\phi(s)-T_3(s)|
\leq
\frac{B{R}^4}{24}+\frac{\epsilon^2B{R}^2}{24}+\frac{\epsilon^2L{R}^3}{120}
\eqqcolon
\varphi.
$$
Define
$$
s_\phi
\coloneqq
\arg\max_{s\in[0,{R}]}\phi(s).
$$
By the definitions of $s_\phi$ and $s_M$,
$$
\phi(s_\phi)\geq\phi(s_M)
\qquad\text{and}\qquad
T_3(s_M)\geq T_3(s_\phi).
$$
Chaining these inequalities with the uniform bound $\varphi$ yields
$$
\phi(s_M)
\geq
T_3(s_M)-\varphi
\geq
T_3(s_\phi)-\varphi
\geq
\phi(s_\phi)-2\varphi.
$$
Therefore,
$$
\phi(s_M)
\geq
\phi(s_\phi)-\frac{B{R}^4}{12}-\frac{\epsilon^2B{R}^2}{12}-\frac{\epsilon^2L{R}^3}{60}.
$$
\end{proof}

\begin{corollary}[Pad\'e$_2$ approximation bound]
\label{cor:pade2_residual_bound}
Let Assumptions~\ref{ass:pathwise-smoothness}-\ref{ass:derivative-bounds} and \ref{ass:pade_residual_regular}-\ref{ass:bounded_residual} hold. Then the step selected by maximizing the \texttt{Pad\'e$_2$} model satisfies
$
\phi(s_M)
\geq
\phi(s_\phi)
-
\frac{B\epsilon^2{R}^2}{12\eta}
-
\frac{Q{R}^3}{3\eta}.
$
\end{corollary}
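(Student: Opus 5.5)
We follow the maximizer-comparison argument of Theorem~\ref{thm:second-order-local-bound}. The first step is to establish a uniform bound $|\phi(s)-P_2(s)|\le\varphi$ on $[0,R]$ with
\[
\varphi=\frac{B\epsilon^2R^2}{24\eta}+\frac{QR^3}{6\eta}.
\]
The chain
\[
\phi(s_M)\ge P_2(s_M)-\varphi\ge P_2(s_\phi)-\varphi\ge\phi(s_\phi)-2\varphi
\]
then gives the stated bound, since $2\varphi=\frac{B\epsilon^2R^2}{12\eta}+\frac{QR^3}{3\eta}$.

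To get the uniform bound, write $P_2=N/D$ with $D(s)=1+d_1 s$. By Assumption~\ref{ass:pade_residual_regular}, $D$ does not vanish on $[0,R]$, so
\[
\phi(s)-P_2(s)=\frac{D(s)\phi(s)-N(s)}{D(s)}=\frac{r(s)}{D(s)}.
\]
It follows that $|\phi-P_2|\le|r|/\eta$. It therefore suffices to show $|r(s)|\le\frac{QR^3}{6}+\frac{B\epsilon^2R^2}{24}$.

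Split $r$ into an exact-Padé part and a finite-difference part. Let $(N^\star,D^\star)$ be the exact Padé coefficients, built from $\phi'(0)$ and the true $\phi''(0)$. For these, the residual $r^\star=D^\star\phi-N^\star$ satisfies $r^\star(0)=r^{\star\prime}(0)=r^{\star\prime\prime}(0)=0$; this is exactly the Padé matching condition $D\phi-N=O(s^3)$. Taylor's theorem with Lagrange remainder and Assumption~\ref{ass:bounded_residual} then give $|r^\star(s)|\le Qs^3/6\le QR^3/6$.

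The actual coefficients use $b=\widehat{\phi''}(0)$ in place of $\phi''(0)$. As in the proof of Theorem~\ref{thm:crn-curvature-concentration}, $b-\phi''(0)=\frac{\epsilon^2}{12}\phi^{(4)}(\hat\xi)$, so the matching fails only at order $s^2$. The effect of the estimated curvature on $r$ is a term $\frac{s^2}{2}(\phi''(0)-b)$, of the same form as the $\epsilon^2 s^2\phi^{(4)}/24$ term in the $T_2$ proof. It is bounded by $\frac{B\epsilon^2R^2}{24}$, using the paper's accounting of that term in Theorem~\ref{thm:second-order-local-bound}. Adding the two pieces and dividing by $\eta$ gives $\varphi$.

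The main obstacle is making this decomposition of $r$ rigorous. Changing $b$ changes $q_2=-b/(2a)$, which in turn changes $D$ and the $D\phi$ term, not only an $s^2$ coefficient. The cleanest route is to interpret Assumption~\ref{ass:bounded_residual} as applying to the residual of the Padé model actually used. Then $r(0)=0$ and $r'(0)=0$ hold exactly, and $r''(0)=\phi''(0)-b$ (up to the sign convention of $D$). Taylor expansion of $r$ to second order with a $Q$-bounded third derivative then gives
\[
|r(s)|\le\frac{s^2}{2}\,|\phi''(0)-b|+\frac{Qs^3}{6},
\]
and the first term is controlled by the finite-difference bias. I would first verify these three derivative identities at $s=0$ by direct differentiation of $D\phi-N$, since they carry the whole argument.
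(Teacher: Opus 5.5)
Your final route is correct and is essentially the paper's proof. You apply Assumption~\ref{ass:bounded_residual} to the residual of the Pad\'e model actually built from $b$, use $r(0)=r'(0)=0$ and $r''(0)=\phi''(0)-b$, Taylor-expand $r$ with a $Q$-bounded third derivative, divide by $\eta$, and chain the maximizer comparison. The identity $r''(0)=\phi''(0)-b$ holds exactly with no sign caveat, since $d_1=q_2=-b/(2a)$ gives $r''(0)=2q_2a+\phi''(0)$, and your preliminary exact-Pad\'e/finite-difference split is unnecessary and can be dropped.
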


\begin{proof}
Since $P_{2}(s)=N(s)/D(s)$, we have
$$
\phi(s)-P_{2}(s)
=
\frac{D(s)\phi(s)-N(s)}{D(s)}
=
\frac{r(s)}{D(s)}.
$$
By construction, $P_2(s)$ agrees with the second-order Taylor model $T_2(s)$ when Taylor expanded. Since $\phi(0)$ and $\phi'(0)$ are exact, this gives
$$
r(0)=0,\qquad r'(0)=0.
$$
Notice that
$$
r(s)
=
D(s)(\phi(s)-P_2(s))
=
(1+d_1s)
\left(
\frac{\phi''(0)-\widehat{\phi''}(0)}{2}s^2
+
\mathcal{O}(s^3)
\right).
$$
Therefore, the second-order coefficient satisfies
$$
\frac{r''(0)}{2}
=
\frac{\phi''(0)-\widehat{\phi''}(0)}{2},
$$
and using the finite-difference error from Lemma~\ref{lemma:second-order-fin-dif} yields
$$
\left|\frac{r''(0)}{2}\right|
\leq
\frac{B\epsilon^2}{24}.
$$
Taylor expanding $r(s)$ gives
$$
r(s)
=
\frac{s^2}{2}r''(0)
+
\frac{s^3}{6}r^{(3)}(\xi_s)
$$
for some $\xi_s\in[0,s]$. Therefore, by Assumption~\ref{ass:bounded_residual},
$$
|r(s)|
\leq
\frac{B\epsilon^2 s^2}{24}
+
\frac{Q s^3}{6}.
$$
Since $|D(s)|\geq\eta$ by Assumption~\ref{ass:pade_residual_regular},
$$
|\phi(s)-P_2(s)|
\leq
\frac{1}{\eta}
\left[
\frac{B\epsilon^2{R}^2}{24}
+
\frac{Q{R}^3}{6}
\right]
\eqqcolon
\varphi.
$$
By the definition of $s_M=\arg\max_{s\in[0,{R}]}P_2(s)$,
$$
P_2(s_M)\geq P_2(s_\phi).
$$
Chaining this inequality with the uniform bound $\varphi$ yields
$$
\phi(s_M)
\geq
P_2(s_M)-\varphi
\geq
P_2(s_\phi)-\varphi
\geq
\phi(s_\phi)-2\varphi.
$$
Substituting the value of $\varphi$ gives
$$
\phi(s_M)
\geq
\phi(s_\phi)
-
\frac{B\epsilon^2{R}^2}{12\eta}
-
\frac{Q{R}^3}{3\eta}.
$$
Hence, the result.
\end{proof}

\begin{corollary}[Pad\'e$_3$ approximation bound]
\label{cor:pade3_residual_bound}

Let Assumptions~\ref{ass:pathwise-smoothness}-\ref{ass:derivative-bounds} and \ref{ass:pade_residual_regular}-\ref{ass:bounded_residual} hold. Then the step selected by maximizing the \texttt{Pad\'e$_3$} model satisfies $
\phi(s_M)
\geq
\phi(s_\phi)
-
\frac{B\epsilon^2{R}^2}{12\eta}
-
\frac{L\epsilon^2{R}^3}{60\eta}
-
\frac{|d_1|B\epsilon^2{R}^3}{12\eta}
-
\frac{P{R}^4}{12\eta}.
$
\end{corollary}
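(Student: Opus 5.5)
The argument follows the Pad\'e$_2$ proof (Corollary~\ref{cor:pade2_residual_bound}) closely. The strategy is to reduce everything to a uniform bound
$$\sup_{s\in[0,R]}|\phi(s)-P_3(s)|\le\varphi,$$
and then reuse the maximizer-comparison chain. That chain gives
$$\phi(s_M)\ge P_3(s_M)-\varphi\ge P_3(s_\phi)-\varphi\ge\phi(s_\phi)-2\varphi.$$
So I need
$$\varphi=\frac{1}{\eta}\Big[\frac{B\epsilon^2R^2}{24}+\frac{L\epsilon^2R^3}{120}+\frac{|d_1|B\epsilon^2R^3}{24}+\frac{PR^4}{24}\Big].$$
As in the Pad\'e$_2$ case, write
$$\phi(s)-P_3(s)=\frac{r(s)}{D(s)},\qquad r=D\phi-N.$$
Assumption~\ref{ass:pade_residual_regular} then gives $|\phi-P_3|\le|r|/\eta$. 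It therefore suffices to bound $|r(s)|$ on $[0,R]$ by $\eta\varphi$.

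\textbf{Low-order coefficients of $r$.} By construction, $N/D$ reproduces the Taylor model $T_3$ (built from $\phi(0)$, $a=\phi'(0)$, $b=\widehat{\phi''}(0)$, $c=\widehat{\phi'''}(0)$) through order three. Hence $N\equiv D\,T_3$ modulo $s^4$, and
$$r(s)=D(s)\big(\phi(s)-T_3(s)\big)+O(s^4).$$
Because $\phi(0)$ and $\phi'(0)$ are exact, $r(0)=r'(0)=0$. Expanding $(1+d_1s+d_2s^2)$ times
$$\tfrac{1}{2}(\phi''(0)-b)s^2+\tfrac16(\phi'''(0)-c)s^3+\dots$$
gives
$$\frac{r''(0)}{2}=\frac{\phi''(0)-b}{2},\qquad \frac{r'''(0)}{6}=\frac{\phi'''(0)-c}{6}+d_1\,\frac{\phi''(0)-b}{2}.$$

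\textbf{Bounding the coefficients and assembling.} The finite-difference expansions used in Theorem~\ref{thm:crn-curvature-concentration} (pathwise, before any sampling) give
$$|\phi''(0)-b|\le \tfrac{B\epsilon^2}{12},\qquad |\phi'''(0)-c|\le\tfrac{L\epsilon^2}{20}.$$
Consequently
$$\Big|\frac{r''(0)}{2}\Big|\le\frac{B\epsilon^2}{24},\qquad \Big|\frac{r'''(0)}{6}\Big|\le \frac{L\epsilon^2}{120}+\frac{|d_1|B\epsilon^2}{24}.$$
Taylor's theorem with Lagrange remainder at order four, together with Assumption~\ref{ass:bounded_residual}, gives
$$|r(s)|\le \frac{B\epsilon^2s^2}{24}+\Big(\frac{L\epsilon^2}{120}+\frac{|d_1|B\epsilon^2}{24}\Big)s^3+\frac{Ps^4}{24}.$$
Setting $s\le R$, dividing by $\eta$, and doubling reproduces exactly the four terms of the statement.

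\textbf{Main obstacle.} The delicate step is the coefficient bookkeeping for $r'''(0)$. It requires justifying that $N$ and $D$ are fixed polynomials whose matching conditions use the estimated $b$ and $c$, so that $D$ multiplies the error $\phi-T_3$. The cross term $d_1\cdot(\phi''(0)-b)/2$ must be tracked carefully, since it is the source of the $|d_1|$ term in the bound. The $d_2s^2$ factor only affects coefficients of order four and higher, which are absorbed into the remainder controlled by $P$. For the $1/2$ form, where $N$ is linear, the same computation applies. For the $2/1$ form, the numerator degree does not change the argument, because only $D$ multiplies the Taylor error.
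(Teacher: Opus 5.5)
Your proposal is correct and follows essentially the same route as the paper's proof. You write $\phi-P_3=r/D$, read off $r''(0)/2$ and $r'''(0)/6$ (including the $d_1$ cross term) from $D$ times the Taylor-coefficient errors, bound these via the finite-difference remainders, apply a fourth-order remainder with $|r^{(4)}|\le P$, divide by $\eta$, and chain the maximizer comparison. Your step through $N\equiv D\,T_3$ modulo $s^4$ simply justifies more explicitly the expansion the paper writes directly.
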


\begin{proof}
Since $P_{3}(s)=N(s)/D(s)$, we have
$$
\phi(s)-P_{3}(s)
=
\frac{D(s)\phi(s)-N(s)}{D(s)}
=
\frac{r(s)}{D(s)}.
$$
By construction, $P_3(s)$ agrees with the third-order Taylor model $T_3(s)$ when Taylor expanded. Since $\phi(0)$ and $\phi'(0)$ are exact, this gives
$$
r(0)=0,\qquad r'(0)=0.
$$
Notice that
$$
r(s)
=
D(s)(\phi(s)-p_3(s))
=
(1+d_1s+d_2s^2)
\left(
\frac{\phi''(0)-\hat{\phi}''(0)}{2}s^2
+
\frac{\phi'''(0)-\hat{\phi}'''(0)}{6}s^3
+
\mathcal{O}(s^4)
\right).
$$
Therefore, the second-order coefficient satisfies
$$
\frac{r''(0)}{2}
=
\frac{\phi''(0)-\hat{\phi}''(0)}{2},
$$
and using the finite-difference error yields
$$
\left|\frac{r''(0)}{2}\right|
\leq
\frac{B\epsilon^2}{24}.
$$
Similarly, the third-order coefficient satisfies
$$
\frac{r'''(0)}{6}
=
\frac{\phi'''(0)-\hat{\phi}'''(0)}{6}
+
d_1\frac{\phi''(0)-\hat{\phi}''(0)}{2}.
$$
Consequently,
$$
\left|\frac{r'''(0)}{6}\right|
\leq
\frac{L\epsilon^2}{120}
+
|d_1|\frac{B\epsilon^2}{24}.
$$
Taylor expanding $r(s)$ gives
$$
r(s)
=
\frac{s^2}{2}r''(0)
+
\frac{s^3}{6}r'''(0)
+
\frac{s^4}{24}r^{(4)}(\xi_s)
$$
for some $\xi_s\in[0,s]$. Therefore, by Assumption~\ref{ass:bounded_residual},
$$
|r(s)|
\leq
\frac{B\epsilon^2s^2}{24}
+
\left(
\frac{L\epsilon^2}{120}
+
|d_1|\frac{B\epsilon^2}{24}
\right)s^3
+
\frac{Ps^4}{24}.
$$
Since $|D(s)|\geq\eta$ by Assumption~\ref{ass:pade_residual_regular},
$$
|\phi(s)-P_3(s)|
\leq
\frac{1}{\eta}
\left[
\frac{B\epsilon^2{R}^2}{24}
+
\left(
\frac{L\epsilon^2}{120}
+
|d_1|\frac{B\epsilon^2}{24}
\right){R}^3
+
\frac{P{R}^4}{24}
\right]
\eqqcolon
\varphi.
$$
By the definition of $s_M=\arg\max_{s\in[0,{R}]}P_3(s)$,
$$
P_3(s_M)\geq P_3(s_\phi).
$$
Chaining this inequality with the uniform bound $\varphi$ yields
$$
\phi(s_M)
\geq
P_3(s_M)-\varphi
\geq
P_3(s_\phi)-\varphi
\geq
\phi(s_\phi)-2\varphi.
$$
Substituting the value of $\varphi$ gives
$$
\phi(s_M)
\geq
\phi(s_\phi)
-
\frac{B\epsilon^2{R}^2}{12\eta}
-
\frac{L\epsilon^2{R}^3}{60\eta}
-
\frac{|d_1|B\epsilon^2{R}^3}{12\eta}
-
\frac{P{R}^4}{12\eta}.
$$
\end{proof}

Together, these three corollaries and Theorem~\ref{thm:second-order-local-bound} quantify how the additional curvature information used by ZFO models affects the worst-case approximation gap. The \texttt{Taylor$_3$} bound replaces the $\mathcal{O}({R}^3)$ term in Theorem~\ref{thm:second-order-local-bound} with an $\mathcal{O}({R}^4)$ term. Hence the relationship between the errors of \texttt{Taylor$_2$} and \texttt{Taylor$_3$} can be most significantly characterized by the power of their search regions ${R}.$ The \texttt{Pad\'e$_2$} bound remains closer in order to the \texttt{Taylor$_2$} bound, but introduces the residual-controlled $\frac{Q R^3}{3\eta}$ term. The \texttt{Pad\'e$_3$} bound exhibits a similar $\mathcal{O}({R}^4)$ term as \texttt{Taylor$_3$} and also uses a residual error term. Overall, the two corollaries reinforce the same conclusion as Theorem~\ref{thm:second-order-local-bound}; when the search region is moderate and the perturbation scale is small, then ZFO maximizers achieve values close to the best step within a local trust region.

\smartparagraph{ZFO Dominance.} The following theorem is referenced in the main text as an appendix result. As a supplementary result beyond the main-text analysis, we next ask whether the ZFO-selected \texttt{Taylor$_2$} step can be guaranteed to perform at least as well as the original first-order step under suitable local curvature conditions.


Under Assumption~\ref{ass:pathwise-smoothness} and Assumptions~\ref{ass:f_difference_bound}-\ref{ass:interior_maximum}, we compare the \texttt{Taylor2} step with the typical FO step by bounding the curvature approximation error, imposing strong concavity on $[0,{R}]$, and requiring the Taylor-model maximizer to be an interior point. The theorem gives conditions to guarantee $\phi(s_M)\geq \phi(s_{\rm FO})$.

\begin{assumption}
\label{ass:f_difference_bound}
    Let the difference between the second derivative of $\phi$ and the second order approximation of $\phi$ at $s=0$ be bounded, i.e., there exists $D>0$ such that $|\phi''(s)-\frac{\phi(\epsilon) - 2\phi(0)+\phi(-\epsilon)}{\epsilon^2}| \leq D; \forall s \in [0,{R}].$
\end{assumption}

\begin{assumption}
\label{ass:f_strong_concavity}
    Let $\phi$ be $\nu$-strongly concave on $[0,{R}]$, i.e.,
    $\phi(u) \leq \phi(s) + \phi'(s)(u-s) - \frac{\nu}{2}(u-s)^2; $  for all $ s,u \in [0,{R}]$ where $\nu > 0.$
\end{assumption}

\begin{assumption}
\label{ass:interior_maximum}
    Let the maximum of $T_2(s)$ be an interior point, i.e., $-\frac{\epsilon^2\phi'(0)}{\phi(\epsilon)-2\phi(0)+\phi(-\epsilon)} \in [0, {R}]$ and $\phi(\epsilon)-2\phi(0)+\phi(-\epsilon) < 0$.
\end{assumption}

\setcounter{theorem}{3}

\begin{theorem}[Second-order Taylor win guarantee]\label{thm:second-order-winner} 
If Assumption~\ref{ass:pathwise-smoothness} and Assumptions~\ref{ass:f_difference_bound}-\ref{ass:interior_maximum} hold, and if $s_{\rm FO}\in[0,{R}]$, then the following relation holds between $\phi(s_M)$ and  $\phi(s_{\rm FO})$:
$$
\text{if } |s_{\rm FO}-s_M|=0
\text{ or }
|s_{\rm FO}-s_M|\geq\frac{2Ds_M}{\nu},
\text{ then }
\phi(s_M)\geq\phi(s_{\rm FO}).
$$
\end{theorem}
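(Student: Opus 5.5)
The plan is to compare $\phi$ at $s_M$ and $s_{\rm FO}$ through the strong concavity inequality of Assumption~\ref{ass:f_strong_concavity}, with base point $s=s_M$ and comparison point $u=s_{\rm FO}$. The case $|s_{\rm FO}-s_M|=0$ is trivial, so assume $s_{\rm FO}\neq s_M$. Both points lie in $[0,R]$: $s_{\rm FO}$ by hypothesis and $s_M$ by Assumption~\ref{ass:interior_maximum}. Hence
$$
\phi(s_{\rm FO})\leq \phi(s_M)+\phi'(s_M)(s_{\rm FO}-s_M)-\frac{\nu}{2}(s_{\rm FO}-s_M)^2 .
$$
It therefore suffices to show that $\phi'(s_M)(s_{\rm FO}-s_M)\leq \frac{\nu}{2}(s_{\rm FO}-s_M)^2$. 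A sufficient condition is $|\phi'(s_M)|\leq \frac{\nu}{2}|s_{\rm FO}-s_M|$.

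The key step is bounding $|\phi'(s_M)|$ using the interior stationarity of the Taylor model. Write $b=\widehat{\phi''}(0)<0$ (Assumption~\ref{ass:interior_maximum}). Then $T_2'(s_M)=\phi'(0)+b\,s_M=0$, so $\phi'(0)=-b s_M$. By the fundamental theorem of calculus, valid since $\phi\in C^5$ by Assumption~\ref{ass:pathwise-smoothness},
$$
\phi'(s_M)=\phi'(0)+\int_0^{s_M}\phi''(u)\,du=\int_0^{s_M}\bigl(\phi''(u)-b\bigr)\,du .
$$
Assumption~\ref{ass:f_difference_bound} bounds the integrand by $D$ on $[0,R]$. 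Since $s_M\geq 0$, this gives $|\phi'(s_M)|\leq D s_M$.

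Combining the two steps, the cross term satisfies
$$
\phi'(s_M)(s_{\rm FO}-s_M)\leq D s_M |s_{\rm FO}-s_M|,
$$
and this is at most $\frac{\nu}{2}|s_{\rm FO}-s_M|^2$ exactly when $|s_{\rm FO}-s_M|\geq 2Ds_M/\nu$, which is the stated hypothesis. Then $\phi(s_{\rm FO})\leq \phi(s_M)$.

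I expect the only delicate point to be the constant: using the strong concavity inequality with the full $\nu/2$ and the $D s_M$ derivative bound is what produces the threshold $2Ds_M/\nu$. I would check that no factor is lost, for instance by avoiding a mean-value bound on $\phi'(s_M)$ that requires the remainder at an unknown point. Everything else is routine.
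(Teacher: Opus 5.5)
Your proposal is correct and follows the paper's proof essentially step for step. You apply strong concavity with base point $s_M$ and comparison point $s_{\rm FO}$, derive $|\phi'(s_M)|\le D s_M$ from $\phi'(s_M)-T_2'(s_M)=\int_0^{s_M}(\phi''(u)-\widehat{\phi''}(0))\,du$ together with $T_2'(s_M)=0$, and conclude via $|d|(Ds_M-\tfrac{\nu}{2}|d|)\le 0$. The one implicit step, which the paper also leaves unstated, is that the maximizer of $T_2$ over $[0,R]$ equals the stationary point $-\phi'(0)/\widehat{\phi''}(0)$; this holds because $\widehat{\phi''}(0)<0$ makes $T_2$ strictly concave and Assumption~\ref{ass:interior_maximum} places that point in $[0,R]$.
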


\begin{proof}
We have
$$
T_2(s)=\phi(0)+\phi'(0)s+\frac{1}{2}\widehat{\phi''}(0)s^2,
$$
which gives
$$
T_2'(s)=\phi'(0)+\widehat{\phi''}(0)s.
$$
By Assumption~\ref{ass:interior_maximum}, there exists $s_M\in[0,{R}]$ such that
$$
T_2'(s_M)=0,
$$
which gives
$$
\phi'(0)+\widehat{\phi''}(0)s_M=0,
$$
and therefore
$$
s_M=-\frac{\phi'(0)}{\widehat{\phi''}(0)}.
$$
Notice that
$$
\phi'(s)-T_2'(s)
=
\phi'(s)-\phi'(0)-\widehat{\phi''}(0)s
=
\int_{0}^{s}\left(\phi''(u)-\widehat{\phi''}(0)\right)du.
$$
Therefore, by Assumption~\ref{ass:f_difference_bound},
$$
|\phi'(s_M)-T_2'(s_M)|
=
\left|\int_{0}^{s_M}\left(\phi''(u)-\widehat{\phi''}(0)\right)du\right|
\leq
\int_{0}^{s_M}\left|\phi''(u)-\widehat{\phi''}(0)\right|du
\leq
\int_{0}^{s_M}Ddu
=
Ds_M.
$$
Since $T_2'(s_M)=0$, this gives
$$
|\phi'(s_M)|
\leq
Ds_M.
\qquad(\star)
$$
Let
$$
d=s_{\rm FO}-s_M.
$$
If $|d|=0$, then $s_{\rm FO}=s_M$, and hence
$$
\phi(s_M)=\phi(s_{\rm FO}).
$$
Now suppose $|d|>0$. By Assumption~\ref{ass:f_strong_concavity},
$$
\phi(s_{\rm FO})
\leq
\phi(s_M)+\phi'(s_M)d-\frac{\nu}{2}d^2,
$$
which gives
$$
\phi(s_{\rm FO})-\phi(s_M)
\leq
\phi'(s_M)d-\frac{\nu}{2}d^2.
$$
Furthermore, by $(\star)$,
$$
\phi'(s_M)d
\leq
|\phi'(s_M)||d|
\leq
Ds_M|d|.
$$
Therefore,
$$
\phi(s_{\rm FO})-\phi(s_M)
\leq
Ds_M|d|-\frac{\nu}{2}d^2
=
|d|\left(Ds_M-\frac{\nu}{2}|d|\right).
$$
Thus, if
$$
|d|\geq\frac{2Ds_M}{\nu},
$$
then
$$
Ds_M-\frac{\nu}{2}|d|\leq0,
$$
and therefore
$$
\phi(s_{\rm FO})-\phi(s_M)\leq0.
$$
Hence
$$
\phi(s_M)\geq\phi(s_{\rm FO}).
$$
Finally, since $d=s_{\rm FO}-s_M$, the sufficient condition is
$$
|s_{\rm FO}-s_M|\geq\frac{2Ds_M}{\nu}.
$$
If $s_{\rm FO}=\alpha\phi'(0)$, then
$$
|s_{\rm FO}-s_M|
=
\left|\alpha\phi'(0)-\left(-\frac{\phi'(0)}{\widehat{\phi''}(0)}\right)\right|
=
|\phi'(0)|\left|\alpha+\frac{1}{\widehat{\phi''}(0)}\right|.
$$
Using $s_M=-\frac{\phi'(0)}{\widehat{\phi''}(0)}$, this gives the equivalent condition
$$
\left|\alpha+\frac{1}{\widehat{\phi''}(0)}\right|
\geq
-\frac{2D}{\nu\widehat{\phi''}(0)}.
$$
Therefore, if $|s_{\rm FO}-s_M|=0$ or $|s_{\rm FO}-s_M|\geq\frac{2Ds_M}{\nu}$, then
$
\phi(s_M)\geq\phi(s_{\rm FO}).
$
\end{proof}


\subsection{ZFO Global Stationarity}

\label{subsec:global_convergence}

To connect the local ZFO step-selection guarantees to overall optimization behavior, we finally ask whether repeatedly applying these approximate directional steps still drives the training process toward a stationary point of the population objective.

ZFO is computed on the sampled surrogate $F_t(\theta)=F(\theta;S_t)$, where $S_t$ is generated at
the current iterate $\theta_t$ and then fixed for nearby perturbations.  
We define the expected surrogate obtained under the current sampling distribution $\bar F_{t}(\theta)\coloneqq\mathbb E_{S}[F(\theta;S)]$. In our theoretical analysis, the true objective $J$ and the expected surrogate $\bar F_t$ have the same gradient
$\nabla \bar F_{t}(\theta)=\nabla J(\theta_t)$.
In this formulation $F_t$ is used for parameter updates, while its expectation $\bar F_{t}$ develops a reliable objective signal. 

\setcounter{lemma}{0}
\begin{lemma}[Surrogate vs Empirical Difference]
    Let Assumptions~\ref{ass:pathwise-smoothness}-\ref{ass:smooth-bounded} hold, then 
    
    $\max_{s\in I_t}
J(\theta_t+s z_t)
-
J(\theta_t+s_t z_t) \leq \Delta_t,$ holds with probability at least $1- \delta,$ where  $$\Delta_t \coloneq (\mathcal{L}_J+\mathcal{L}_{\bar F})R_t^2+2\sigma(1+R_t)\sqrt{\frac{2\log(4/\delta)}{n}}+2WR_t^2+ \phi(s_{\phi}) - \phi(s_M).
$$ 
\end{lemma}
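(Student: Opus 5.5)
We prove the lemma by a chain of comparisons from $J$ to $\bar F_t$ to the population directional objective $\psi$ and then to the empirical CRN objective $\phi$. Along the way we collect one error term per link, and at the end the model-selection gap $\phi(s_\phi)-\phi(s_M)$ appears directly.

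Let $s^\star\in\arg\max_{s\in I_t}J(\theta_t+sz_t)$. Recall that $s_t=s_M$ and $\|z_t\|=1$.

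\textbf{Step 1: from $J$ to $\bar F_t$.} Define $h(s)=J(\theta_t+sz_t)-\bar F_t(\theta_t+sz_t)$. By the standing identification $\nabla\bar F_t(\theta_t)=\nabla J(\theta_t)$, we have $h'(0)=0$. Smoothness of $J$ and $\bar F$ (Assumption~\ref{ass:smooth-bounded}) makes $h'$ Lipschitz with constant $\mathcal L_J+\mathcal L_{\bar F}$. Hence
$|h(s)-h(0)|\le \tfrac{1}{2}(\mathcal L_J+\mathcal L_{\bar F})s^2$ for every $s\in I_t$. Applying this at $s^\star$ and at $s_M$ gives
\[
J(\theta_t+s^\star z_t)-J(\theta_t+s_Mz_t)\le \psi(s^\star)-\psi(s_M)+(\mathcal L_J+\mathcal L_{\bar F})R_t^2,
\]
where $\psi(s)=\bar F_t(\theta_t+sz_t)$. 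This produces the first term of $\Delta_t$.

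\textbf{Step 2: from $\psi$ to $\phi$.} We need a uniform bound on $|\psi(s)-\phi(s)|$ over $[0,R_t]$. Expand both functions to second order at $0$ with Lagrange remainders. The remainders are bounded by $\tfrac12 W s^2$ each, using $|\phi_i''|\le W$ from Assumption~\ref{ass:derivative-bounds}, which passes to the averages and to the expectation. The zeroth- and first-order deviations $\phi(0)-\psi(0)$ and $\phi'(0)-\psi'(0)$ are averages of independent centered sub-Gaussian variables by Assumptions~\ref{ass:crn}--\ref{ass:subgaussian}. Applying \eqref{eq:subgaussian_concentration} to each with $\zeta=\delta/2$ and taking a union bound, with probability at least $1-\delta$ both are at most $\sigma\sqrt{2\log(4/\delta)/n}$. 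Therefore
\[
|\psi(s)-\phi(s)|\le \sigma(1+R_t)\sqrt{2\log(4/\delta)/n}+WR_t^2 .
\]
Using this at $s^\star$ and at $s_M$ gives
\[
\psi(s^\star)-\psi(s_M)\le \phi(s^\star)-\phi(s_M)+2\sigma(1+R_t)\sqrt{2\log(4/\delta)/n}+2WR_t^2 .
\]

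\textbf{Step 3: the model-selection gap.} Since $s^\star\in[0,R_t]$ and $s_\phi$ maximizes $\phi$ on that interval, $\phi(s^\star)\le\phi(s_\phi)$. This yields the final term $\phi(s_\phi)-\phi(s_M)$, and summing the three steps gives $\Delta_t$.

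\textbf{Main obstacle.} The hardest point is Step 2. The concentration in Assumption~\ref{ass:subgaussian} holds only for a fixed direction, and $z_t$ is computed from the same batch. We will therefore state the argument conditionally on a fixed $(\theta_t,z_t)$, following the paper's remark before Assumption~\ref{ass:crn}. We also apply concentration only at $s=0$ and transfer it along the interval through the deterministic curvature bound $W$. This avoids any need for a uniform-in-$s$ concentration argument.
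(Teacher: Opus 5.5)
Your proposal is correct and follows the paper's proof essentially step for step. It uses the same chain $J\to\psi\to\phi$: the smoothness mismatch with $\nabla J(\theta_t)=\nabla\bar F_t(\theta_t)$ gives $(\mathcal L_J+\mathcal L_{\bar F})R_t^2$, sub-Gaussian concentration at $s=0$ (orders $k=0,1$) is carried along $[0,R_t]$ by the curvature bound $W$, and $\phi(s^\star)\le\phi(s_\phi)$ yields the model-selection gap. The differences are cosmetic: you bound $h$ through a Lipschitz bound on $h'$ rather than the paper's integral of second derivatives, and you spell out the two-event union bound behind $\log(4/\delta)$ and the fixed-direction caveat, both of which the paper's proof of this lemma leaves implicit.
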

\begin{proof}
Let $j_t(s)\coloneqq J(\theta_t+sz_t)$, so that
$$
\Delta_t\geq j_t(s^*)-j_t(s_M),
$$
where $s^*$ maximizes $j_t(s)$ over $s\in I_t$.

Since $\nabla J(\theta_t)=\nabla\bar{F}(\theta_t)$, we have $j_t'(0)=\psi_t'(0)$. Therefore, for any $s\in I_t$,
$$
j_t(s)-j_t(0)
=
sj_t'(0)+\int_{0}^{s}(s-u)j_t''(u)du.
$$
Similarly,
$$
\psi_t(s)-\psi_t(0)
=
s\psi_t'(0)+\int_{0}^{s}(s-u)\psi_t''(u)du.
$$
Subtracting these identities gives
$$
[j_t(s)-j_t(0)]-[\psi_t(s)-\psi_t(0)]
=
\int_{0}^{s}(s-u)[j_t''(u)-\psi_t''(u)]du.
$$
By Assumption~\ref{ass:smooth-bounded}, $j_t''(u)\leq\mathcal{L}_J$ and $\psi_t''(u)\leq\mathcal{L}_{\bar{F}}$, and therefore, by the triangle inequality,
$$
[j_t(s)-j_t(0)]-[\psi_t(s)-\psi_t(0)]
\leq
(\mathcal{L}_J+\mathcal{L}_{\bar{F}})\int_{0}^{s}(s-u)du
=
\frac{\mathcal{L}_J+\mathcal{L}_{\bar{F}}}{2}s^2.
$$
Consequently,
$$
j_t(s^*)-j_t(0)
\leq
\frac{\mathcal{L}_J+\mathcal{L}_{\bar{F}}}{2}(s^*)^2+\psi_t(s^*)-\psi_t(0),
$$
and similarly,
$$
j_t(s_M)-j_t(0)
\geq
\psi_t(s_M)-\psi_t(0)-\frac{\mathcal{L}_J+\mathcal{L}_{\bar{F}}}{2}(s_M)^2.
$$
Combining these inequalities yields
$$
\max_{s\in I_t}J(\theta_t+sz_t)-J(\theta_t+s_Mz_t)
\leq
\psi_t(s^*)-\psi_t(s_M)+(\mathcal{L}_J+\mathcal{L}_{\bar{F}})R_t^2.
$$
We now bound the difference $\psi_t(s^*)-\psi_t(s_M)$. By the triangle inequality,
$$
\psi_t(s^*)-\psi_t(s_M)
\leq
|\psi_t(s^*)-\phi_t(s^*)|+\phi_t(s^*)-\phi_t(s_M)+|\phi_t(s_M)-\psi_t(s_M)|.
$$
By the definition of $s_\phi$,
$$
\phi_t(s^*)-\phi_t(s_M)\leq\phi_t(s_\phi)-\phi_t(s_M).
$$
Furthermore,
$$
|\psi_t(s^*)-\phi_t(s^*)|+|\phi_t(s_M)-\psi_t(s_M)|
\leq
2\max_{s\in I_t}|\psi_t(s)-\phi_t(s)|.
$$
To bound $\max_{s\in I_t}|\psi_t(s)-\phi_t(s)|$, Taylor expanding gives
$$
|\psi_t(s)-\phi_t(s)|
=
\left|(\psi_t-\phi_t)(0)+s(\psi_t'-\phi_t')(0)+\frac{(\psi_t''-\phi_t'')(\xi)}{2}s^2\right|
$$
$$
\leq
|(\psi_t-\phi_t)(0)|+|(\psi_t'-\phi_t')(0)|R_t+WR_t^2.
$$
Therefore, by Assumption~\ref{ass:subgaussian},
$$
\max_{s\in I_t}|\psi_t(s)-\phi_t(s)|
\leq
\sigma(1+R_t)\sqrt{\frac{2\log(4/\delta)}{n}}+WR_t^2.
$$
Consequently,
$$
\Delta_t
\coloneq
(\mathcal{L}_J+\mathcal{L}_{\bar{F}})R_t^2
+
2\sigma(1+R_t)\sqrt{\frac{2\log(4/\delta)}{n}}
+
2WR_t^2
+
(\phi_t(s_\phi)-\phi_t(s_M)).
$$
\end{proof}

\setcounter{theorem}{2}

\begin{theorem}[Stationarity of the ZFO update]

Suppose Assumptions~\ref{ass:smooth-bounded}--\ref{ass:alignment} hold. Assume also that $q_t/\mathcal L_J \in I_t.$ Then, for every $T> 0$, with probability at least $1- \delta,$
$$
\min_{0\leq t\leq T}
\mathbb{E}\|\nabla J(\theta_t)\|^2
\leq
\frac{
2\mathcal{L}_J(J^\star-J(\theta_0))
}
{\mu^2T}+
\frac{2\mathcal{L}_J \tilde \Delta_T}{\mu^2}, \text{ where } \tilde \Delta_T=\frac{1}{T}\sum_{t=0}^{T-1} \mathbb{E}[\Delta_t].
$$

\end{theorem}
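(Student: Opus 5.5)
The plan is the standard descent-lemma argument, applied to a comparison step inside $I_t$ and then transferred to the actual ZFO step through Lemma~\ref{lemma:surrogate-vs-empiric}. Fix $t$ and use the comparison step $\bar s_t \coloneqq q_t/\mathcal L_J$. By hypothesis $\bar s_t \in I_t$, and since $q_t$ is an expectation it is deterministic given the conditioning used in Assumption~\ref{ass:alignment}. Apply the smoothness lower bound of Assumption~\ref{ass:smooth-bounded} with $x=\theta_t$ and $y=\theta_t+\bar s_t z_t$. Since $\|z_t\|=1$, this gives
\[
J(\theta_t+\bar s_t z_t)\;\geq\; J(\theta_t)+\bar s_t\langle \nabla J(\theta_t),z_t\rangle-\frac{\mathcal L_J}{2}\bar s_t^2 .
\]
Taking the conditional expectation $\mathbb E_t$ and substituting $\bar s_t=q_t/\mathcal L_J$ yields $\mathbb E_t[J(\theta_t+\bar s_t z_t)]\geq J(\theta_t)+q_t^2/(2\mathcal L_J)$. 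This is the exact maximizer of the quadratic lower model, which is why the hypothesis is phrased in terms of $q_t/\mathcal L_J$.

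Next, move from $\bar s_t$ to the ZFO step $s_t$. Since $\bar s_t\in I_t$, we have $J(\theta_t+\bar s_t z_t)\leq \max_{s\in I_t}J(\theta_t+sz_t)$. Lemma~\ref{lemma:surrogate-vs-empiric} then gives $J(\theta_{t+1})=J(\theta_t+s_tz_t)\geq \max_{s\in I_t}J(\theta_t+sz_t)-\Delta_t$ on its high-probability event. Combining this with the previous step and using Assumption~\ref{ass:alignment}, i.e.\ $q_t^2\geq \mu^2\|\nabla J(\theta_t)\|^2$, gives
\[
\mathbb E_t[J(\theta_{t+1})]\;\geq\; J(\theta_t)+\frac{\mu^2}{2\mathcal L_J}\|\nabla J(\theta_t)\|^2-\mathbb E_t[\Delta_t].
\]

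Take total expectations and telescope over $t=0,\dots,T-1$. Using $J(\theta_T)\leq J^\star$, this gives
\[
\frac{\mu^2}{2\mathcal L_J}\sum_{t=0}^{T-1}\mathbb E\|\nabla J(\theta_t)\|^2\leq J^\star-J(\theta_0)+\sum_{t=0}^{T-1}\mathbb E[\Delta_t].
\]
Lower-bounding the average by the minimum over $0\le t\le T$, then dividing by $T$ and multiplying by $2\mathcal L_J/\mu^2$, gives the stated bound with $\tilde\Delta_T$.

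The main obstacle is the probabilistic bookkeeping. Lemma~\ref{lemma:surrogate-vs-empiric} holds with probability $1-\delta$ at each iteration, whereas the theorem asserts a single $1-\delta$ event for all $t$. A clean fix is to apply the lemma with $\delta/T$ and take a union bound, which only alters the log factor inside $\Delta_t$. Alternatively, one can read $\Delta_t$ as defined on the event and keep the paper's stated constant; I would state explicitly which convention is used.

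A related subtlety is that $q_t$ uses $\mathbb E_t$ while $\Delta_t$ is random, so the inequality $J(\theta_{t+1})\ge J(\theta_t+\bar s_t z_t)-\Delta_t$ must hold pathwise before taking expectations. This is fine because $\bar s_t$ is $\mathcal F_t$-measurable and the lemma's bound is pathwise on its event.
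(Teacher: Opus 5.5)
Your proposal is correct and follows the paper's proof essentially step for step. It uses the comparison step $q_t/\mathcal L_J\in I_t$, the smoothness lower bound under $\mathbb E_t$, the transfer to $s_t$ via Lemma~\ref{lemma:surrogate-vs-empiric}, the alignment assumption, and telescoping with $J(\theta_T)\le J^\star$. Your indexing (summing over $t=0,\dots,T-1$, then using $\min_{0\le t\le T}\le\min_{0\le t\le T-1}$) matches the definition of $\tilde\Delta_T$ more cleanly than the paper, which sums to $T$, and the paper ignores the probabilistic bookkeeping you flag. One caveat: a union bound alone does not justify taking $\mathbb E_t$ of an inequality that holds only on an event. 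The cleaner convention is to let $\Delta_t$ be the realized gap $\max_{s\in I_t}J(\theta_t+sz_t)-J(\theta_{t+1})$, so the inequality holds surely, and to use the lemma only to bound that gap on its event.
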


\begin{proof}
Since $\frac{q_t}{\mathcal{L}_J}\in I_t$, the comparison step exists within the relevant search interval. Therefore,
$$
\max_{s\in I_t}J(\theta_t+sz_t)
\geq
J\left(\theta_t+\frac{q_t}{\mathcal{L}_J}z_t\right).
$$
By Assumption~\ref{ass:smooth-bounded},
$$
J\left(\theta_t+\frac{q_t}{\mathcal{L}_J}z_t\right)
\geq
J(\theta_t)
+
\frac{q_t}{\mathcal{L}_J}
\langle\nabla J(\theta_t),z_t\rangle
-
\frac{\mathcal{L}_J}{2}
\frac{q_t^2}{\mathcal{L}_J^2}.
$$
Taking expectation over the randomness used to construct $z_t$, with the history before iteration $t$ held fixed, and using
$q_t=\mathbb{E}_t[\langle\nabla J(\theta_t),z_t\rangle]$, gives
$$
\mathbb{E}_t\left[
J\left(\theta_t+\frac{q_t}{\mathcal{L}_J}z_t\right)
\right]
\geq
J(\theta_t)+\frac{q_t^2}{\mathcal{L}_J}
-\frac{\mathcal{L}_J}{2}\frac{q_t^2}{\mathcal{L}_J^2}
=
J(\theta_t)+\frac{q_t^2}{2\mathcal{L}_J}.
$$
Consequently,
$$
\mathbb{E}_t\left[
\max_{s\in I_t}J(\theta_t+sz_t)
\right]
\geq
J(\theta_t)+\frac{q_t^2}{2\mathcal{L}_J}.
$$
By construction,
$$
J(\theta_{t+1})
=
J(\theta_t+s_tz_t)
\geq
\max_{s\in I_t}J(\theta_t+sz_t)-\Delta_t.
$$
Taking $\mathbb{E}_t$ on both sides therefore gives
$$
\mathbb{E}_t[J(\theta_{t+1})]
\geq
J(\theta_t)+\frac{q_t^2}{2\mathcal{L}_J}
-\mathbb{E}_t[\Delta_t].
$$
Assumption~\ref{ass:alignment} gives
$q_t^2\geq\mu^2\|\nabla J(\theta_t)\|^2$, and therefore
$$
\frac{\mu^2}{2\mathcal{L}_J}
\|\nabla J(\theta_t)\|^2
\leq
\mathbb{E}_t[J(\theta_{t+1})]-J(\theta_t)
+\mathbb{E}_t[\Delta_t].
$$
Taking full expectation yields
$$
\frac{\mu^2}{2\mathcal{L}_J}
\mathbb{E}\|\nabla J(\theta_t)\|^2
\leq
\mathbb{E}[J(\theta_{t+1})]
-
\mathbb{E}[J(\theta_t)]
+
\widehat{\Delta}_t,
$$
where $\widehat{\Delta}_t=\mathbb{E}[\Delta_t]$.
Summing from $t=0$ to $T$ and using Assumption~\ref{ass:smooth-bounded} yields
$$
\frac{\mu^2}{2\mathcal{L}_J}
\sum_{t=0}^{T}\mathbb{E}\|\nabla J(\theta_t)\|^2
\leq
J^\star-J(\theta_0)
+
\sum_{t=0}^{T}\widehat{\Delta}_t.
$$
Finally,
$$
\min_{0\leq t\leq T}\mathbb{E}\|\nabla J(\theta_t)\|^2
\leq
\frac{1}{T}\sum_{t=0}^{T}
\mathbb{E}\|\nabla J(\theta_t)\|^2,
$$
and therefore, using
$\tilde{\Delta}_T
=
\frac{1}{T}\sum_{t=0}^{T}\widehat{\Delta}_t$,
$$
\min_{0\leq t\leq T}\mathbb{E}\|\nabla J(\theta_t)\|^2
\leq
\frac{2\mathcal{L}_J(J^\star-J(\theta_0))}{\mu^2T}
+
\frac{2\mathcal{L}_J
\tilde{\Delta}_T}{\mu^2}.
$$
\end{proof}



\subsection{Finite Difference Lemmas}

\smartparagraph{Finite-Difference Theorems and Proofs.} We provide the proofs for two finite-difference approximations used by ZFO in the form of Lemmas.

\begin{lemma}\label{lemma:second-order-fin-dif} Let $\phi(s)$ be four times differentiable in an $\epsilon$-neighborhood, then:
    $$\widehat{\phi''}(0)\coloneq\frac{\phi(\epsilon) -2\phi(0)+\phi(-\epsilon)}{\epsilon^2}=\phi''(0)+\mathcal{O}(\epsilon^2)$$
\end{lemma}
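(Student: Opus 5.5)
We will prove Lemma~\ref{lemma:second-order-fin-dif} by expanding $\phi(\epsilon)$ and $\phi(-\epsilon)$ around $s=0$ with Taylor's theorem and observing that the odd-order terms cancel in the symmetric sum. Concretely, since $\phi$ is four times differentiable on $[-\epsilon,\epsilon]$, Taylor's theorem with Lagrange remainder gives points $\xi_+\in(0,\epsilon)$ and $\xi_-\in(-\epsilon,0)$ with
\[
\phi(\pm\epsilon)=\phi(0)\pm\epsilon\phi'(0)+\frac{\epsilon^2}{2}\phi''(0)\pm\frac{\epsilon^3}{6}\phi'''(0)+\frac{\epsilon^4}{24}\phi^{(4)}(\xi_\pm).
\]

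Adding the two expansions removes the first- and third-order terms. Subtracting $2\phi(0)$ and dividing by $\epsilon^2$ then yields
\[
\widehat{\phi''}(0)=\phi''(0)+\frac{\epsilon^2}{24}\bigl(\phi^{(4)}(\xi_+)+\phi^{(4)}(\xi_-)\bigr).
\]
The remaining task is to show that the bracketed remainder is bounded, so that the error is $\mathcal{O}(\epsilon^2)$.

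This bounding step is the only delicate point. Mere four-fold differentiability does not by itself bound $\phi^{(4)}$. We therefore rely on the standing Assumptions~\ref{ass:pathwise-smoothness}--\ref{ass:derivative-bounds}. Each $\phi_i$ lies in $C^5$ with $|\phi_i^{(4)}|\le B$, and since $\phi=\frac1n\sum_i\phi_i$, the same bound $|\phi^{(4)}|\le B$ holds on $[-\epsilon,\epsilon]$. Alternatively, continuity of $\phi^{(4)}$ on the compact interval gives some finite bound. Either way the remainder is at most $\frac{B\epsilon^2}{12}$ in absolute value.

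If the sharper form with a single intermediate point is wanted, as used in the proof of Theorem~\ref{thm:crn-curvature-concentration}, we apply the intermediate value theorem to the continuous function $\phi^{(4)}$. The average $\frac12(\phi^{(4)}(\xi_+)+\phi^{(4)}(\xi_-))$ equals $\phi^{(4)}(\hat\xi)$ for some $\hat\xi\in(-\epsilon,\epsilon)$. This gives
\[
\widehat{\phi''}(0)=\phi''(0)+\frac{\epsilon^2}{12}\phi^{(4)}(\hat\xi)=\phi''(0)+\mathcal{O}(\epsilon^2).
\]
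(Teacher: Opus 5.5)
Your proposal is correct and is essentially the paper's proof: expand $\phi(\pm\epsilon)$ to fourth order with Lagrange remainders, cancel the odd-order terms in the symmetric sum, merge the two remainders into $\frac{\epsilon^{2}}{12}\phi^{(4)}(\hat\xi)$, and divide by $\epsilon^{2}$. Your explicit justification of the merge and of the bound $|\phi^{(4)}|\le B$ makes rigorous what the paper leaves implicit; for the purely asymptotic claim even that bound can be dropped, since the Peano form of the fourth-order remainder already gives $\phi''(0)+\frac{\epsilon^{2}}{12}\phi^{(4)}(0)+o(\epsilon^{2})$ under four-fold differentiability alone.
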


\begin{proof}
Expanding $\phi(s)$ by its Taylor Series expansion gives

$$\phi(s)=\phi(0)+s\phi'(0)+s^2\frac{\phi''(0)}{2}+s^3\frac{\phi'''(0)}{6}+s^4\frac{\phi^{(4)}(\xi_s)}{24},$$

hence

$$\phi(\epsilon)=\phi(0)+\epsilon \phi'(0)+\epsilon^2\frac{\phi''(0)}{2}+\epsilon^3\frac{\phi'''(0)}{6}+\epsilon^4\frac{\phi^{(4)}(\xi_\epsilon)}{24}$$

and

$$\phi(-\epsilon)=\phi(0)-\epsilon \phi'(0)+\epsilon^2\frac{\phi''(0)}{2}-\epsilon^3\frac{\phi'''(0)}{6}+\epsilon^4\frac{\phi^{(4)}(\xi_{-\epsilon})}{24}.$$

Adding the two gives

$$\phi(\epsilon)+\phi(-\epsilon)
=2\phi(0)+\epsilon^2 \phi''(0)+\epsilon^4\frac{\phi^{(4)}(\hat{\xi})}{12},$$

subtracting $2\phi(0)$ gives

$$\phi(\epsilon)-2\phi(0)+\phi(-\epsilon)
=\epsilon^2 \phi''(0)+\epsilon^4\frac{\phi^{(4)}(\hat{\xi})}{12},$$

and dividing by $\epsilon^2$ gives

$$\frac{\phi(\epsilon)-2\phi(0)+\phi(-\epsilon)}{\epsilon^2}
=\phi''(0)+\epsilon^2\frac{\phi^{(4)}(\hat{\xi})}{12}
=\phi''(0)+\mathcal{O}(\epsilon^2).$$
\end{proof}

The equality above represents a natural second-order central finite difference; we prove the asymmetric finite difference for Lemma~\ref{lemma:third-order-fin-dif} below,

\begin{lemma}\label{lemma:third-order-fin-dif} Let $\phi(s)$ be five times differentiable in an $\epsilon$-neighborhood, then:
    $$\widehat{\phi'''}(0)\coloneq 3\frac{\phi(\epsilon)-\phi(-\epsilon)-2\epsilon\phi'(0)}{\epsilon^3}=\phi'''(0)+\mathcal{O}(\epsilon^2).$$
\end{lemma}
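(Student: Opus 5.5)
The plan is to mirror the proof of Lemma~\ref{lemma:second-order-fin-dif}, but to subtract the two one-sided expansions instead of adding them, because the odd-order terms are the ones that survive. By hypothesis, $\phi$ is five times differentiable on $[-\epsilon,\epsilon]$, so I expand with Lagrange remainder to fifth order:
\[
\phi(\pm\epsilon)=\phi(0)\pm\epsilon\phi'(0)+\frac{\epsilon^2}{2}\phi''(0)\pm\frac{\epsilon^3}{6}\phi'''(0)+\frac{\epsilon^4}{24}\phi^{(4)}(0)\pm\frac{\epsilon^5}{120}\phi^{(5)}(\xi_{\pm}),
\]
with $\xi_{+}\in(0,\epsilon)$ and $\xi_{-}\in(-\epsilon,0)$.

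Subtracting the two expansions cancels the even-order terms $\phi(0)$, $\phi''(0)$ and $\phi^{(4)}(0)$ exactly, which leaves
\[
\phi(\epsilon)-\phi(-\epsilon)=2\epsilon\phi'(0)+\frac{\epsilon^3}{3}\phi'''(0)+\frac{\epsilon^5}{120}\bigl(\phi^{(5)}(\xi_+)+\phi^{(5)}(\xi_-)\bigr).
\]
Since $\phi'(0)$ is supplied exactly by the backward pass, I then subtract $2\epsilon\phi'(0)$ and multiply by $3/\epsilon^3$, which gives
\[
\widehat{\phi'''}(0)=\phi'''(0)+\frac{\epsilon^2}{40}\bigl(\phi^{(5)}(\xi_+)+\phi^{(5)}(\xi_-)\bigr).
\]

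To finish, I need to control the remainder. I would use the intermediate value theorem, which applies because $\phi^{(5)}$ is continuous under Assumption~\ref{ass:pathwise-smoothness}. It shows that the average $\tfrac12(\phi^{(5)}(\xi_+)+\phi^{(5)}(\xi_-))$ equals $\phi^{(5)}(\tilde\xi)$ for some $\tilde\xi\in(-\epsilon,\epsilon)$. The remainder therefore becomes $\frac{\epsilon^2}{20}\phi^{(5)}(\tilde\xi)$, which matches the constant used in Theorem~\ref{thm:crn-curvature-concentration}. If $\phi^{(5)}$ is bounded near $0$, this term is $\mathcal{O}(\epsilon^2)$, which proves the claim.

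The main obstacle is keeping the constants right: the factor $3$ combined with the $1/120$ coefficients must yield exactly $\epsilon^2/20$. A second point is that the IVT step needs continuity of $\phi^{(5)}$, not just its existence. If only differentiability is available, I would instead bound the two terms separately by $\sup|\phi^{(5)}|$; this still gives $\mathcal{O}(\epsilon^2)$ and is all the lemma requires.
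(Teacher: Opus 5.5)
Your proposal is correct and follows the paper's proof essentially line for line: fifth-order Lagrange expansions at $\pm\epsilon$, subtraction to cancel the even-order terms, removal of the exact $2\epsilon\phi'(0)$ term, and rescaling by $3/\epsilon^3$ to reach the $\frac{\epsilon^2}{20}\phi^{(5)}(\tilde\xi)$ remainder. The only difference is that you state explicitly the intermediate-value step that the paper uses silently to merge the two remainders into a single $\hat\xi$. Moreover, by Darboux's theorem, $\phi^{(5)}=(\phi^{(4)})'$ already has the intermediate value property, so that step does not actually require continuity of $\phi^{(5)}$.
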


\begin{proof}
Consider the new expansion of $\phi(s)$

$$\phi(s)
=\phi(0)+s \phi'(0)+s^2\frac{\phi''(0)}{2}+s^3\frac{\phi'''(0)}{6}+s^4\frac{\phi^{(4)}(0)}{24}+s^5\frac{\phi^{(5)}(\xi_s)}{120}.$$

Then taking the difference between $\phi(\epsilon)$ and $\phi(-\epsilon)$ returns

$$\phi(\epsilon)-\phi(-\epsilon)
=2\epsilon \phi'(0)+\epsilon^3\frac{\phi'''(0)}{3}+\epsilon^5\frac{\phi^{(5)}(\hat{\xi})}{60}.$$

Using $\phi'(0)$ gives

$$\phi(\epsilon)-\phi(-\epsilon)-2\epsilon \phi'(0)
=\epsilon^3\frac{\phi'''(0)}{3}+\epsilon^5\frac{\phi^{(5)}(\hat{\xi})}{60},$$

and dividing by $\frac{\epsilon^3}{3}$ returns

$$\frac{3(\phi(\epsilon)-\phi(-\epsilon)-2\epsilon \phi'(0))}{\epsilon^3}
=\phi'''(0)+\epsilon^2\frac{\phi^{(5)}(\hat{\xi})}{20}
=\phi'''(0)+\mathcal{O}(\epsilon^2).$$

\end{proof}
\begin{table*}[t]
  \centering
  \scriptsize
  \setlength{\tabcolsep}{4pt}
  \renewcommand{\arraystretch}{0.95}
  \caption{\small{\textbf{Summary of the models and datasets used in the large-model experiments.}
  Parameter counts denote the total number of model parameters. Dataset sizes are reported as the number of training/evaluation examples used in our experiments. MATH uses the 512/256 train/evaluation subset described in the main text.}}
  \label{tab:model-dataset-summary}
  \scalebox{0.95}{
  \begin{tabular}{@{} l l c l c @{}}
    \toprule
    \textbf{Study} &
    \textbf{Model} &
    \textbf{Model Parameters} &
    \textbf{Dataset} &
    \textbf{Dataset Size (Train / Eval)} \\
    \midrule

    \multirow{12}{*}{Main Study}
      & \multirow{5}{*}{Qwen-2.5-Math-1.5B}
      & \multirow{5}{*}{1.54B}
      & GSM8K      & 7,473 / 1,319 \\
      &&& MATH        & 512 / 256 \\
      &&& SVAMP       & 700 / 300 \\
      &&& AsDiv       & 1,844 / 461 \\
      &&& OpenBookQA  & 4,957 / 500 \\
    \cmidrule(lr){2-5}

      & \multirow{3}{*}{Phi-2}
      & \multirow{3}{*}{2.70B}
      & SVAMP       & 700 / 300 \\
      &&& AsDiv       & 1,844 / 461 \\
      &&& OpenBookQA  & 4,957 / 500 \\
    \cmidrule(lr){2-5}

      & Gemma-2-2B
      & 2.61B
      & SVAMP
      & 700 / 300 \\
    \cmidrule(lr){2-5}

      & \multirow{3}{*}{Llama-3.2-1B}
      & \multirow{3}{*}{1.23B}
      & GSM8K      & 7,473 / 1,319 \\
      &&& AsDiv      & 1,844 / 461 \\
      &&& OpenBookQA & 4,957 / 500 \\

    \midrule

    \multirow{4}{*}{Qwen Sub-Study}
      & \multirow{4}{*}{Qwen-2.5-Math-1.5B}
      & \multirow{4}{*}{1.54B}
      & ARC-Challenge & 1,119 / 1,172 \\
      &&& StrategyQA    & 2,061 / 229 \\
      &&& FOLIO         & 1,001 / 204 \\
      &&& CODAH         & 2,498 / 278 \\

    \bottomrule
  \end{tabular}}
  \vspace{1mm}

\end{table*}

\begin{table*}[b]
\centering
\small

\begin{minipage}[t]{0.36\textwidth}
\centering
\setlength{\tabcolsep}{5pt}
\renewcommand{\arraystretch}{0.98}
\begin{tabular}{@{}ll@{}}
\toprule
\textbf{Parameter} & \textbf{Setting} \\
\midrule
Optimizer & AdamW \\
Policy & \texttt{grpo} \\
Temperature & 0.8 \\
Top-$p$ & 0.95 \\
KL coefficient $\beta^{\star}$ & 0.02 \\
Perturbation scale $\epsilon$ & 1e-3 \\
Seed & \{100, 101, 102\} \\
\bottomrule
\end{tabular}
\captionof{table}{\textbf{Fixed experimental settings} used across all large language RLVR-ZFO runs.}
\label{tab:fixed_config}
\end{minipage}
\hspace{0.035\textwidth}
\begin{minipage}[t]{0.58\textwidth}
\centering
\setlength{\tabcolsep}{4.5pt}
\renewcommand{\arraystretch}{1.58}
\begin{tabular}{@{}cccccc@{}}
\toprule
\textbf{Config} & \textbf{Batch} & \textbf{Group} & \textbf{Epochs} & \textbf{Max Len.} & \textbf{Output Len.} \\
\midrule
\emph{a} & 4 & 4 & 10 & 512 & 192 \\
\emph{b} & 2 & 2 & 8  & 384 & 192 \\
\emph{c} & 8 & 2 & 8  & 384 & 192 \\
\emph{d} & 4 & 2 & 5  & 256 & 192 \\
\bottomrule
\end{tabular}
\captionof{table}{\textbf{Training configuration types used across runs.} Batch represents batch size, Group represents group size, Max Len. and Output Len. represent Max Length and Maximum Output Length.}
\label{tab:config_types}
\end{minipage}

\end{table*}



\begin{table*}
\centering
\small
\setlength{\tabcolsep}{4pt}
\renewcommand{\arraystretch}{0.9}
\caption{\textbf{Hyperparameter Configuration.} Learning rates, bounds, and configuration types used for the main empirical study. FO LR and ZO LR denote the learning rates used for first-order and zeroth-order fine-tuning, respectively. BT2, BT3, BP2, and BP3 represent the selected bounds for Taylor2, Taylor3, Pad\'e2, and Pad\'e3, respectively. Learning rate search is over $\{1\mathrm{e}-6, 5\mathrm{e}{-6}, 1\mathrm{e}{-5}\}$ and bound search is over $\{3,5, 10\}$. ALL indicates that every lr for MeZO produced the same result. }
\label{tab:config_bounds}

\begin{tabular}{@{}llccccccc@{}}
\toprule

\multirow{2}{*}{\textbf{Model}} &
\multirow{2}{*}{\textbf{Dataset}} &
\multicolumn{7}{c}{\textbf{Configuration}} \\

\cmidrule(lr){3-9}

& & FO LR & ZO LR & BT2 & BT3 & BP2 & BP3 & Config Type \\

\midrule

\multirow{5}{*}{\modelname{Qwen-2.5-Math-1.5B}}
 & \datasetname{GSM8K}      & 5e-6 & ALL & 10 & 5 & 10 & 10 & \emph{a} \\
 & \datasetname{MATH}       & 5e-6 & 1e-6 & 5 & 5 & 5 & 5 & \emph{c} \\
 & \datasetname{SVAMP}      & 1e-5 & 5e-7 & 3 & 5 & 3 & 3 & \emph{a} \\
 & \datasetname{AsDiv}      & 5e-6 & 5e-7 & 5 & 5 & 5 & 5 & \emph{c} \\
 & \datasetname{OpenBookQA} & 1e-5 & 1e-7 & 10 & 5 & 5 & 5 & \emph{a} \\ 

\midrule

\multirow{3}{*}{\modelname{Phi-2}}
 & \datasetname{SVAMP}      & 5e-6 & 5e-7 & 3 & 5 & 5 & 3 & \emph{a} \\
 & \datasetname{AsDiv}      & 5e-6 & 1e-7 & 3 & 3 & 5 & 3 & \emph{d} \\
 & \datasetname{OpenBookQA} & 5e-6 & 1e-7 & 3 & 5 & 3 & 3 & \emph{d} \\

\midrule

\multirow{1}{*}{\modelname{Gemma-2-2B}}
 & \datasetname{SVAMP}      & 1e-6 & 5e-9 & 3 & 3 & 3 & 3 & \emph{b} \\

\midrule

\multirow{3}{*}{\modelname{Llama-3.2-1B}}
 & \datasetname{GSM8K}      & 5e-6 & 1e-7 & 3 & 3 & 3 & 3 & \emph{a} \\
 & \datasetname{AsDiv}      & 5e-6 & 5e-8 & 5 & 5 & 3 & 3 & \emph{c} \\
 & \datasetname{OpenBookQA} & 1e-6 & ALL & 5 & 5 & 5 & 3 & \emph{b} \\

\bottomrule
\end{tabular}
\end{table*}

\section{Addendum to Experimental Results} \label{sec:perf_insights}

This addendum collects supplementary diagnostics that complement the main results. We first validate the ZFO step-selection mechanism on two low-dimensional toy RLVR environments, then discuss the local-approximation behavior of each ZFO method on a panel of analytic test functions, and finally report cross-setting win ratios that summarize how consistently each variant improves over the FO baseline.

\subsection{Configuration}
\label{sec:configuration}
This section specifies the objective and implementation choices used throughout our experiments. First, we define the GRPO surrogate, optimized by each method, then report the hyperparameter configurations used for the main large-model runs, Qwen sub-study, and toy validation.
 
\subsubsection{Surrogate Objective Formulation}
\label{subsec:objective_formulation}
\smartparagraph{GRPO Surrogate.} For this paper, the surrogate $F_t$ instantiates the GRPO objective. Let $\mathcal{P}_t$ denote the set of prompts in the current batch, with $|\mathcal{P}_t|$ the batch size and $G$ the number of examples sampled per prompt. For each prompt $i\in\{1,\ldots,|\mathcal{P}_t|\}$ and each sampled example $j\in\{1,\ldots,G\}$, let $\tau_{i,j,t}$ denote the sampled example, $|\tau_{i,j,t}|$ its length, and $(x_{i,j,v},y_{i,j,v})$ the state-action pair at token position $v$. The surrogate is
$$
F_t(\theta;S_t)
=
\frac{1}{|\mathcal{P}_t|G}
\sum_{i=1}^{|\mathcal{P}_t|}
\sum_{j=1}^{G}
\widetilde{A}_{i,j,t}\log\pi_\theta(\tau_{i,j,t})
-
\kappa\cdot\mathrm{KL}(\pi_\theta\|\pi_{\rm ref}),
$$
where $\kappa>0$ is the KL coefficient, $\widetilde{A}_{i,j,t}$ is the group-relative advantage given as
$$
\widetilde{A}_{i,j,t}
=
\rho(\tau_{i,j,t})
-
\frac{1}{G}\sum_{k=1}^{G}\rho(\tau_{i,k,t}),
$$
and $\rho$ is the verifiable reward given by
$$
\rho(\tau_{i,j,t})
=
\mathds{1}_{\tau_{i,j,t}\in\mathcal{C}_t},
$$
with $\mathcal{C}_t$ denoting the set of sampled examples deemed correct by the verifier at iteration $t$. The KL penalty is computed at the token level and averaged across the batch:
$$
\mathrm{KL}(\pi_\theta\|\pi_{\rm ref})
=
\frac{1}{|\mathcal{P}_t|G}
\sum_{i=1}^{|\mathcal{P}_t|}
\sum_{j=1}^{G}
\frac{1}{|\tau_{i,j,t}|}
\sum_{v=1}^{|\tau_{i,j,t}|}
\bigl[
\log\pi_\theta(y_{i,j,v}\mid x_{i,j,v})
-
\log\pi_{\rm ref}(y_{i,j,v}\mid x_{i,j,v})
\bigr].
$$
\label{subsec:surrogate_config}

\begin{table*}
\centering
\small
\setlength{\tabcolsep}{4pt}
\renewcommand{\arraystretch}{0.9}
\caption{\textbf{Learning rate, bound, and configuration type used for the Qwen sub-study on additional reasoning benchmarks.} Learning rate search is over $\{$5e-6, 1e-5$\}$ and bound search is over $\{3,5\}$.}

\begin{tabular}{@{}llcccccc@{}}
\toprule

\multirow{2}{*}{\textbf{Model}} &
\multirow{2}{*}{\textbf{Dataset}} &
\multicolumn{6}{c}{\textbf{Configuration}} \\

\cmidrule(lr){3-8}

& & Learning Rate & BT2 & BT3 & BP2 & BP3 & Config Type \\

\midrule

\multirow{4}{*}{\modelname{Qwen-2.5-Math-1.5B}}
 & \datasetname{ARC-Challenge} & 1e-5 & 3 & 5 & 3 & 3 & \emph{c} \\
 & \datasetname{StrategyQA}    & 1e-5 & 5 & 5 & 3 & 5 & \emph{c} \\
 & \datasetname{FOLIO}         & 1e-5 & 3 & 3 & 3 & 3 & \emph{c} \\
 & \datasetname{CODAH}         & 1e-5 & 5 & 5 & 5 & 5 & \emph{c} \\

\bottomrule
\label{tab:qwen_config_bounds}
\end{tabular}
\end{table*}

\smartparagraph{Implementation-Level Interpretation of $\alpha$ and $\beta$.}
In the theory, we describe the ZFO search interval using the effective radius
$R_t=\beta s_{\rm FO}$, where $s_{\rm FO}$ represents the scale of the base
first-order step. Conceptually, this makes the learning rate $\alpha$ and the
bound $\beta$ nearly interchangeable as step-scale controls. Under an idealized
linear scaling view, replacing $(\alpha,\beta)$ with another pair having the
same product $\alpha\beta$ would therefore produce a comparable search radius.

In implementation, however, the two quantities enter at different stages. The
base optimizer first forms its tentative AdamW update using learning rate
$\alpha$. If $\theta_t$ is the parameter vector before this tentative update
and $\widehat{\theta}_t$ is the parameter vector after it, then the realized
optimizer displacement is
$$
d_t=\widehat{\theta}_t-\theta_t.
$$
ZFO then searches along this displacement through parameters
$\theta_t+a d_t$ with $a\in[0,\beta]$. Thus the implemented effective search radius is
$R_t=\beta\|d_t\|$.

Therefore, matched-product pairs such as $(\alpha,\beta)=(10^{-6},5)$ and
$(5\times 10^{-6},1)$ should be understood as equivalent in nominal
search scale, but not necessarily identical in implementation. They
can produce different results because $\alpha$ affects the optimizer
displacement $d_t$ before ZFO constructs its search line.

\smartparagraph{LLM implementation.} GRPO supplies the optimizer direction and first-order coefficient. Symmetric probes evaluate a separate reward-minus-KL score on fixed sampled sequences, with rewards held constant. The probe values and GRPO slope jointly define the local model used for step selection. The theory analyzes an idealized ZFO setting in which the directional derivative and function probes are evaluations of the same sampled surrogate. The RLVR implementation approximates this setting by holding sampled responses and rewards fixed during the probes.

\subsubsection{Empirical Configurations}\label{subsec:emp_config}
All large-model runs share the fixed hyperparameters listed in Table~\ref{tab:fixed_config}. We hold these values constant so that observed differences in performance can be attributed to the update method as opposed to incidental variance in sampling, regularization, or optimization settings. Each reported result is averaged over the three seeds $\{ 100, 101, 102 \},$ and the standard deviations reported in Tables~\ref{tab:empirics} and ~\ref{tab:qwen_substudy} are based on these various runs.

We evaluate the models after each epoch, and all runs use the same early-stopping rule; we terminate the training when evaluation accuracy is unchanged for three consecutive epochs. Otherwise, we continue training until the maximum epoch count specified for the corresponding configuration. Per-run hyperparameters differ across model-dataset pairs and include the learning rate $\alpha$, the bound hyperparameter $\beta,$ and the selected configuration from Table~\ref{tab:config_types}. We summarize four configuration types in Table~\ref{tab:config_types} and represent a performance-versus-cost trade-off. Config \emph{a} is the default, with the largest group size and longest context length. Configs \emph{b}–\emph{d} are progressively cheaper alternatives selected when \emph{a} is infeasible based on either memory or time constraints. Config \emph{b} reduces both batch and group sizes for memory-constrained runs, \emph{c} trades group size for a larger batch when throughput is more valuable than rollout diversity, and \emph{d} additionally shortens the context and reduces epochs.

These per-run details are reported in Table~\ref{tab:config_bounds} for the main results, including the FO learning rate, ZO learning rate, and each ZFO bound $\beta$. The FO learning rate is selected from the sweep $\{$1e-6, 5e-6, 1e-5
$\}$, and the corresponding bound for each ZFO method is then chosen from a validated sweep over $\{ 3, 5, 10 \}$. The ZO learning rate is tuned independently from the sweep \{$1e-8, 5e-8, 1e-7, 5e-7
\}$, since ZO methods generally require a smaller effective step scale than FO updates. The performance reported is the mean verifiable reward on the evaluation set. The Qwen sub-study from Table~\ref{tab:qwen_substudy} follows a similar protocol but fixes config \emph{c} across runs (a memory-conservative choice given the sub-study sweeps over four datasets), restricts the learning rate sweep to $\{$5e-6, 1e-5
$\}$, and limits the bound sweep to $\{3, 5\}.$ The Qwen sub-study configurations are reported in Table~\ref{tab:qwen_config_bounds}.

The toy models in Figure~\ref{fig:toy_collection} use a lightweight configuration designed to validate the ZFO step-selection mechanism in a controlled low-dimensional setting. Both environments share a context dimension of 6, a sequence length of 4, and a two-layer policy network of hidden width 48. Each run uses a dataset of 200 contexts trained for 50 steps with batch size $16$, group size $G=4$, perturbation scale $\epsilon=10^{-2},$ and KL coefficient of 0.1. The bound hyperparameter $\beta$ is fixed to 1, so the ZFO search interval collapses to the FO typical step.

Table~\ref{tab:model-dataset-summary} depicts the models and datasets used throughout our large-model experiments, along with their corresponding parameter counts and dataset sizes. We consider models ranging from 1.23B to 2.7B parameters across a diverse set of mathematical reasoning, commonsense reasoning, and knowledge-based benchmarks. We additionally report the datasets used in our Qwen sub-study to provide a complete summary of the main experimental settings considered.

\begin{table}
\centering
\footnotesize
\caption{\textbf{Compact summary of the Classical Optimization Problems used in Figure~\ref{fig:optimization_beyond_llms}.} Before optimization, each listed initial position $\theta_0$ is perturbed independently per seed as $\theta_{0,s}=\theta_0+\xi_s$, where $\xi_s\sim\mathcal{N}(0,0.04I)$ applies independent Gaussian noise (standard deviation $0.2$) to every coordinate, drawn from a dedicated, task-specific random stream and shared across all FO, ZFO, and GeN methods evaluated on that task--seed pair.}
\label{tab:optimization_beyond_llms}
\begin{tabular}{ccc}
\toprule
Problem & Objective & Initial position \\
\midrule
Linear least squares &
$\min_\theta\dfrac{1}{2n}\lVert A\theta-y\rVert^2$ &
$\theta_0=\mathbf{0}_{20}$ \\[6pt]
Nonlinear least squares &
$\min_{a,b,c}\dfrac{1}{2N}\sum_i\big(a\,e^{bx_i}+c-y_i\big)^2$ &
$(a_0,b_0,c_0)=(0.4,\,0.8,\,1.0)$ \\[6pt]
Logistic regression &
$\min_\theta\dfrac{1}{n}\sum_i\big[\log(1+e^{x_i^\top\theta})-y_ix_i^\top\theta\big]+\dfrac{\lambda}{2}\lVert\theta\rVert^2$ &
$\theta_0=\mathbf{0}_{16}$ \\[6pt]
Low-rank matrix factorization &
$\min_{U,V}\dfrac{1}{2mn}\lVert UV^\top-M\rVert_F^2+\dfrac{\lambda}{2|\theta|}\lVert\theta\rVert^2$ &
$(U_0,V_0)\sim\mathcal{N}(0,0.15^2)$ \\[6pt]
Rosenbrock &
$\min_{x_1,x_2}100(x_2-x_1^2)^2+(1-x_1)^2$ &
$(x_1,x_2)_0=(-1.2,\,1.0)$ \\[6pt]
Beale &
$\min_{x_1,x_2}\sum_{k=1}^{3}\big(c_k-x_1+x_1x_2^k\big)^2$ &
$(x_1,x_2)_0=(-1.0,\,1.0)$ \\
\bottomrule
\end{tabular}
\end{table}

Our MeZO baseline uses a forward-only, two-sided finite-difference update along an unnormalized Gaussian perturbation direction. At each iteration, sampled responses, group-centered reward advantages, and reference-policy log probabilities are held fixed across the positive and negative perturbations. The evaluated objective combines advantage-weighted completion log probabilities with a nonnegative \(k_3\) KL penalty, averaged over completion tokens within each sequence. Both terms use completion-only masking, and responses are decoded from the end of the padded prompt. Objective evaluations run in evaluation mode, with log-ratio clipping for numerical stability. The perturbation direction is regenerated from the same random seed for both probes and the parameter update, avoiding backpropagation.

\subsection{Additional Insights}
\label{sec:add_insights}

\smartparagraph{Optimization Beyond Neural Network Training.} To evaluate whether ZFO extends beyond neural network optimization, we consider six classical optimization problems: linear least squares, nonlinear least squares, logistic regression, low-rank matrix factorization, Rosenbrock, and Beale. A summary of the problems can be found in Table~\ref{tab:optimization_beyond_llms}. We evaluate Taylor2, Taylor3, Padé2, and Padé3 with both SGD and AdamW-style directions, alongside their first-order baselines and GeN~\cite{bu2025gradient} and present the results in Figure~\ref{fig:optimization_beyond_llms}. Every method receives an independent search over the 18 learning rates, \(\{1,2,5\}\times10^{-5,\ldots,0}\), while ZFO methods use a fixed bound \(\beta=3\) and perturbation radius \(\epsilon=10^{-3}\). GeN uses \(\gamma=0.9\) and \(\Phi=1\) for fair comparison against the ZFO methods. Each experiment runs for 100 gradient updates over three seeds.
\begin{figure}[h]
    \centering
    \includegraphics[width=0.98\linewidth]{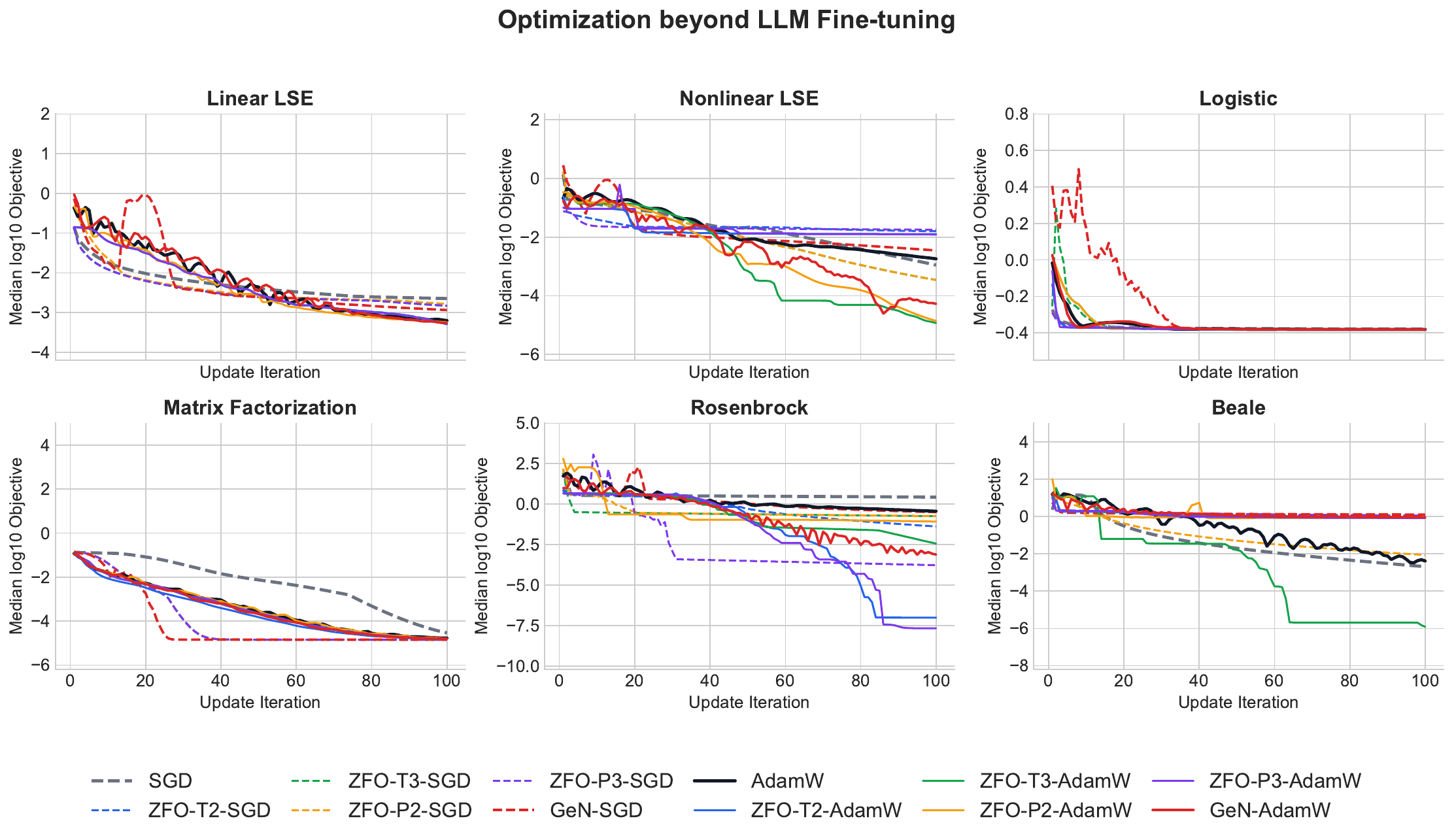}
    \caption{\textbf{Median $\log_{10}$ objective over three seeds for six optimization problems.} Solid curves use AdamW directions, and dashed curves use SGD directions. Each method uses its best learning rate from the same 18-point search. Lower values indicate better optimization.}
    \label{fig:optimization_beyond_llms}
\end{figure}

The results show that ZFO’s benefits are substantial but depend on both the local model and direction. Taylor$_3$–AdamW attains the lowest final median objectives on nonlinear least squares and Beale, whereas Padé$_3$–AdamW performs best on Rosenbrock. Several ZFO and GeN variants also accelerate matrix factorization relative to ordinary SGD. On linear least squares, most adaptive methods improve moderately over their corresponding first-order baselines, whereas logistic regression converges to essentially the same objective across methods. Some combinations also stall or underperform, particularly on the nonconvex objectives. Thus, the experiment supports ZFO’s applicability beyond LLMs and its ability to yield order-of-magnitude improvements in favorable geometries.

\smartparagraph{All Roads Lead to $\phi$.} Figure~\ref{fig:local_functions} examines the four ZFO surrogates, \texttt{Taylor$_2$}, \texttt{Taylor$_3$}, \texttt{Padé$_2$}, and \texttt{Padé$_3$}, on a panel of locally analytic functions. In each panel, $f(x_0)$ and $f'(x_0)$ are computed exactly, while the second and third derivatives are approximated by the same finite differences discussed in this paper. The top row consists of functions that are either sub-linear or polynomial on a tight interval; hence the approximations are nearly indistinguishable from the actual function. The bottom row deliberately widens the window by introducing functions that do not have sub-linear properties. Across both rows, the consistent takeaway is that all four interpolants generally agree with a bit of variance on super-polynomial functions.


\begin{figure}[h]
    \centering
    \includegraphics[width=0.99\linewidth]{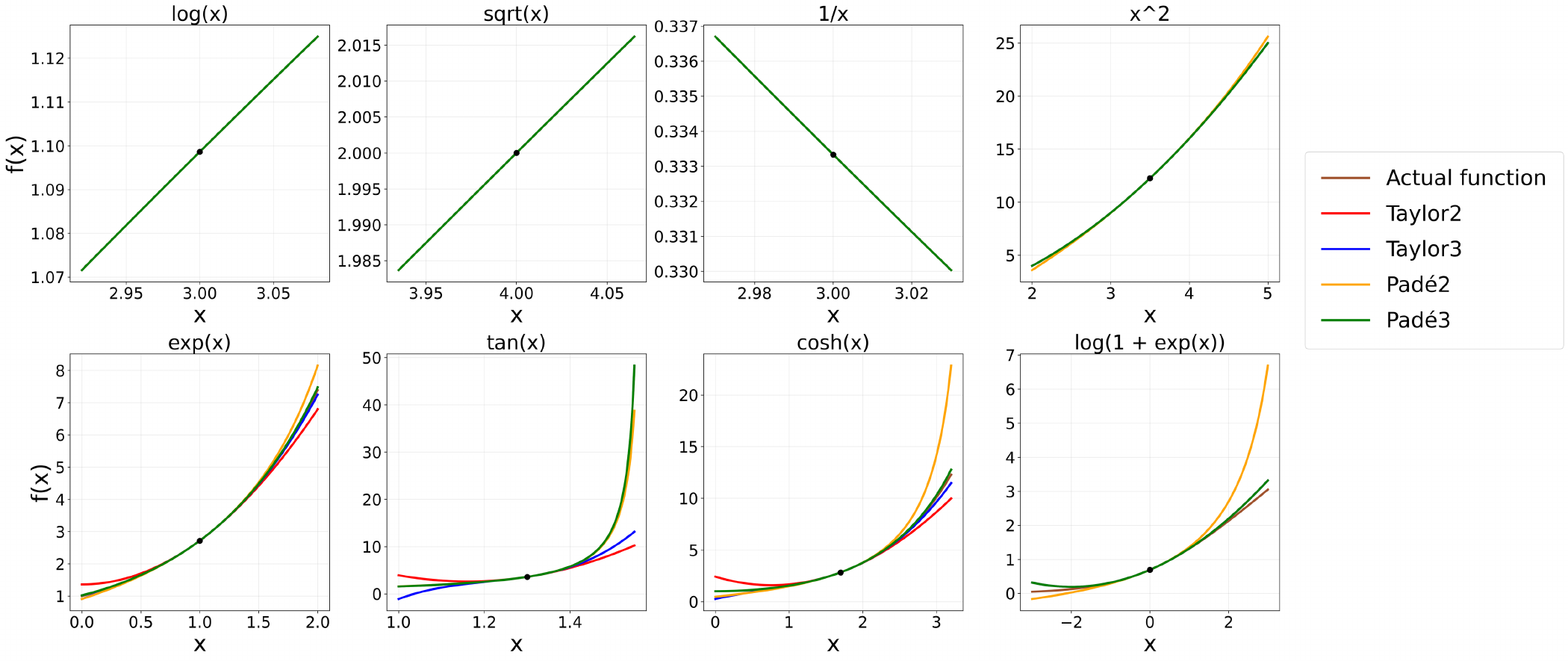}
    \caption{\textbf{ZFO approximations on a variety of locally analytic functions.} Each panel plots \texttt{Taylor$_2$}, \texttt{Taylor$_3$}, \texttt{Padé$_2$}, and \texttt{Padé$_3$} around an $x_0.$ Top row consists of panels where all four interpolants are visually indistinguishable from the actual function. The bottom row consists of panels that expose the qualitative differences between the Taylor and Padé approximants.}
    \label{fig:local_functions}
\end{figure}

\smartparagraph{Toy Model Validation.} We first validate the ZFO framework on two low-dimensional toy RLVR environments shown in Figure~\ref{fig:toy_collection}. The first toy model, \emph{Grouped Sequence Bandit}, is a sequence-decision task that requires a learning algorithm to choose a short sequence of tokens to match a hidden target sequence. Specifically, the model samples a complete sequence based on a small input and updates the parameters based on the reward feedback. The second model, \emph{Grouped Path-Matching}, uses a similar idea, but makes the reward structure more dependent on the entire sequence path. Instead of rewarding independent tokens, the policy receives a higher reward when its sampled actions remain aligned with the hidden target path. Both environments use a stochastic policy trained with grouped advantages and KL regularization against a frozen reference, replicating the surrogate structure of $F_t(\theta; \mathcal{S}_t) $ used in our larger experiments. Each method is averaged over 5 seeds with the bound hyperparameter $\beta=1$; the remainder of the configuration can be found in \S\ref{subsec:emp_config}.

Figure~\ref{fig:toy_collection} reports the training curves and final performance across both environments. Across both toy environments, the ZFO methods improve over the FO and ZO baselines, supporting the central claim that additional function evaluations can improve step-size selection along a fixed direction. Importantly, these improvements occur with the same search scale, since all methods use $\beta=1.$  Thus, the ZFO methods are not benefiting from a larger trust region; rather, they are using local curvature behavior to select more effective intermediate steps within the same update interval. This is most visible in the separation between \texttt{Taylor$_2$}, \texttt{Taylor$_3$}, and \texttt{Padé$_3$} relative to FO, showing that the additional curvature information can meaningfully alter the chosen step even when the learning rate is fixed. The behavior of \texttt{Padé2} further supports this interpretation; among the ZFO methods, \texttt{ Padé2} is the most restricted in the sense it contains only a linear numerator and denominator. Consequently, when $\beta=1$, \texttt{Padé2} often selects a step close to the standard FO update, making it the ZFO variant most similar to FO. This trend is reflected in the final performance summaries with \texttt{Taylor$_3$} performing the best in both environments -- \texttt{Taylor$_2$} and \texttt{Padé$_3$} trailing close behind. Overall, these toy results provide validation of the ZFO step-selection mechanism before moving to higher-dimensional models.

\begin{figure}[t]
    \centering
    \includegraphics[width=0.98\linewidth]{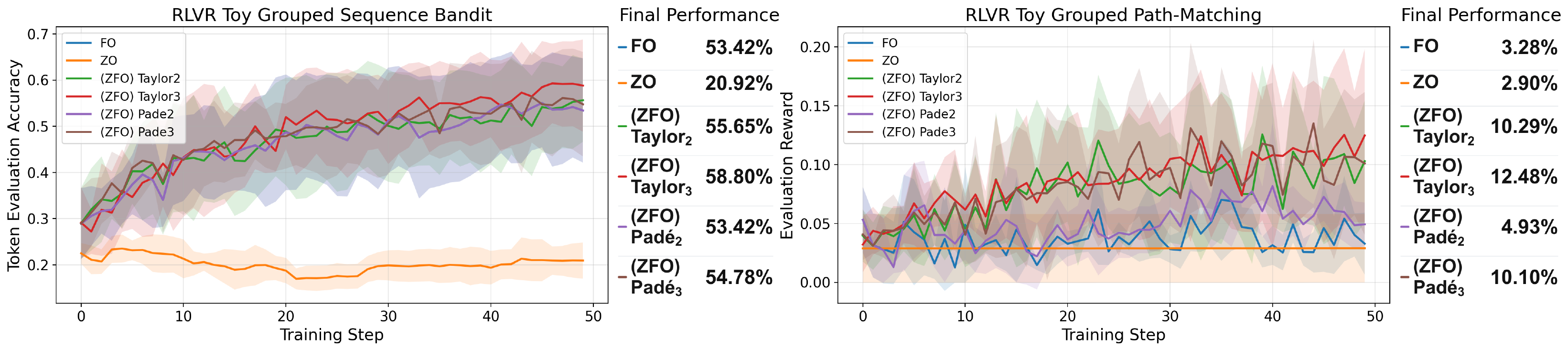}
    \caption{\textbf{Toy RLVR validation on the \emph{Grouped Sequence Bandit} and \emph{Grouped Path-Matching} environments.} ZFO methods improve over FO and ZO baselines under the same search scale $\beta=1$, showing that local curvature models can select more effective steps along update directions.}
    \label{fig:toy_collection}
\end{figure}

\smartparagraph{Learning Rate Sensitivity Curves.} Figure~\ref{fig:empirical_curves} establishes that ZFO's gains rise from intelligent step selection rather than a hidden learning-rate effect. In each panel, we plot ZFO trajectories alongside FO FT runs surrounding the base learning rate to compare against ZFO runs. The three panels highlight complementary behaviors; e.g., Qwen on OpenBookQA shows ZFO trajectories outperforming FO runs with the same nominal search regions, reinforcing the extremely large margin in Table~\ref{tab:empirics}. Phi-2 on Svamp shows ZFO trajectories steadily improving over the FO baselines. Llama on AsDiv shows that the most conservative FO baseline eventually destabilizes during training, whereas ZFO methods remain stable. Together, these curves indicate that ZFO occupies a regime that FO cannot reach by fixed learning rates.

\begin{figure}
    \centering
    \includegraphics[width=0.9\linewidth]{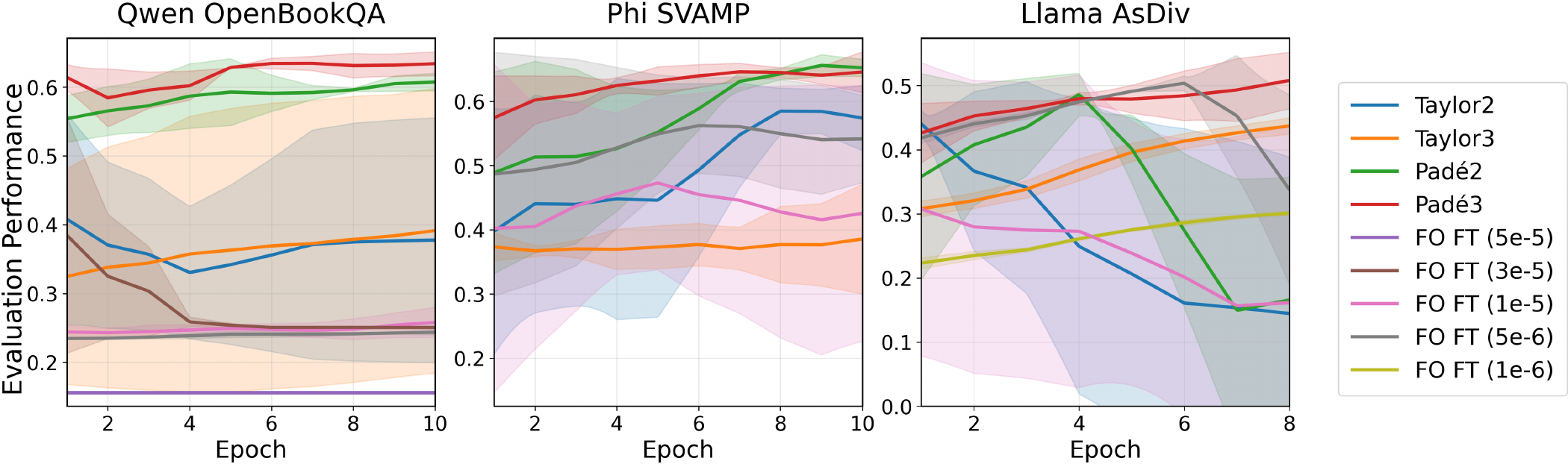}
    \vspace{-1mm}
    \caption{\small{\textbf{Stability of ZFO.} Empirical evaluation curves demonstrating ZFO's capacity to stabilize and improve over fine-tuning with AdamW (FO FT). Demonstrations involve AdamW FT (FO FT) curves of multiple learning rates.}}
    \label{fig:empirical_curves}
    \vspace{-4mm}
\end{figure}

\smartparagraph{Bound vs. Learning Rate Ablation.} Figure~\ref{fig:ablation} reports the full ablation over the learning rate $\alpha$ and search bound $\beta$ that was summarized in the main paper. We sweep both hyperparameters across all four ZFO methods on Qwen-2.5-Math-1.5B, evaluating after 10 epochs on subsets (512/256 train/eval) of GSM8K and SVAMP. As discussed in the main paper, at smaller learning rates, increasing $\beta$ produces near-monotonic improvements, while at larger learning rates the gains saturate as the search region $R$ grows too large. Comparing $(\alpha=10^{-6}, \beta=5)$ against 
$(\alpha=5\times 10^{-6}, \beta=1)$ also shows that configurations with the same nominal search region yield different final performance, indicating that $\alpha$ and $\beta$ are complementary rather than interchangeable. See \S\ref{subsec:surrogate_config} for the formal implementation of $R.$

Examining Figure~\ref{fig:ablation} closely, we observe the high-reward region forms a diagonal band that shifts toward smaller $\beta$ as $\alpha$ grows. At $\alpha=10^{-6}$ and $\alpha=5\times10^{-6}$, performance improves steadily as $\beta$ increases from $1$ to $100$ across all four methods. At $\alpha=10^{-5}$, the best results pull back toward small $\beta.$ Combining the largest bound with $\alpha=10^{-5}$ collapses the reward to near-zero for almost all methods.

\begin{figure}
    \centering
    \includegraphics[width=0.99\linewidth]{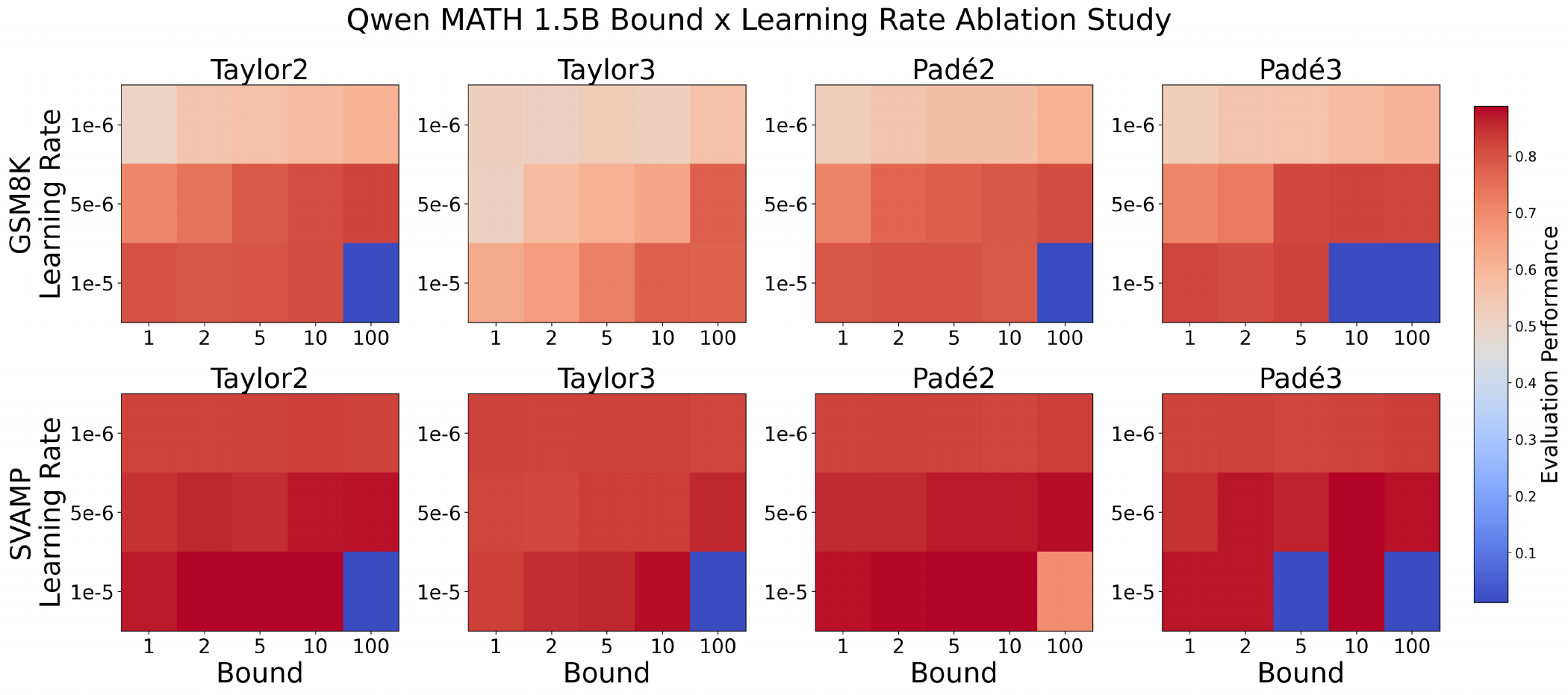}
    \caption{\textbf{Ablation study inspecting the relationship between the learning rate $\alpha$ and search hyperparameter $\beta$ on the reward rate.} We report the performance over a training set of 512 elements and an evaluation set of 256 elements on the Qwen-2.5-MATH-1.5B model and datasets, GSM8K and SVAMP, on the top and bottom row, respectively. From left to right, the panels represent all four ZFO methods (\texttt{Taylor$_2$}, \texttt{Taylor$_3$}, \texttt{Padé$_2$}, \texttt{Padé$_3$}) presented in this paper.}
    \label{fig:ablation}
\end{figure}

\smartparagraph{Cross-Setting Improvement Frequency.} As an additional appendix diagnostic, we summarize how often each ZFO variant improves over the corresponding FO baseline across the reported evaluation settings in Table~\ref{tab:zfo_win_ratios}. Across the 12 main result settings and the 4 Qwen sub-study settings, \texttt{Taylor$_2$}, \texttt{Taylor$_3$}, \texttt{Padé$_2$}, and \texttt{Padé$_3$} improve over FO in 14/16, 7/16, 13/16, and 14/16 settings, respectively. This supports the view that ZFO is driven by a recurring advantage in adaptive step selection across settings.

\begin{table}[t] 
\centering
\small
\setlength{\tabcolsep}{6pt}
\renewcommand{\arraystretch}{1.05}
\begin{tabular}{@{}lccc@{}}
\toprule
\textbf{Method} & \textbf{Main Results Wins} & \textbf{Qwen Sub-Study Wins} & \textbf{Win Ratio} \\
\midrule
\texttt{Taylor$_2$} & 10/12 & 4/4 & 14/16 \\
\texttt{Taylor$_3$} & 6/12  & 1/4 & 7/16 \\
\texttt{Padé$_2$} & 11/12 & 2/4 & 13/16 \\
\texttt{Pad\'e$_3$} & 11/12 & 3/4 & 14/16 \\
\bottomrule
\end{tabular}
\vspace{1mm}
\caption{\textbf{Win ratios for each ZFO method across the main results and Qwen sub-study.} A win denotes a setting in which the method achieves a higher mean accuracy than the corresponding FO baseline.}
\label{tab:zfo_win_ratios}
\end{table}

\paragraph{Additional first-order baselines.}
To compare ZFO with different learning-rate adaptation strategies, we additionally evaluate AdamW with a warmup-cosine schedule~\cite{sgdr} and the parameter-free Prodigy optimizer~\cite{prodigy} in Table~\ref{tab:additional_fo_zfo_extremes}. We preserve the Qwen-2.5-Math-1.5B model, dataset-specific training configurations, GRPO objective, evaluation protocol, and seeds used in the corresponding main and sub-study experiments. For AdamW with cosine decay, the learning rate selected for the constant-rate FO baseline is used as the peak rate; it gets reached by linear warmup over the first \(10\%\) of optimizer steps and then decayed to zero according to a cosine schedule. Prodigy instead uses its recommended configuration with adjustment factor \(1.0\), \(d_{\mathrm{coef}}=1.0\), decoupled weight decay \(0.01\), and no external learning-rate schedule. To contextualize these baselines within the range of behavior exhibited by ZFO, the table also reports the lowest- and highest-mean ZFO variants for each dataset.

\providecommand{\numcell}[1]{{\fontsize{6.6}{7.2}\selectfont #1}}
\providecommand{\modelname}[1]{{\fontsize{6.8}{7.4}\selectfont \textbf{#1}}}
\providecommand{\datasetname}[1]{{\fontsize{6.8}{7.4}\selectfont #1}}

\begin{table*}[t]
\centering
\small
\setlength{\tabcolsep}{5pt}
\renewcommand{\arraystretch}{0.92}
\caption{{\textbf{Comparison with additional first-order baselines.}
AdamW with linear warmup and cosine decay and Prodigy are compared with the
worst- and best-performing ZFO variants for each Qwen-2.5-Math-1.5B
dataset. The ZFO extrema are selected by mean final
evaluation performance across Taylor$_2$, Taylor$_3$, Pad\'e$_2$, and
Pad\'e$_3$; the selected variant is shown in parentheses. The best result in each row is bold.}}
\vspace{-1mm}

\scalebox{0.95}{%
\begin{tabular}{@{}llcccc@{}}
\toprule
\textbf{Model} &
\textbf{Dataset} &
\textbf{AdamW + Cosine} &
\textbf{Prodigy} &
\textbf{Worst ZFO} &
\textbf{Best ZFO} \\
\midrule

\multirow{8}{*}{\modelname{Qwen-2.5-Math-1.5B}}
& \datasetname{GSM8K}
& \numcell{72.88 $\pm$ 0.54}
& \numcell{64.32 $\pm$ 0.90}
& \numcell{75.56 $\pm$ 0.24 (Taylor$_3$)}
& \numcell{\textbf{81.00 $\pm$ 0.11} (Taylor$_2$)} \\

& \datasetname{SVAMP}
& \numcell{86.00 $\pm$ 0.33}
& \numcell{83.22 $\pm$ 0.19}
& \numcell{89.11 $\pm$ 0.51 (Taylor$_3$)}
& \numcell{\textbf{90.89 $\pm$ 1.02} (Pad\'e$_2$)} \\

& \datasetname{AsDiv}
& \numcell{81.98 $\pm$ 0.52}
& \numcell{80.82 $\pm$ 0.30}
& \numcell{88.27 $\pm$ 0.35 (Taylor$_3$)}
& \numcell{\textbf{94.09 $\pm$ 0.29} (Taylor$_2$)} \\

& \datasetname{OpenBookQA}
& \numcell{59.60 $\pm$ 0.87}
& \numcell{38.73 $\pm$ 0.31}
& \numcell{37.80 $\pm$ 17.85 (Taylor$_2$)}
& \numcell{\textbf{64.07 $\pm$ 1.22} (Pad\'e$_3$)} \\

& \datasetname{ARC-Challenge}
& \numcell{55.15 $\pm$ 1.11}
& \numcell{30.80 $\pm$ 0.15}
& \numcell{53.58 $\pm$ 2.77 (Pad\'e$_2$)}
& \numcell{\textbf{58.10 $\pm$ 1.33} (Taylor$_2$)} \\

& \datasetname{StrategyQA}
& \numcell{66.28 $\pm$ 0.51}
& \numcell{60.80 $\pm$ 0.95}
& \numcell{61.67 $\pm$ 1.46 (Taylor$_3$)}
& \numcell{\textbf{70.06 $\pm$ 0.61} (Pad\'e$_3$)} \\

& \datasetname{FOLIO}
& \numcell{34.48 $\pm$ 1.13}
& \numcell{34.31 $\pm$ 0.49}
& \numcell{34.48 $\pm$ 1.13 (Taylor$_3$)}
& \numcell{\textbf{36.60 $\pm$ 3.26} (Taylor$_2$)} \\

& \datasetname{CODAH}
& \numcell{48.32 $\pm$ 14.52}
& \numcell{30.58 $\pm$ 1.57}
& \numcell{41.85 $\pm$ 20.58 (Pad\'e$_3$)}
& \numcell{\textbf{62.83 $\pm$ 3.14} (Taylor$_2$)} \\

\bottomrule
\end{tabular}%
}
\label{tab:additional_fo_zfo_extremes}
\vspace{-5mm}
\end{table*}

Table~\ref{tab:additional_fo_zfo_extremes} shows that the strongest ZFO variant outperforms both AdamW with cosine decay and Prodigy across all eight datasets on Qwen. AdamW with cosine decay remains the stronger of the two additional FO baselines, while Prodigy consistently trails both the scheduled AdamW baseline and the best ZFO method. However, the worst-performing ZFO variant falls below AdamW with cosine decay on several datasets, indicating that the advantage does not arise uniformly across every local model. Overall, these results suggest that ZFO can improve over many step-adaptive baselines, while also signaling that the choice of local approximation remains important.

\smartparagraph{ZFO Sequential Line Search.} The four ZFO methods in the main paper select a step using a local model constructed from a fixed budget of two probes. \texttt{ZFO-Seq} extends this design to a sequential budget of $k > 2$ fixed-batch evaluations. The first two probes construct the initial model $M \in \{T_2, T_3, P_2, P_3\}$ as in Algorithm~\ref{alg:zfo}, and the remaining $k-2$ probes are spent refining the model's proposal.

At each refinement step, the current candidate maximizer is evaluated on the shared minibatch $S_t$, and a new candidate is proposed by fitting a parabola through the best evaluated point in $I_t$ and its two nearest evaluated neighbors, aligning with safeguarded successive parabolic interpolation~\cite{brent}. The returned step is the best evaluated point rather than a model maximizer. Every candidate must survive an actual evaluation of $\phi_t$ before it can be selected. Since $(0,\phi_t(0)) \in E_t$ by construction, \texttt{ZFO-Seq} guarantees $\phi_t(s_t) \geq \phi_t(0)$ at every iteration, a monotone non-decrease of the empirical surrogate. This replaces the model-selection gap $\phi(s_\phi)-\phi(s_M)$ with the gap attained by the best of $k$ evaluated points, which shrinks under refinement. Algorithm~\ref{alg:zfo_seq} summarizes the procedure.

Table~\ref{tab:line_search_selected_results} reports the evaluation accuracy for a variety of line-search methods on the Qwen-2.5-Math-1.5B model (seed 100, configuration C, learning rate 1e-5, search bound $\beta=3$). Each ZFO-Seq model used $P_3$ as its default initialization.

\begin{algorithm}
\vspace{-1mm}
\caption{: ZFO-Seq}
\label{alg:zfo_seq}
\begin{algorithmic}[1]
\Require  $\theta_0$, base optimizer, learning rate $\alpha$, $\epsilon>0$, $\beta>0$, budget $k\geq3$, initial model $M \in \{T_2,T_3,P_2,P_3\}$
\For{$t=0,1,2,\ldots$}
    \State Algorithm~\ref{alg:zfo} Inner Loop
    \For{$j=1,\ldots,k-2$}
        \State Evaluate $\phi_t(s^{(j)})$ on $S_t$ and add $(s^{(j)},\phi_t(s^{(j)}))$ to $E_t$
        \If {$j < k-2$}
        \State Fit a parabola $\mathcal {M}$ through the best point in $E_t$ and its two nearest neighbors
        \State Propose $s^{(j+1)} \in \arg\max_{s\in I_t} \mathcal{M}(s)$
        \EndIf
    \EndFor
    \State Select $s_t \in \arg\max_{(s,\phi_t(s))\in E_t,} \phi_t(s)$
    \State Update $\theta_{t+1}=\theta_t+s_t z_t$
\EndFor
\end{algorithmic}
\end{algorithm}
\vspace{-2mm}

\begin{table}[t]
\centering
\caption{\textbf{Comparison with line-search algorithms.} Extended line-search results comparing the basic ZFO-Seq method against Armijo, Barzilai-Borwein, Polyak, and a simplified probabilistic line search (PLS) on the Qwen Model. The best result is in \textbf{bold}, and the second best result is \underline{underlined}.}
\label{tab:line_search_selected_results}
\begin{tabular}{lccccc}
\toprule
Dataset & ZFO-Seq & Armijo & BB & Polyak & PLS \\
\midrule
CODAH         & \textbf{55.47} & 32.81 & 52.73 & 30.47 & 31.25 \\
ARC-Challenge & \underline{55.86} & \textbf{57.81} & 54.69 & 35.16 & 35.16 \\
SVAMP         & \underline{83.59} & \textbf{85.55} & \textbf{85.55} & 82.81 & 76.17 \\
\bottomrule
\end{tabular}
\end{table}

Table~\ref{tab:line_search_selected_results} compares \texttt{ZFO-Seq} against several established line-search methods. \texttt{ZFO-Seq} achieves the strongest result on CODAH, while remaining within approximately two points of the best-performing method on both ARC-Challenge and SVAMP. Armijo and Barzilai-Borwein remain slightly stronger on these latter settings, whereas Polyak and PLS generally trail \texttt{ZFO-Seq}. Thus, \texttt{ZFO-Seq} is competitive with classical line-search and adaptive step-size methods across the three evaluated settings.

\smartparagraph{Unit-Bound Ablation.}
To determine whether ZFO benefits require a search interval larger than the standard first-order update, we additionally evaluate the four ZFO models with the bound fixed to $\beta=1$ on Qwen-2.5-Math-1.5B. Under this setting, ZFO may shorten or reshape the AdamW update but cannot select a step beyond the endpoint corresponding to the nominal FO displacement. ARC-Challenge and StrategyQA use three-epoch versions of configuration \emph{a}, while ASDiv uses a three-epoch version of configuration \emph{c}; we denote these shortened configurations by \emph{a'} and \emph{c'}, respectively. Their batch size, group size, context length, and maximum output length otherwise remain unchanged. All methods use the fixed RLVR settings in Table~\ref{tab:fixed_config}, including perturbation scale $\epsilon=10^{-3}$, and results are averaged over seeds $\{100,101,102\}$. Table~\ref{tab:beta1_llm_ablation} reports the resulting final evaluation accuracy.

\begin{table*}[t]
\centering
\small
\setlength{\tabcolsep}{5pt}
\renewcommand{\arraystretch}{0.95}
\caption{{\textbf{Qwen-2.5-Math-1.5B sub-study ablation with ${\beta=1}$.}
Results are mean $\pm$ standard deviation, over three seeds. All datasets are trained with 3 epochs (ARC-Challenge, StrategyQA; config a'), (ASDiv; config c'). 
Under this 3-epoch configuration, reducing the ARC FO learning rate from
$10^{-5}$ to $5\times10^{-6}$ improves accuracy from
$31.71\pm8.11$ to $55.12\pm0.60$. The ASDiv FO result uses the
third-epoch checkpoint of the matched FO runs.
}}
\vspace{-1mm}

\scalebox{0.96}{
\begin{tabular}{@{}lcccccc@{}}
\toprule
\textbf{Dataset}
& \textbf{Base LR}
& \textbf{AdamW (FO)}
& \textbf{Taylor$_2$}
& \textbf{Taylor$_3$}
& \textbf{Padé$_2$}
& \textbf{Padé$_3$} \\
\midrule

ARC-Challenge
& $1\times10^{-5}$
& \numcell{31.71 $\pm$ 8.11}
& \numcell{\textbf{56.77 $\pm$ 0.61}}
& \numcell{56.51 $\pm$ 0.36}
& \numcell{55.49 $\pm$ 0.98}
& \numcell{49.86 $\pm$ 0.27} \\

StrategyQA
& $1\times10^{-5}$
& \numcell{59.24 $\pm$ 10.14}
& \numcell{70.69 $\pm$ 5.10}
& \numcell{\textbf{74.87 $\pm$ 0.37}}
& \numcell{67.25 $\pm$ 13.64}
& \numcell{48.18 $\pm$ 0.44} \\

ASDiv
& $5\times10^{-6}$
& \numcell{\textbf{82.08 $\pm$ 0.08}}
& \numcell{81.82 $\pm$ 0.26}
& \numcell{78.05 $\pm$ 1.41}
& \numcell{81.87 $\pm$ 0.26}
& \numcell{81.92 $\pm$ 0.46} \\

\bottomrule
\end{tabular}
}
\label{tab:beta1_llm_ablation}
\vspace{-3mm}
\end{table*}

Table~\ref{tab:beta1_llm_ablation} shows that ZFO can modify training behavior even when it is not permitted to move beyond the nominal FO update. On ARC-Challenge, all four ZFO variants outperform AdamW at the shared learning rate of $10^{-5}$; however, reducing the separately tuned FO learning rate to $5\times10^{-6}$ raises its accuracy to $55.12$, so the ARC result is best interpreted as ZFO recovering a competitive step scale. On StrategyQA, \texttt{Taylor$_3$} performs best at $74.87$, followed by \texttt{Taylor$_2$} and \texttt{Pad\'e$_2$}, whereas \texttt{Pad\'e$_3$} falls below FO. On ASDiv, FO remains narrowly best, although \texttt{Taylor$_2$}, \texttt{Pad\'e$_2$}, and \texttt{Pad\'e$_3$} remain within $0.26$ points. Thus, $\beta=1$ is sufficient for gains in some settings, but we argue that the ZFO methods' largest gains arise from the ability to safely navigate larger learning rates.

\smartparagraph{Location of ZFO Steps.}
We next examine where the ZFO methods select steps within the search interval
$I_t=[0,R]$. This diagnostic distinguishes between
selecting no update ($s_t=0$), the largest permitted update ($s_t=R$), and an
interior step. Because numerical solutions may lie arbitrarily close to an
endpoint, we report both the exact interior frequency, $0<s_t<R$, and a
1\%-trimmed interior frequency, $0.01<s_t/R<0.99$. We also report the median
normalized step $s_t/R$. Statistics are computed within each run. Table~\ref{tab:step_selection_frequencies} summarizes the
$\beta=1$ Family-B cohort, consisting of 48 runs across four configurations,
and the fixed-$\beta=3$ cohort, consisting of 108 runs across nine
model-dataset pairs.

\providecommand{\numcell}[1]{{\fontsize{6.6}{7.2}\selectfont #1}}

\begin{table*}[t]
\centering
\small
\setlength{\tabcolsep}{5pt}
\renewcommand{\arraystretch}{0.95}
\caption{\small{\textbf{Distribution of ZFO-selected steps over the search
interval $[0,R]$.}
Frequencies are macro-averaged across runs. Exact and 1\%-trimmed interior
selections are reported separately, and median steps are normalized by $R$.
For \texttt{Pad\'e$_3$}, fallback frequency denotes activation of the
quadratic safeguard. The largest interior frequency for each value of
$\beta$ is bold.}}
\vspace{-1mm}

\scalebox{0.96}{%
\begin{tabular}{@{}llcccccc@{}}
\toprule
\multirow{2}{*}{$\boldsymbol{\beta}$} &
\multirow{2}{*}{\textbf{Method}} &
\multicolumn{4}{c}{\textbf{Selected-step frequency (\%)}} &
\multirow{2}{*}{\shortstack{\textbf{Median}\\$\boldsymbol{s_t/R}$}} &
\multirow{2}{*}{\shortstack{\textbf{Fallback}\\\textbf{frequency (\%)}}} \\
\cmidrule(lr){3-6}
& &
$s_t=0$ &
$0<s_t<R$ &
$0.01<s_t/R<0.99$ &
$s_t=R$ & & \\
\midrule

\multirow{4}{*}{$1$}
& \texttt{Taylor$_2$}
& \numcell{1.37}
& \numcell{40.97}
& \numcell{3.76}
& \numcell{57.66}
& \numcell{1.00000}
& \numcell{0.61} \\

& \texttt{Taylor$_3$}
& \numcell{0.86}
& \numcell{\textbf{65.83}}
& \numcell{\textbf{23.88}}
& \numcell{33.31}
& \numcell{0.00156}
& \numcell{0.67} \\

& \texttt{Pad\'e$_2$}
& \numcell{43.41}
& \numcell{0.00}
& \numcell{0.00}
& \numcell{56.59}
& \numcell{1.00000}
& \numcell{0.76} \\

& \texttt{Pad\'e$_3$}
& \numcell{76.27}
& \numcell{3.45}
& \numcell{2.55}
& \numcell{20.27}
& \numcell{0.00000}
& \numcell{12.49} \\

\midrule

\multirow{4}{*}{$3$}
& \texttt{Taylor$_2$}
& \numcell{5.04}
& \numcell{30.37}
& \numcell{15.87}
& \numcell{64.59}
& \numcell{1.00000}
& \numcell{$<0.01$} \\

& \texttt{Taylor$_3$}
& \numcell{4.10}
& \numcell{\textbf{80.77}}
& \numcell{\textbf{70.28}}
& \numcell{15.13}
& \numcell{0.02287}
& \numcell{0.01} \\

& \texttt{Pad\'e$_2$}
& \numcell{27.65}
& \numcell{0.00}
& \numcell{0.00}
& \numcell{72.35}
& \numcell{1.00000}
& \numcell{$<0.01$} \\

& \texttt{Pad\'e$_3$}
& \numcell{23.41}
& \numcell{12.23}
& \numcell{6.93}
& \numcell{64.36}
& \numcell{1.00000}
& \numcell{54.95} \\

\bottomrule
\end{tabular}%
}
\label{tab:step_selection_frequencies}
\vspace{-4mm}
\end{table*}

Table~\ref{tab:step_selection_frequencies} reveals distinct step-selection
behaviors across the four ZFO models. \texttt{Taylor$_3$} makes the most
frequent use of the interval interior under both bounds. Its exact interior
frequency increases from $65.83\%$ at $\beta=1$ to $80.77\%$ at $\beta=3$,
while its 1\%-trimmed interior frequency rises more substantially from
$23.88\%$ to $70.28\%$. The difference between the two interior measures at
$\beta=1$, together with a median normalized step of $0.00156$, indicates that
many of its nominally interior selections lie close to the zero endpoint. At
$\beta=3$, however, most selections remain interior even after trimming,
showing that \texttt{Taylor$_3$} frequently chooses steps that are meaningfully
separated from both boundaries.

The other models rely more heavily on endpoint selections.
\texttt{Taylor$_2$} most often selects the full step $s_t=R$, doing so in
$57.66\%$ of updates at $\beta=1$ and $64.59\%$ at $\beta=3$.
\texttt{Pad\'e$_2$} never selects an interior point in either cohort and
therefore acts empirically as a binary rule between $s_t=0$ and $s_t=R$.
\texttt{Pad\'e$_3$} also remains largely endpoint-driven. The median of texttt{Pad\'e$_3$} shifts
from zero at $\beta=1$ to the upper endpoint at $\beta=3$. Its fallback
frequency simultaneously increases from $12.49\%$ to $54.95\%$, indicating
that the quadratic safeguard becomes substantially more important over the
larger search interval. Overall, the results show that the cubic Taylor model
uses higher-order information primarily to select intermediate step lengths,
whereas the remaining models more often use their local approximations to
choose between suppressing and taking the full permitted update.

\section{Notation}
\label{app:notation}

{\footnotesize
\begin{xltabular}{0.95\linewidth}{
  @{} p{0.28\linewidth} @{\hspace{1em}} X @{}}

\caption{Table of notations.}
\label{tab:notation}\\
\toprule
\textbf{Notation} & \textbf{Definition} \\
\midrule
\endfirsthead

\caption[]{Table of notations continued.}\\
\toprule
\textbf{Notation} & \textbf{Definition} \\
\midrule
\endhead

\midrule
\multicolumn{2}{r}{\emph{Continued on next page}}\\
\endfoot

\bottomrule
\endlastfoot

$\|\cdot\|$ & Euclidean norm.\\
\midrule
$\pi_\theta(y\mid x)$ & Conditional language model with parameters $\theta$, prompt $x$, and response $y$.\\
\midrule
$\theta_t$ & Model parameters at iteration $t$.\\
\midrule
$S_t$ & Fixed sampled information used during iteration $t$, such as prompts, responses, labels, rewards, or other quantities needed to define the current surrogate.\\
\midrule
$F_t(\theta)=F(\theta;S_t)$ & Fixed-batch surrogate objective optimized during iteration $t$.\\
\midrule
$J(\theta)$ & Population or global objective used in the stationarity analysis.\\
\midrule
$\nabla F_t(\theta_t)$ & Gradient of the fixed-batch surrogate at the current parameters.\\
\midrule
$g_t$ & Update direction produced by the base first-order optimizer.\\
\midrule
$z_t$ & Normalized search direction, $z_t=g_t/\|g_t\|$.\\
\midrule
$\alpha$ & Base learning rate used by the first-order optimizer.\\
\midrule
$\beta$ & Bound hyperparameter controlling the ZFO search radius relative to the base optimizer step.\\
\midrule
$s_{\rm FO}$ & Scalar length of the standard first-order step, typically $s_{\rm FO}=\alpha\|g_t\|$.\\
\midrule
$R_t$ & Radius of the scalar ZFO search interval. In the simplified theory, $R_t=\beta\alpha\|g_t\|$.\\
\midrule
$I_t$ & ZFO search interval, $I_t=[0,R_t]$.\\
\midrule
$\phi_t(s)$ & One-dimensional restriction of the surrogate along the update direction, $\phi_t(s)=F_t(\theta_t+s z_t)$.\\
\midrule
$\phi_t'(0)$ & Directional derivative of $F_t$ along $z_t$, given by $\langle \nabla F_t(\theta_t),z_t\rangle$.\\
\midrule
$\epsilon$ & Probe radius used for the symmetric ZO evaluations $\phi_t(\epsilon)$ and $\phi_t(-\epsilon)$.\\
\midrule
$\widehat{\phi_t''}(0)$ & Estimate of the second directional derivative.\\
\midrule
$\widehat{\phi_t'''}(0)$ & Estimate of the third directional derivative.\\
\midrule
$a_t,b_t,c_t$ & $a_t=\phi_t'(0)$, $b_t=\widehat{\phi_t''}(0)$, and $c_t=\widehat{\phi_t'''}(0)$.\\
\midrule
$T_2(s)$ & Quadratic Taylor model of $\phi_t$ built from $\phi_t(0)$, $a_t$, and $b_t$.\\
\midrule
$T_3(s)$ & Cubic Taylor model of $\phi_t$ built from $\phi_t(0)$, $a_t$, $b_t$, and $c_t$.\\
\midrule
$P_2(s)$ & Second-order Pad\'e model used by \texttt{Pad\'e$_2$}.\\
\midrule
$P_3(s)$ & Third-order Pad\'e model used by \texttt{Pad\'e$_3$}.\\
\midrule
$M(s)$ & Chosen local ZFO model, with $M\in\{T_2,T_3,P_2,P_3\}$.\\
\midrule
$s_M$ & Scalar step selected by maximizing the chosen local model over the feasible interval, $s_M\in\arg\max_{s\in I_t}M(s)$.\\
\midrule
$s_\phi$ & Best scalar step for the true directional surrogate over the feasible interval, $s_\phi\in\arg\max_{s\in I_t}\phi_t(s)$.\\
\midrule
$\theta_{t+1}$ & ZFO-updated parameters, $\theta_{t+1}=\theta_t+s_M z_t$.\\
\midrule
$N(s),D(s)$ & Numerator and denominator of a Pad\'e approximation.\\
\midrule
$r(s)$ & Pad\'e residual, defined as $r(s)=D(s)\phi(s)-N(s)$.\\
\midrule
$\eta$ & Lower bound on the Pad\'e denominator over the search interval, $|D(s)|\geq\eta$.\\
\midrule
$M,B,L$ & Derivative-bound constants used in the Taylor and finite-difference approximation bounds.\\
\midrule
$Q,P$ & Higher-order residual-bound constants used in the Pad\'e approximation bounds.\\
\midrule
$\phi_{i,t}(s)$ & Per-sample or per-iterate directional surrogate used in the CRN curvature analysis.\\
\midrule
$\psi_t(s)$ & Population counterpart of the directional surrogate.\\
\midrule
$n$ & Number of shared samples/trajectories used in CRN training.\\
\midrule
$\sigma$ & Sub-Gaussian parameter.\\
\midrule
$\delta$ & Failure probability in high-probability concentration bounds.\\
\midrule
$\Delta_t$ & Step-selection error term controlling the gap between the selected ZFO step and the best step over $I_t$.\\
\midrule
$\bar{\Delta}_T$ & Average step-selection error over $T$ iterations, $\bar{\Delta}_T=\frac{1}{T}\sum_{t=0}^{T}\Delta_t$.\\
\midrule
$\mathcal{L}_J,\mathcal{L}_{\bar F}$ & Lipschitz constants.\\
\midrule
$\mu$ & Alignment constant between the optimizer direction and the population objective gradient.\\

\end{xltabular}
}

%% file: Sections/02-related-work.tex
\tableofcontents

\smartparagraph{Organization.} We organize the Appendix as follows: In Section \ref{sec:related_work}, we detail the related work. In Section \ref{sec:add_theory}, we quote the general assumptions and provide detailed proofs of our theoretical result in the main paper. Finally, in Section \ref{sec:perf_insights}, we discuss the configuration details for our experiments in \S\ref{sec:configuration} and provide additional performance insights in \S\ref{sec:add_insights}.

\section{Related work}\label{sec:related_work}

We position ZFO within three lines of work that motivate or contrast with our approach. Together, these related works inspire ZFO's central premise that cheap query calls can help model local objective behavior along the gradient direction. 

\smartparagraph{Policy Optimization Stability \& Update Regulation.} Stability in policy optimization has motivated a range of update-regulation strategies. In policy optimization, methods such as TRPO~\cite{schulman2015trust} enforce a KL-constrained trust region, while PPO~\cite{schulman2017proximal} replaces explicit constraints with a clipped surrogate objective to stabilize updates. More recent approaches, such as GRPO~\cite{grpo}, further emphasize that training stability is highly sensitive to the way policy updates are regulated. Related works also consider wrapper-based stabilization strategies for policy-gradient methods that regulate whether an update should be accepted~\cite{ahmed2026stabilizing}. Our work is motivated by the same stability concern; however, we approach it from a different perspective by explicitly modeling the objective along the update direction path.

\smartparagraph{Zeroth-Order Optimization.} Our work is also related to ZO optimization for fine-tuning large language models. Methods such as MeZO and CARS use central finite differences along random directions to approximate gradient or curvature behavior~\cite{mezo, cars}; particularly, MeZO uses finite differences along random directions to approximate the gradient, and CARS uses a second-order finite difference to approximate the Hessian. Extensions of MeZO include curvature, sparsity, and query-efficient estimators. Hi-ZOO incorporates a diagonal Hessian estimate at the cost of an extra forward pass~\cite{zhaosecond}. Sparse-MeZO restricts updates to noise-resistant weights to enable larger step sizes~\cite{liusparse}. GraCe leverages gradient sparsity to achieve a dependence on dimension in query complexity~\cite{qiu2024gradient}. HiSo carries Hessian-informed ZO into the federated setting~\cite{li2025reconciling}. Complementary benchmarking has mapped the landscape of ZO optimizers across fine-tuning paradigms~\cite{zhang2024revisiting}. Recent theoretical work further establishes that ZO optimization is equivalent to single-step policy optimization~\cite{qiu2025zeroth}, suggesting a natural connection between zeroth-order methods and RLVR that ZFO exploits.

\smartparagraph{Line-Search \& Adaptive Step Size.} Broadly, our method is similar to line-search and adaptive learning-rate methods, including Polyak step size~\cite{polyak}, Barzilai-Borwein~\cite{barzi}, and stochastic variants such as Probabilistic line-search~\cite{mahsereci2017probabilistic} and Stochastic Armijo line-search~\cite{stoch}. These approaches adapt step sizes by repeatedly evaluating candidate points along a univariate objective, often requiring multiple queries. In contrast, our method uses only two symmetric probes to approximately recover higher-order directional information to construct a local model of the objective. This results in a fixed-budget alternative to classical line-search methods.

\smartparagraph{Generalized Newton Step Selection.} Among closely related recent methods, generalized Newton step selection or GeN~\cite{bu2025gradient} has already explored function-based step selection along a first-order direction, providing an early example of combining FO directions with low-query step-size adaptation. Our contribution is to elevate this idea from a single-step rule to a fixed-query directional modeling framework: the same two probes can support multiple local models, including higher-order Taylor and rational Padé approximations, while the analysis separates finite-difference estimation error, local-model approximation error, and global optimization error. GeN is a special case of the broader ZFO abstraction, rather than an unrelated baseline.